\numberwithin{equation}{section}
\numberwithin{theorem}{section}
\newcommand{\A}{\mathcal{A}}
\newcommand{\tr}{\mathrm{tr}}
\newcommand{\M}{\mathcal{M}}
\renewcommand{\S}{\mathcal{S}}
\renewcommand{\A}{\mathcal{A}}
\newcommand{\G}{\mathcal{G}}
\newcommand{\F}{\mathcal{F}}
\renewcommand{\L}{\mathcal{L}}
\newcommand{\N}{\mathcal{N}}
\newcommand{\R}{\mathbb{R}}
\newcommand{\E}{\mathbb{E}}
\newcommand{\Pe}{\mathbb{P}}
\renewcommand{\P}{\mathcal{P}}
\newcommand{\I}{\mathbb{I}}
\newcommand{\Z}{\mathbb{Z}}
\newcommand{\norm}[1]{\left\|#1\right\|}
\newcommand{\bre}{\mathrm{br}}
\newcommand{\wo}{\overline{w}}
\newcommand{\wu}{\underline{w}}
\newcommand{\Vo}{\overline{V}}
\newcommand{\Vu}{\underline{V}}
\newcommand{\dist}{\operatorname{dist}}
\DeclareMathOperator*{\diag}{diag}
\newtheorem{proposition}{Proposition}[section]
\newcommand{\KL}{\textrm{KL}}
\title{
One Policy is Enough: Parallel Exploration with a Single Policy is Near-Optimal for Reward-Free Reinforcement Learning
}
\author{Pedro Cisneros-Velarde${}^{\star,\, 1}$\quad Boxiang Lyu${}^{\star,\,2}$ \quad Sanmi Koyejo${}^{3, 4}$ \quad Mladen Kolar${}^{2}$\\
\\
$^{1}${University of Illinois at Urbana-Champaign} \quad $^{2}${The University of Chicago}\\
  $^{3}${Stanford University} \quad $^{4}${Google Research}
}
\date{}
\begin{document}

\maketitle

\begin{abstract}
Although parallelism has been extensively used in reinforcement learning (RL), the quantitative effects of parallel exploration are not well understood theoretically. We study the benefits of simple parallel exploration for reward-free RL in linear Markov decision processes (MDPs) and two-player zero-sum Markov games (MGs). In contrast to the existing literature, which focuses on approaches that encourage agents to explore a diverse set of policies, we show that using a single policy to guide exploration across all agents is sufficient to obtain an almost-linear speedup in all cases compared to their fully sequential counterpart. Furthermore, we demonstrate that this simple procedure is near-minimax optimal in the reward-free setting for linear MDPs. From a practical perspective, our paper shows that a single policy is sufficient and provably near-optimal for incorporating parallelism during the exploration phase.
\end{abstract}

\let\thefootnote\relax\footnotetext{$^{\star}$ Equal Contribution. Names are listed in alphabetical order.}

\section{Introduction}
\label{sec:introduction}

Parallel methods in deep RL have been successfully applied to problems ranging from Go~\citep{alphaGo:16, alphaZero:17} and Starcraft~\citep{vinyals2019grandmaster} to Atari games~\citep{nair2015Massively} and unmanned aerial vehicles~\citep{pham2018cooperative}. In the episodic case, parallel methods use $P$ agents, each simultaneously interacting with the environment for $K$ episodes. At the beginning of each episode, each agent is assigned an exploration policy that it uses to collect a trajectory from the environment. Agents update their policies as they explore and learn in the environment, with aggregation and synchronization mechanisms that calculate the exploration policy for the next episode. For example, \cite{nair2015Massively} synchronized the value function estimates stored on each agent and used the same value function to guide subsequent exploration. As a result of having multiple agents collecting data simultaneously and exploring the environment, parallel methods are faster than their single-agent or sequential counterparts. They can learn near-optimal policies in a relatively short amount of time---only $K$ rounds are needed to collect and use a total of $KP$ trajectories. While synchronization of agents' policies is often used to ensure unbiased gradient estimates for the neural networks used to represent the value function, synchronization of the exploration policies is not necessary for unbiased gradient estimates. Alternative approaches have been proposed to improve the coordination of multiple agents for exploration in parallel RL.

Importantly, much of the literature has focused on the diversity of exploration policies---plausibly as a necessary condition for efficient exploration. \cite{dimakopoulou2018coordinated} proposed multiple sampling-based algorithms that provide agents with a diverse set of exploration policies. \cite{mahajan2019maven} proposed the use of mutual information to ensure that agents explore with a diverse set of policies---we remark that~\cite{mahajan2019maven} focused on the multi-agent RL (MARL) case and not on parallel exploration in RL. At a high level, these alternatives ensure that the agents' exploration policies are sufficiently different from one another and argue that the diversity speeds up the learning process.

Taken together, existing theory and practice motivate a central question: \emph{is the diversity of exploration (by different policies) always required for efficient parallel exploration in RL?} By \emph{efficient exploration}, we refer to exploration that results in a speed-up of the learning process. 

We consider this question in the \emph{reward-free} RL setting. This setting allows us to isolate the benefits of parallelism in exploration, as it separates the collection of trajectories from the learning of an optimal policy~\citep{RW-SD-LY-RS:20, SQ-JY-ZW-ZY:22, AW-YC-MS-SSD-KJ:22}. Thus, the reward-free setting provides a level playing field for comparing exploration procedures. Therefore, we can focus on studying which procedure explores the environment more efficiently, without requiring the procedure to balance the trade-off between exploration and exploitation---a challenge inherent in the online RL setting~\citep{CJ-ZY-ZW-MIJ:20}.

We provide a surprising negative answer to the central question by showing that \emph{diversity is not necessary for efficient parallel exploration}. In summary, we show that no sophisticated coordination is necessary and that a simple algorithm in which all agents use the same exploration policy to collect data is sufficient for efficient parallel exploration. Specifically, we prove that when the number of parallel agents is no larger than the number of rounds we interact with the environment, using a single policy to guide exploration is sufficient to achieve a near-minimax optimal rate. Even outside of this regime, we prove that our method has an almost-linear speedup in learning compared to a single agent setting. 

We demonstrate the surprising effectiveness of our proposed approach in solving both MDPs and two-player zero-sum Markov games (MGs). The latter is widely regarded as one of the simplest forms of multi-agent reinforcement learning (MARL)~\citep{zhang2021multi}, which has resulted in its increased popularity in recent years \citep{zhang2020model, chen2022almost, kozuno2021learning, zhu2020online}. Our results suggest that the success of our parallelization method is not dependent on the simplicity of the MDP, and can be extended to more complex MARL problems.

The paper is organized as follows. In Section~\ref{sec:preliminaries}, we formally introduce the setting. To provide insight into both the construction and analysis of our proposed algorithms for the reward-free setting, we consider the online RL setting in Section~\ref{sec:parallel_online_RL}. In Section~\ref{sec:paral-rf-sec}, we describe the algorithms used for reward-free exploration in both MDPs and two-player zero-sum MGs, and analyze their theoretical properties. 

\paragraph{Contributions} We summarize our contributions and their practical implications.
\begin{itemize}
    \item We introduce the first parallel reward-free RL algorithms for both linear MDPs and MGs, inspired by a straightforward parallel online RL algorithm. Specifically, all algorithms----both online and reward-free---are simple in nature and use the same exploration policy for all agents. Compared to their sequential counterparts, our algorithms demonstrate an almost-linear speedup in performance.
    
    \item We provide an information-theoretic lower bound on the ability of any parallel reward-free RL algorithm to achieve $\epsilon$-suboptimality. By comparing the lower and upper bounds, we demonstrate that our algorithm is nearly minimax optimal when the number of agents is $P = \cO(K)$ for $K$ episodes. We emphasize that this assumption accurately reflects real-world use cases where the number of collected trajectories is often much larger than the number of available agents~\citep{pham2018cooperative, alphaGo:16, alphaZero:17, vinyals2019grandmaster}. With regard to near-optimality, our upper bound fully matches the speed-up term from the lower bound (i.e., in $K$ and $P$) and is only greater than the lower bound by a low-degree polynomial factor on the other problem parameters. Consequently, even if different exploration policies are used for each agent, their performance can at most match the rate of our single-policy algorithm in terms of $K$ and $P$, which are arguably two of the most important parameters in parallel exploration. For a more detailed discussion, see Section~\ref{subsec:lower-bound}.
    
    \item In terms of practical implications, our work justifies the simple approach of employing the same exploration policy across all agents in parallel RL. This eliminates the necessity for complex parallel systems that result in additional computational and communication overhead required to explore with a diverse set of policies.
\end{itemize}


\subsection{Related Works}

Our paper is related to the literature on reward-free, multi-agent, distributed, and deployment-efficient RL, as well as the literature on parallel and federated bandits.

{\par \textbf{Reward-free RL}} Reward-free RL addresses the problem initially proposed in~\citet{jin2020reward} in which agents attempt to explore the environment without knowledge of their individual reward functions. The setting focuses only on the exploration capabilities of different algorithms and serves as a level playing field when comparing different exploration strategies.~\citet{jin2020reward} studied reward-free exploration in tabular MDPs and~\citet{RW-SD-LY-RS:20} proposed a provably efficient reward-free exploration algorithm for linear MDPs.~\citet{SQ-JY-ZW-ZY:22} proposed a reward-free exploration algorithm when using a kernel function approximation that is efficient for both MDPs and two-player zero-sum MGs. More recently,~\citet{AW-YC-MS-SSD-KJ:22} provided a tighter reward-free exploration algorithm that matches the sample complexity of PAC RL for linear MDPs, while \citet{agarwal2020flambe} studied reward-free RL with unknown feature representation in low-rank MDPs.

{\par \textbf{Multi-agent RL}} Our paper is related to cooperative MARL~\citep{boutilier1996planning}. In the online cooperative setting, \citet{agarwal2021communication} proposed a communication efficient algorithm for tabular MDPs, and~\citet{AD-AP:21} derived a provably efficient algorithm in the linear MDP setting, allowing agents' reward functions to be heterogeneous. We note that the online RL setting is not suitable for isolating the effectiveness of exploration alone, as it requires exploration strategies to balance both exploration and exploitation.~\citet{zhang2019distributed, lin2019communication, suttle2020multi} studied distributed variants of the actor-critic algorithm~\citep{konda1999actor}, but did not explicitly characterize the benefits that arise from parallel exploration. Our work also contributes to the literature on two-player zero-sum MGs, which is another example of MARL~\citep{perolat2015approximate, zhao2021provably, zhu2020online, kozuno2021learning, chen2022almost}.

{\par \textbf{Parallel and Federated Bandits}} \citep{karbasi2021Parallelizing, JC-AP-NT-YSS-PB-MIJ:21} studied parallelized learning in contextual bandits, where the decision maker receives a batch of bandit feedback with a predetermined size at the beginning of each round. \citep{zhu2021federated,huang2021federated,shi2021federated,dubey2020differentially} focused on federated bandits and efficient parallel algorithms adapted to the federated learning setting.

{\par \textbf{Concurrent RL}} 
The works~\citep{bai2019provably} and~\citep{ZZ-ZY-XJ:20} also analyze the setting in which parallel agents can perform exploration with a single policy. They provide almost-linear speedup conditions, but their setting differs from ours: they only focus on the tabular case for online RL and do not study MGs. We note, however, that they are more interested in the problem of reducing the updating frequency of policies in RL.

{{\par \textbf{Deployment Efficient RL}}} Drawing inspiration from bandit learning with low switching costs~\citep{auer2002finite, cesa2013online}, a recent line of work has examined efficient algorithms for RL when the number of times the exploration policy changes is restricted.~\citet{bai2019provably} proposed a low switching cost $Q$ learning algorithm for tabular MDPs, and~\citet{MG-RX-SSD-LFY:21} developed a provably efficient low switching cost algorithm for linear MDPs. \citet{huang2021deployment} provided a mathematically rigorous definition for deployment efficiency, which is inherently linked to the concept of low switching cost, and presented two provably efficient algorithms for deployment efficient reward-free RL. As we will discuss later, the algorithms attain suboptimal sample complexity.

{\par \textbf{Transfer learning}} The work~\citep{tuynman2022TransferRL} examines the setting in which multiple agents explore a copy of the same tabular MDP, except that the reward may vary. They demonstrate that sharing trajectories among agents results in better total regret performance than each agent learning without sharing.

%

\subsection{Notation} 
Let $\|\cdot\|$ be the Euclidean norm, and $\|v\|_A = \sqrt{v^TAv}$ for a positive semidefinite matrix $A$. Let $\preceq, \succeq$ be the matrix Loewner order. For a positive integer $k$, let $[k]=\{1, 2, \dots, k\}$. Let $I_m$ be the $m \times m$ identity matrix and let $\I[\,\cdot\,]$ be the indicator function. Let $\Delta(\A)$ be the probability simplex defined on a given finite set $\A$. We define the clipping operator as $\Pi_{[0,a]}[b]:= \min\{b, a\}^+ = \min\{\max\{b, 0\}, a\}$ for any $a>0$ and $b\in\R$. Given the big-O complexity notation $\cO$, we use $\tilde{\cO}$ to hide polylogarithmic terms in the quantities of interest.


\section{Preliminaries}
\label{sec:preliminaries}

{\par \textbf{Markov Decision Processes.}} An episodic Markov decision process (MDP) takes the form $\M=(\S,\A,H,\P,r)$ where $\S$ is the state space, $\A$ is the action space, $H\in\Z_{\geq 1}$ is the length or number of steps of each episode, $\P=\{\P_h\}_{h=1}^H$ are the transition probability measures where $\P_h(\cdot|x,a)$ is the transition kernel over the next states if action $a\in\A$ is taken for state $x\in\S$ at step $h\in[H]$, and $r=\{r_h\}_{h=1}^H$ with $r_h:\S\times\A\to[0,1]$ are the reward functions. For simplicity of presentation, we assume deterministic rewards, but remark that random rewards, as long as they are bounded, do not affect our results. We assume that $\A$ is a finite set of actions and the non-bandit case $H\geq 2$. A stochastic policy $\pi = \{\pi_h\}_{h = 1}^H$ consists of functions $\pi_h:\S\to \Delta(\A)$ that map the state $x\in\S$ at step $h$ of the MDP to a distribution over actions. The value function $V_{h}^{\pi}: \mathcal{S} \to \mathbb{R}$ is $V_{h}^{\pi}(x;r) = \mathbb{E}_{\pi}\left[\sum_{h' = h}^Hr_{h'}(s_{h'}, a_{h'}) | s_{h'} = x\right]$ and the action-value function or $Q$-function $Q_{h}^{\pi}: \mathcal{S} \times \mathcal{A} \to \mathbb{R}$ is $Q_{h}^{\pi}(x, a;r) = \mathbb{E}_{\pi}\left[\sum_{h' = h}^Hr_{h'}(s_{h'}, a_{h'}) | s_{h'} = x, a_{h'} = a)\right]$, where the expectation $\mathbb{E}_{\pi}$ is taken with respect to both the randomness in the transitions $\P$ and the randomness inherent in the policy $\pi$. When the given reward function is understood from the context, we will omit it from the notation of the value and action-value functions.

For any function $f: \mathcal{S} \to \mathbb{R}$, we define the transition operator as $(\mathbb{P}_{h}f)(x, a) =\E_{x'\sim\P_h(\cdot|x,a)}[f(x')]$ and the Bellman operator as $(\mathbb{B}_{h}f)(x, a) = r_{h} +  (\mathbb{P}_{h}f)(x, a)$ for each step $h \in [H]$. The Bellman equation associated with a policy $\pi$ is: $Q^\pi_h(x,a)=(r_h(x,a)+\Pe_h V^\pi_{h+1})(x,a)$, $V^\pi_h(x)= \E_{a\sim \pi_h(x)}[Q^\pi_h(x,a)]$, $V^\pi_{H+1}(x)=0$, for any $(x,a)\in\S\times\A$.
Let $\pi^* = \argmax_{\pi} V^\pi_1(x)$, $V^*_h=V^{\pi^*}_h(x)$, and $Q^*_h(x, a) = Q^{\pi^*}_h(x, a)$.

{\par \textbf{Zero-sum Markov Games.}}
An episodic zero-sum Markov game (MG) takes the form $\mathcal{M}_G=(\S, \A, \cB,  H, \P, r)$ where $\S$ is the state space, $\A$ is the action space for Player 1, $\cB$ is the action space for Player 2, $H$ is the number of steps per episode, $\P = \{\P_h\}_{h\in[H]}$ are transition probability measures and $\P_h(\cdot\mid x,a, b)$ denotes the transition kernel over the next step if Player 1 takes action $a\in\A$ and Player 2 takes action $b\in\cB$ for state $x\in\S$ at step $h\in[H]$, and reward functions $r=\{r_h\}_{h\in[H]}$ with $r_h : \S \times \A \times \cB \to [0, 1]$. We assume that both action spaces $\A$ and $\cB$ are finite sets. We denote Player 1's policy by $\pi = \{ \pi_h\}_{h=1}^H$ with $\pi: \cS \to \Delta(\A)$ and Player 2's by $\nu = \{ \nu_h\}_{h=1}^H$ with $\nu: \cS \to \Delta(\cB)$. We define the value function $V_h^{\pi, \nu}: \S \to \R$ at the $h$-th step as $V_h^{\pi, \nu}(x; r) = \E_{\pi,\nu}[\sum_{h'=h}^H r_{h'} (s_{h'}, a_{h'}, b_{h'})  \given s_h = x]$ and the Q function $Q_h^{\pi, \nu}: \S \times \A \times \cB \to \R$ as $Q_h^{\pi, \nu}(x,a,b; r):=\E_{\pi,\nu}[ \sum_{h'=h}^H r_{h'}(s_{h'}, a_{h'}, b_{h'} )  \given s_h = x, a_h = a, b_h = b]$. The Nash equilibrium (NE) is defined as any pair of policies $(\pi^\dag, \nu^\dag)$ that is a solution to $\max_{\pi\in\Delta(\A)} \min_{\nu\in\Delta(\cB)} V_1^{\pi, \nu}(x;r)$ for a given initial state $x$ --- the $\max$ and $\min$ can be interchanged~\citep{shapley1953SG}.  For simplicity, we let $V_h^{\dagger}(x; r) = V_h^{\pi^\dag, \nu^\dag}(x; r)$ and $Q_h^{\dagger}(x,a,b; r) = Q_h^{\pi^\dag, \nu^\dag}(x,a,b; r)$ denote the value function and Q-function under the NE $(\pi^\dag, \nu^\dag)$ at step $h\in[H]$. Given a state $x\in\S$, the best response against Player 1 with policy $\pi$ is defined as $\bre_2(\pi):=\argmin_\nu V_1^{\pi, \nu}(x; r)$ and the one against Player 2 with policy $\nu$ is defined as $\bre_1(\nu):=\argmax_\pi V_1^{\pi, \nu}(x; r)$. Note that $V_1^{\bre_1(\tilde{\nu}), \tilde{\nu}}(x; r) \geq V_1^{\dag}(x; r)  \geq V_1^{\tilde{\pi}, \bre_2(\tilde{\pi})}(x; r)$ holds for any policies $\tilde{\pi}$ and $\tilde{\nu}$ and $x\in\S$. We also introduce the notation $V^*_1(s; r) = \max_{\pi, \nu} V^{\pi, \nu}_1(s; r)$ and the associated optimal value function and optimal Q-function for the $h$-th step as $V^*_h(x; r)$ and $Q^*_h(x, a, b; r)$.

{\par \textbf{Linear MDP.}} Under a linear MDP setting,  there exists a known feature map $\phi: \mathcal{S} \times \mathcal{A} \to \mathbb{R}^d$ such that for every $h\in[H]$, there exist $d$ unknown (signed) measures $\mu_{h} = \left(\mu_{h}^{(1)}, \dots \mu_{h}^{(d)}\right)$ over $\mathcal{S}$ and an unknown vector $\theta_{h} \in \mathbb{R}^d$ such that $\P_{h}(x'| x, a) = \langle \phi(x, a), \mu_{h}(x')\rangle$, $r_{h}(x, a) = \langle \phi(x, a), \theta_{h}\rangle$ for all $(x, a, x') \in \mathcal{S} \times \mathcal{A} \times \mathcal{S}$. We assume the non-scalar case with $d\geq 2$ and that the feature map satisfies $\|\phi(x, a) \|\leq 1$ for all $(x, a) \in \mathcal{S} \times \A$ and $\max\{\|\mu_{h}(\S)\|, \|\theta_{h}\|\} \leq \sqrt{d}$ at each step $h \in [H]$, where (with an abuse of notation) $\|\mu_{h}(\mathcal{S})\| = \int_{\mathcal{S}}\|\mu_{h}(x)\|dx$. Note that the transition kernel $\P_h(\cdot|x,a)$ may have infinite degrees of freedom since the measure $\mu_h$ is unknown.

{\par \textbf{Linear MG.}} The linear MG is defined in a similar way to the linear MDP by simply adding an additional argument to the feature map $\phi$ to include the effect of the actions taken by the second player. 


{\par \textbf{Parallelism in the Online Case and Performance Metric.}} Consider the case where we have $P$ agents living in an identical episodic Markov decision process and exploring it in a total of $K$ episodes. We assume that all agents start at some $s_0 \in \cS$ at the beginning of each episode. Given an agent $p\in[P]$ at a step $h\in[H]$ of an episode $k\in[K]$, let $x_h^{k,p}$ be the state, $a_h^{k,p}$ be the action taken according to some policy $\pi_h^{k,p}$, and $r_h^{k,p}:=r_h(x_h^{k,p},a_h^{k,p})$ be the reward obtained. Let $\pi^{k,p}:=\{\pi^{k,p}_h\}_{h=1}^H$. 
For a set of policies $\{\pi^{k,p}\}_{k\in[K],p\in[P]}$ provided by an online RL algorithm, performance is measured by parallel regret
\begin{equation}
    \label{eq:regret_parallel}
    \textnormal{Regret}(K,P) = \sum_{p=1}^P\sum_{k=1}^K (V_1^{\pi^*}(s_0) - V_1^{\pi^{k,p}}(s_0)).
\end{equation}

{\par \textbf{Parallelism in the Reward-free Case for MDP and its Performance Metric.}} In the \emph{exploration phase} of the reward-free RL problem, we consider the case where we have $P$ agents, each of whom live in an identical episodic Markov decision process, collecting trajectories in a total of $K$ episodes.  We assume that all agents start at some $s_0 \in \cS$ at the beginning of each episode. We use the same notation as introduced for the online case, with the difference that the reward function will be computed by the algorithm itself. In the \emph{planning phase}, a central server receives all the trajectories collected by the agents and, given a reward function $r=\{r_h\}_{h\in[H]}$ provided by the user, computes a policy $\pi=\{\pi_h\}_{h=1}^H$. Given the policy computed by the planning phase, the performance metric is the suboptimality metric
\begin{equation}
    \label{eq:subopt_RF_MDP}
    \textnormal{SubOpt}(\pi; r) = V_1^{\pi^*}(s_0; r) - V_1^{\pi}(s_0; r).
\end{equation}

{\par \textbf{Parallelism in the Reward-free Case for MG and its Performance Metric.}} We basically consider the same setting as in the MDP case, but with an underlying zero-sum Markov game instead. Given some policies $\pi$ for Player 1 and $\nu$ for Player 2 computed by the planning phase, we define our performance metric as 
\begin{equation}
    \label{eq:subopt_RF_MG}
    \textnormal{SubOpt}(\pi,\nu;r) = V_1^{\bre_1(\nu),\nu}(s_0; r) - V_1^{\pi, \bre_2(\pi)}(s_0; r). 
\end{equation}
Note that $\textnormal{SubOpt}(\pi,\nu;r)=0$ iff $(\pi,\nu)$ is a Nash equilibrium.  

In this paper, we consider linear MDPs and linear MGs.


\section{Warm Up: The Parallel Online RL Case}
\label{sec:parallel_online_RL}

We propose \emph{Parallel Optimistic Least-Squares Value Iteration} (POLSVI), as described in Algorithm~\ref{alg:main_LIN_UCB_LSVI}. Our algorithm is based on the LSVI-UCB algorithm~\citep{CJ-ZY-ZW-MIJ:20} and is parallelized as follows. At each episode, a central server aggregates the state-action trajectories collected by the parallel agents up to the previous episode in a covariance matrix $\Lambda^k_h$ (line 7 of Algorithm~\ref{alg:main_LIN_UCB_LSVI}), after which it computes an optimistic estimate of the Q-function (line 9) using the optimism bonus $\beta(\phi(\cdot,\cdot)^\top(\Lambda^k_h)^{-1}\phi(\cdot,\cdot))^{1/2}$. Each agent then simultaneously explores the environment by following a greedy policy using the common function computed by the central server. The state-action trajectories taken by all the $P$ agents will then be collected by the central server at the next episode and the whole process repeats. We have the following result for the POLSVI algorithm.

\begin{algorithm}[tb]
  \caption{Parallel Optimistic LSVI (POLSVI)}
  \label{alg:main_LIN_UCB_LSVI}
\begin{algorithmic}[1]
    \STATE {\bfseries Input:} $P$, $K$, $\beta$, $\lambda$
    \FOR{episode $k\in[K]$}
        \STATE $x_1^{k,p}\gets s_0$ for $p\in[P]$
        \STATE \# DONE BY CENTRAL SERVER:
         \STATE $Q_{H+1}^k(\cdot,\cdot)\gets 0$
        \FOR{$h=H,\dots,1$}
            \STATE $\Lambda_{h}^k\gets\lambda I_d + \sum^P_{p=1}\sum^{k-1}_{\tau=1}\phi(x_h^{\tau,p},a_h^{\tau,p})\phi(x_h^{\tau,p},a_h^{\tau,p})^\top$
            \STATE $w_{h}^k\gets(\Lambda_{h}^{k})^{-1} \sum^P_{p=1}\sum^{k-1}_{\tau=1}\phi(x_h^{\tau,p},a_h^{\tau,p})[r_h^{\tau,p}+\max_{a\in\A} Q^{k}_{h+1}(x_{h+1}^{\tau,p},a)]$
            \STATE $Q^k_{h}(\cdot,\cdot)\gets \min\{(w_{h}^k)^\top\phi(\cdot,\cdot)+\beta(\phi(\cdot,\cdot)^\top(\Lambda_{h}^{k})^{-1}\phi(\cdot,\cdot))^{1/2},H\}$
        \ENDFOR
        \STATE \# DONE BY EACH AGENT $p\in[P]$ IN PARALLEL:
            \FOR{$h\in[H]$}
                \STATE $a_h^{k,p} \in \arg\max_{a\in\A} Q^k_h(x_h^{k,p},a)$ 
                \# GREEDY POLICY
                \STATE Observe $x_{h+1}^{k,p}$ 
            \ENDFOR
    \ENDFOR
\end{algorithmic}
\end{algorithm}

\begin{theorem}[Performance of the POLSVI algorithm]
\label{thm:main-paralel}
There exists an absolute constant $c_\beta>0$ such that, for any fixed $\delta\in(0,1)$, if we set $\lambda=1$ and $\beta=c_\beta dH\sqrt{\iota}$, with $\iota:=\log(dKHP/\delta)$, then, with probability at least $1-2\delta$, 
\begin{equation}
\label{eqn:regret-res2}
\textnormal{Regret}(K,P)
\leq 
\underbrace{\cO\bigg(\sqrt{KP}\sqrt{d^3H^4\iota^2}\bigg)}_{\text{Base term}} 
+ \underbrace{\cO\bigg(\sqrt{ d^4H^4\iota}P\log\left(1+\frac{KP}{d}\right)\bigg)}_{\text{Overhead term}}.
\end{equation}
\end{theorem}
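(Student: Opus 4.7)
The plan is to follow the optimism-based regret analysis of LSVI-UCB (as in Jin et al.\ 2020) and adapt every step to the batched updates induced by having all $P$ agents share the same Gram matrix $\Lambda_h^k$ within an episode. First I would establish high-probability optimism: $Q_h^k(x,a)\ge Q_h^*(x,a)$ for every $(k,h,x,a)$. Since $\Lambda_h^k$ and $w_h^k$ depend only on data from episodes $1,\dots,k-1$, a self-normalized martingale bound combined with a standard covering argument over the induced value-function class yields the required uniform deviation bound with $\beta=c_\beta dH\sqrt{\iota}$, where $\iota=\log(dKHP/\delta)$ reflects the total number of trajectory-steps. Optimism plus the usual one-step Bellman expansion then gives, for each agent $p$ and episode $k$,
\[
V_1^{\pi^*}(s_0)-V_1^{\pi^{k,p}}(s_0)\le 2\beta\sum_{h=1}^H \|\phi(x_h^{k,p},a_h^{k,p})\|_{(\Lambda_h^k)^{-1}} + M^{k,p},
\]
where the $M^{k,p}$ form bounded martingale differences that sum to $\tilde{\cO}(H\sqrt{KP})$ by Azuma. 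Thus it suffices to bound $S:=\sum_{k,p,h}\|\phi(x_h^{k,p},a_h^{k,p})\|_{(\Lambda_h^k)^{-1}}$ by $\tilde{\cO}(H\sqrt{KPd})+\tilde{\cO}(PHd)$; multiplying by $\beta=\tilde{\cO}(dH)$ then reproduces both terms of the statement.

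The heart of the argument is the bound on $S$, for which I would compare the stale matrix $\Lambda_h^k$ with the hypothetical sequentially-updated matrix $\tilde{\Lambda}_h^{k,p}:=\Lambda_h^k+\sum_{i<p}\phi(x_h^{k,i},a_h^{k,i})\phi(x_h^{k,i},a_h^{k,i})^\top$ that would arise if the $P$ agents in episode $k$ had acted one after another. Call $(k,h)$ \emph{non-doubling} when $\det(\Lambda_h^{k+1})/\det(\Lambda_h^k)\le 2$; telescoping $\log\det$ together with $\det(\Lambda_h^{K+1})\le(1+KP/d)^d$ shows that, for each $h$, the number of doubling episodes is at most $d\log_2(1+KP/d)$. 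For non-doubling $(k,h)$, the elementary fact ``$A\preceq B$ and $\det(B)\le 2\det(A)$ imply $B\preceq 2A$'', checked by inspecting the eigenvalues of $A^{-1/2}BA^{-1/2}$, gives $\tilde{\Lambda}_h^{k,p}\preceq 2\Lambda_h^k$ and hence $\|\phi\|_{(\Lambda_h^k)^{-1}}^2\le 2\|\phi\|_{(\tilde{\Lambda}_h^{k,p})^{-1}}^2$; combining this with Cauchy--Schwarz and the standard elliptical-potential bound $\sum_{k,p}\|\phi(x_h^{k,p},a_h^{k,p})\|_{(\tilde{\Lambda}_h^{k,p})^{-1}}^2\le 2d\log(1+KP/d)$ yields a non-doubling contribution of $\tilde{\cO}(H\sqrt{KPd})$ to $S$. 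For doubling $(k,h)$ I would simply use $\|\phi\|_{(\Lambda_h^k)^{-1}}\le 1$ and multiply by the count of such $(k,p,h)$ triples, which is at most $PHd\log_2(1+KP/d)$, giving the additive overhead $\tilde{\cO}(PHd)$.

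The main obstacle is precisely this doubling-based bookkeeping. A naive comparison $\Lambda_h^{k+1}\preceq(1+P)\Lambda_h^k$ would inject a multiplicative $\sqrt{P}$ inside Cauchy--Schwarz and produce a spurious $P\sqrt{K}$ regret term; splitting the sum by whether the determinant has essentially doubled within an episode is what turns this multiplicative blow-up into the purely additive $\tilde{\cO}(P\log(1+KP/d))$ overhead claimed in the theorem.
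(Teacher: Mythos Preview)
Your proposal is correct and follows essentially the same route as the paper's proof: optimism via a self-normalized bound with covering, the Bellman recursion producing a bonus sum plus a bounded martingale handled by Azuma--Hoeffding, and then the key splitting of the bonus sum into doubling and non-doubling episodes, with the elliptical-potential lemma applied to the sequentially-updated matrices $\tilde{\Lambda}_h^{k,p}$ on the non-doubling part and a crude $\|\phi\|_{(\Lambda_h^k)^{-1}}\le 1$ bound on the doubling part. The only cosmetic difference is that you define a doubling episode by the determinant ratio $\det(\Lambda_h^{k+1})/\det(\Lambda_h^k)>2$ and then prove directly (via the eigenvalues of $A^{-1/2}BA^{-1/2}$) that non-doubling implies $\Lambda_h^{k+1}\preceq 2\Lambda_h^k$, whereas the paper takes the Loewner condition as the definition and invokes an external lemma to show that its violation forces the determinant to more than double; the two criteria yield the same count bound $dH\log_2(1+KP/d)$ and the same non-doubling comparison, so the arguments are equivalent.
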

From Theorem~\ref{thm:main-paralel} we observe that our performance metric has two complexity terms: the \emph{base term} and the \emph{overhead term}, which we discuss next.

{\par \textbf{Speedup in the Base Term.}} The base term is the performance that would be obtained by the sequential counterpart of POLSVI in $KP$ episodes. The regret metric that we would employ for the sequential counterpart becomes $\textnormal{Regret}(K) = \sum_{k=1}^{K} V_1^{\pi^*}(x^k_1) - V_1^{\pi^{k}}(x^k_1)$, where $K$ is the number of episodes and $\pi^k$ is the (greedy) policy taken by the single agent at episode $k\in[K]$. \citep{CJ-ZY-ZW-MIJ:20} proved that with probability $1-\delta$: $\textnormal{Regret}(K)\leq O(\sqrt{K}\sqrt{d^3H^4\iota^2})$, where $\iota=\log(2dKH/\delta)$ (under $\lambda=1$ and $\beta=c_\beta d H\sqrt{\iota}$, $c_\beta$ being some absolute constant). Therefore, the base term in our learning regret~\eqref{eqn:regret-res2} indicates an almost linear speedup, because of the factor $\sqrt{KP}$ compared to the factor $\sqrt{K}$ in the performance of the sequential algorithm. In other words, in terms of the base term, there is a complexity \emph{equivalence} between performing the sequential algorithm for $KP$ episodes and performing the parallelized version with $P$ agents for $K$ episodes. 

{\par \textbf{Overhead Term: the Price of Parallelization.}} Given that the base term indicates a speedup in POLSVI with respect to its sequential counterpart, the overhead term adds an extra complexity term due to the use of parallel agents in the RL algorithm --- this term would be nonexistent if the algorithm was sequentially executed by a single agent. Following the proof of Theorem~\ref{thm:main-paralel}, the overhead term originates from the occurrence of the event ``$\Lambda_h^{k+1}\succ 2\Lambda_h^{k}$" across different steps $h\in[H]$ and episodes $k\in[K]$ --- we call this event a \emph{doubling round}, a term taken from~\citet{JC-AP-NT-YSS-PB-MIJ:21}, where a similar phenomenon occurs in bandits. The overhead term is a bound on the total number of doubling rounds in all steps from all episodes in which the algorithm is executed. Given a fixed step $h$, a doubling round occurs when the information in the covariance matrix between two consecutive episodes is \emph{too different}, that is, when aggregating the information collected by the parallel agents adds a \emph{considerable} amount of information or novelty to what has been obtained so far in the previous episode. In Appendix~\ref{App:explain-dr}, we further explain how the inefficiency of parallel exploration and the stochasticity of the environment result in doubling rounds.

{\par \textbf{POLSVI employs a \emph{single policy} during exploration}} On line 13 of Algorithm~\ref{alg:main_LIN_UCB_LSVI}, we observe that all agents are using the same function $Q^k_h$ in order to compute their greedy policies, instead of each agent constructing its own Q-function to define its respective greedy policy. If agents have different Q-functions, then each one may take very distinct (greedy) actions given the same state, i.e., there would be \emph{heterogeneous} policies.

\section{Parallelizing Reward-Free Exploration: A Surprisingly Simple Baseline}
\label{sec:paral-rf-sec}

We have shown that a simplistic incorporation of parallelism in the online RL setting results in nearly optimal regret up to logarithmic factors, without having the agents execute a diverse set of policies. We use this intuition to develop algorithms and results for the reward-free setting.

\subsection{Markov Decision Process}

Algorithm~\ref{alg:main_LIN_RF_POLSVI_exp} and Algorithm~\ref{alg:main_LIN_RF_POLSVI_plan} detail the exploration and planning phases of the \emph{Reward-Free Parallel Optimistic Least-Squares Value Iteration} (RF-POLSVI) algorithm. The \emph{exploration phase} is very similar to POLSVI, with the difference that the optimistic bonus term $\beta(\phi(\cdot,\cdot)^\top(\Lambda^k_h)^{-1}\phi(\cdot,\cdot))^{1/2}$ is used both in the construction of the reward function (see the term $r^k_h(\cdot,\cdot)$ in line 9) and in its usual role as an optimism bonus (see line 11). In the \emph{planning phase}, the central server computes the final greedy policy based on the information collected by the parallel agents in the exploration phase, using a user-specified reward. The performance of the RF-POLSVI algorithm is summarized in the following result.

\begin{theorem}[Performance of the RF-POLSVI algorithm]
\label{thm:main-reward-free}
There exists an absolute constant $c_\beta>0$ such that, for any fixed $\delta\in(0,1)$, if we set $\lambda=1$ and $\beta=c_\beta dH\sqrt{\iota}$, with $\iota:=\log(dKHP/\delta)$, then, with probability at least $1-3\delta$, 
\begin{equation}
\label{eqn:subopt-rf-1}
\textnormal{SubOpt}(\pi;r) \leq 
\underbrace{\cO\left(\sqrt{\frac{d^3H^6\iota^2}{KP}}\right)}_{\text{Base term}} + \underbrace{\cO\left(
\frac{\sqrt{d^4H^6\iota}}{K}\log\left(1+\frac{KP}{d}\right)\right)}_{\text{Overhead term}}.
\end{equation}
\end{theorem}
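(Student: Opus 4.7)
The plan is to combine the reward-free exploration analysis of \citet{RW-SD-LY-RS:20} with the parallel-regret machinery already developed for the proof of Theorem~\ref{thm:main-paralel}. Let $u_h^k(x,a) := \min\{\beta\,(\phi(x,a)^\top (\Lambda_h^k)^{-1} \phi(x,a))^{1/2},\,H\}$ denote the bonus at episode $k$; up to normalization this is precisely the exploration reward $r_h^k$ used on line~9 of Algorithm~\ref{alg:main_LIN_RF_POLSVI_exp}. The strategy is: (i) reduce the planning-phase suboptimality to $V_1^{\pi^*}(s_0; u^{K+1})$ via optimism, (ii) use the monotonicity $\Lambda_h^{k+1} \succeq \Lambda_h^k$ together with the exploration-phase optimism to upper bound this quantity by a cumulative-bonus sum along the gathered trajectories, and (iii) invoke the parallel elliptical-potential-with-doubling-rounds argument to bound that sum.

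First, I would prove a planning-phase pessimism bound. Using the standard self-normalized concentration inequality for linear MDPs (with parameter $\beta = c_\beta dH\sqrt{\iota}$ chosen exactly as in Theorem~\ref{thm:main-paralel}), a backward induction over $h$ on the bound $|Q_h(\cdot,\cdot) - \mathbb{B}_h V_{h+1}(\cdot,\cdot)| \leq u_h^{K+1}(\cdot,\cdot)$ yields
\begin{equation*}
\textnormal{SubOpt}(\pi; r) \;\leq\; 2\, V_1^{\pi^*}(s_0; u^{K+1}).
\end{equation*}
Second, because $\Lambda_h^{k+1} \succeq \Lambda_h^k$ we have $u^k \geq u^{K+1}$ pointwise, and the exploration-phase optimism built into line~11 of Algorithm~\ref{alg:main_LIN_RF_POLSVI_exp} gives $V_1^{\pi^*}(s_0; u^k) \leq V_1^{\pi^k}(s_0; u^k)$ for every $k$. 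Summing over $k \in [K]$ and over the $P$ parallel agents (all of whom play the single policy $\pi^k$) produces
\begin{equation*}
KP\cdot V_1^{\pi^*}(s_0; u^{K+1}) \;\leq\; \sum_{k=1}^K \sum_{p=1}^P V_1^{\pi^k}(s_0; u^k).
\end{equation*}
Third, Azuma-Hoeffding on the martingale differences $\sum_h u_h^k(x_h^{k,p}, a_h^{k,p}) - V_1^{\pi^k}(s_0; u^k)$ (each bounded by $H^2$) replaces expectations with empirical trajectory sums at a cost of $\tilde{\cO}(H^2\sqrt{KP})$. Finally, the same parallel elliptical-potential analysis used in Theorem~\ref{thm:main-paralel}---splitting episodes into ``normal'' rounds with $\Lambda_h^{k+1} \preceq 2\Lambda_h^k$ and ``doubling'' rounds where this fails---bounds
\begin{equation*}
\sum_{k,p,h} u_h^k(x_h^{k,p}, a_h^{k,p}) \;\leq\; \cO\bigl(\sqrt{KP\,d^3 H^4\iota^2}\bigr) + \cO\bigl(\sqrt{d^4 H^4 \iota}\,P \log(1+ KP/d)\bigr).
\end{equation*}
Dividing by $KP$ and chaining with the factor of $2$ from Step~1 produces the base and overhead terms of \eqref{eqn:subopt-rf-1}.

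The main technical obstacle is Step~3: the doubling-rounds accounting that gives rise to the $1/K$ (rather than $1/\sqrt{KP}$) overhead term. Because all $P$ agents in a given episode share the same $\Lambda_h^k$, a single episode can contribute a rank-$P$ update, and the log-determinant cap on the number of doubling events gives the $P\log(1+KP/d)$ factor, exactly mirroring the POLSVI proof. A secondary subtlety is that the same $\beta$ must simultaneously support uniform concentration over the optimistic $Q$-function class built during exploration \emph{and} the self-normalized bound used in the planning phase; a standard covering argument over the parameters $(w, \Lambda^{-1})$, together with an extra $\delta$ in the failure budget (yielding $1-3\delta$ rather than the $1-2\delta$ of Theorem~\ref{thm:main-paralel}), resolves this cleanly.
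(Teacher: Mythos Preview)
Your overall architecture mirrors the paper's proof: reduce the planning suboptimality to an ``expected bonus'' value, connect that to the exploration phase via reward monotonicity and optimism, and then control the cumulative bonus with the parallel elliptical-potential/doubling-round argument from Theorem~\ref{thm:main-paralel}. The paper packages these as Lemma~\ref{lem:confidence_planning} (your Step~1), Lemma~\ref{lem:sum_V} and Lemma~\ref{lem:small_V} (your Steps~2--3), and equation~\eqref{eq:bound_dr} (your Step~4). The identification of the three concentration events and the resulting $1-3\delta$ is also right.

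There is, however, a genuine gap in Step~2. Exploration-phase optimism does \emph{not} give you
\[
V_1^{\pi^*}(s_0; u^k) \;\leq\; V_1^{\pi^k}(s_0; u^k),
\]
nor the version with $V_1^{*}(s_0;u^k)$ on the left. Optimism compares the \emph{algorithm's estimate} to the optimal value: $V_1^{*}(s_0; r^k) \leq V_1^k(s_0)$, where $V_1^k$ is the quantity on line~11 of Algorithm~\ref{alg:main_LIN_RF_POLSVI_exp}. It says nothing about the \emph{true} value $V_1^{\pi^k}(s_0;u^k)$, which in general is \emph{below} $V_1^{*}(s_0;u^k)$, so your displayed inequality points the wrong way. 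The same issue shows up in Step~1, where the standard backward induction yields $\textnormal{SubOpt}(\pi;r)\leq 2V_1^{\pi}(s_0;u^{K+1})\leq 2V_1^{*}(s_0;u^{K+1})$, with the optimum taken for reward $u^{K+1}$ --- not $V_1^{\pi^*}$, where $\pi^*$ is optimal for $r$.

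The fix is exactly what the paper does: chain
\[
V_1^{*}(s_0;u^{K+1}) \;=\; H\,V_1^{*}\!\left(s_0;\tfrac{u^{K+1}}{H}\right)\;\leq\; H\,V_1^{*}(s_0;r^k)\;\leq\; H\,V_1^k(s_0),
\]
and then bound $\sum_k V_1^k(s_0)$ by unrolling the algorithm's recursion $Q_h^k$ step by step (using the concentration $|\phi^\top w_h^k-\Pe_hV_{h+1}^k|\leq\beta\|\phi\|_{(\Lambda_h^k)^{-1}}$) directly down to the empirical bonus sum plus an Azuma term. This is the content of Lemma~\ref{lem:sum_V}. Going through $V_1^k$ rather than $V_1^{\pi^k}$ is not optional bookkeeping: the passage through the normalized reward $r^k=u^k/H$ is what produces the extra factor of $H$, turning your implied $H^4$ into the $H^6$ that actually appears in~\eqref{eqn:subopt-rf-1}. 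Once Step~2 is repaired this way, your Steps~3--4 go through unchanged and coincide with the paper's derivation.
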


We have the same observations on the overhead and base terms in the suboptimality performance metric~\eqref{eqn:subopt-rf-1}. Likewise, we observe that RF-POLSVI employs a single policy for exploration: line 15 of Algorithm~\ref{alg:main_LIN_RF_POLSVI_exp} shows each agent following the same greedy policy based on the function $Q^k_h$.  \citep{RW-SD-LY-RS:20} studied the sequential counterpart of RF-POLSVI and we observe that the base term indicates an almost linear speed-up with respect to its sequential counterpart. The overhead term is again an additional penalty term due to doubling rounds.
\begin{algorithm}[tb]
  \caption{Reward-Free POLSVI (RF-POLSVI) --- Exploration phase}
  \label{alg:main_LIN_RF_POLSVI_exp}
\begin{algorithmic}[1]
    \STATE {\bfseries Input:} $P$, $T$, $\beta$, $\lambda$
    \FOR{episode $k\in[K]$}
        \STATE $x_1^{k,p}\gets s_0$ for $p\in[P]$
        \STATE \# DONE BY CENTRAL SERVER:
        \STATE $Q_{H+1}^k(\cdot,\cdot)\gets 0$
        \FOR{$h=H,\dots,1$}
            \STATE $\Lambda_{h}^k\gets\lambda I_d + \sum^P_{p=1}\sum^{k-1}_{\tau=1}\phi(x_h^{\tau,p},a_h^{\tau,p})\phi(x_h^{\tau,p},a_h^{\tau,p})^\top$
            \STATE 
        $u^k_h(\cdot,\cdot)\gets\min\{\beta(\phi(\cdot,\cdot)^\top(\Lambda_{h}^{k})^{-1}\phi(\cdot,\cdot))^{1/2},H\}$
        \STATE $r^k_h(\cdot,\cdot)\gets u^k_h(\cdot,\cdot)/H$
        \STATE $w_{h}^k\gets(\Lambda_{h}^{k})^{-1} \sum^P_{p=1}\sum^{k-1}_{\tau=1}\phi(x_h^{\tau,p},a_h^{\tau,p})$        $\times\max_{a\in\A} Q^{k}_{h+1}(x_{h+1}^{\tau,p},a)$
            \STATE $Q^k_{h}(\cdot,\cdot)\gets \min\{(w_{h}^k)^\top\phi(\cdot,\cdot)+r^k_h(\cdot,\cdot)+u^k_h(\cdot,\cdot),H\}$
        \ENDFOR
        \STATE \# DONE BY EACH AGENT $p\in[P]$ IN PARALLEL:
            \FOR{$h\in[H]$}
                \STATE $a_h^{k,p} \in \arg\max_{a\in\A} Q^k_h(x_h^{k,p},a)$ \# GREEDY POLICY
                \STATE Observe $x_{h+1}^{k,p}$
            \ENDFOR   
    \ENDFOR
\STATE {\bfseries Return} $\{(x^{k,p}_h,a^{k,p}_h)\}_{(h,k,p)\in[H]\times[K]\times[P]}$ \# COLLECTED TRAJECTORIES 
\end{algorithmic}
\end{algorithm}
\begin{algorithm}[tb]
  \caption{Reward-Free POLSVI (RF-POLSVI) --- Planning phase}
  \label{alg:main_LIN_RF_POLSVI_plan}
\begin{algorithmic}[1]
    \STATE {\bfseries Input:} $P$, $\beta$, $\lambda$, $\{(x^{k,p}_h,a^{k,p}_h    
    )\}_{(h,k,p)\in[H]\times[K]\times[P]}$, $r=\{r_h\}_{h\in[h]}$ 
    \STATE $\hat{Q}_{H+1}(\cdot,\cdot)\gets 0$
        \FOR{$h=H,\dots,1$}
            \STATE $\Lambda_{h}\gets\lambda I_d + \sum^P_{p=1}\sum^K_{\tau=1}\phi(x_h^{\tau,p},a_h^{\tau,p})\phi(x_h^{\tau,p},a_h^{\tau,p})^\top$
            \STATE 
        $u_h(\cdot,\cdot)\gets\min\{\beta(\phi(\cdot,\cdot)^\top(\Lambda_{h})^{-1}\phi(\cdot,\cdot))^{1/2},H\}$
            \STATE            $\hat{w}_{h}\gets(\Lambda_{h})^{-1} \sum^P_{p=1}\sum^{K}_{\tau=1}\phi(x_h^{\tau,p},a_h^{\tau,p})$ $\times\max_{a\in\A} \hat{Q}_{h+1}(x_{h+1}^{\tau,p},a)$
            \STATE $\hat{Q}_{h}(\cdot,\cdot)\gets \min\{(\hat{w}_{h})^\top\phi(\cdot,\cdot)+r_h(\cdot,\cdot)+u_h(\cdot,\cdot)
            ,H\}$
            \STATE $\pi_h(\cdot)\in  \arg\max_{a\in\A} \hat{Q}_h(\cdot,a)$
        \ENDFOR
\STATE {\bfseries Return} $\pi=\{\pi_h\}_{h\in[H]}$
\end{algorithmic}
\end{algorithm}

\subsection{Zero-Sum Markov Games}
We study reward-free RL with an underlying zero-sum Markov game to demonstrate the power of parallel exploration in the MARL context. We propose the \emph{Reward-Free Markov Game Parallel Optimistic Least-Squares Value Iteration} (RFMG-POLSVI) algorithm. The \emph{exploration phase}, described in Algorithm~\ref{alg:main_LIN_RFMG_POLSVI_exp} in Appendix~\ref{App:RFMG-POLSVI}, is basically the same as RF-POLSVI with the difference that the action space is extended to a product of action spaces corresponding to each of the two players in the MG --- and so there is exploration using a single policy. In the \emph{planning phase}, in Algorithm~\ref{alg:main_LIN_RFMG_POLSVI_plan}, the central server computes the policies for each player through the computation of two Nash Equilibria for two-player zero-sum (static) games at each step of the MG --- lines 11 and 12 of Algorithm~\ref{alg:main_LIN_RFMG_POLSVI_plan} are minimax problems. The following result summarizes the performance of the RFMG-POLSVI algorithm.

\begin{theorem}[Performance of the RFMG-POLSVI algorithm]
\label{thm:main-reward-free_MG}
There exists an absolute constant $c_\beta>0$ such that, for any fixed $\delta\in(0,1)$, if we set $\lambda=1$ and $\beta=c_\beta dH\sqrt{\iota}$, with $\iota:=\log(dKHP/\delta)$, then, with probability at least $1-3\delta$,
\begin{equation}
\label{eqn:subopt-rf-1-MG}
\textnormal{SubOpt}(\pi,\nu;r) \leq 
\underbrace{\cO\left(\sqrt{\frac{d^3H^6\iota^2}{KP}}\right)}_{\text{Base term}} + \underbrace{\cO\left(\frac{\sqrt{d^4H^6\iota}}{K}\log\left(1+\frac{KP}{d}\right)\right)}_{\text{Overhead term}}.
\end{equation}
\end{theorem}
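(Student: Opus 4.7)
The plan is to reduce the proof to the reward-free MDP analysis behind Theorem~\ref{thm:main-reward-free} by exploiting the close structural similarity between RFMG-POLSVI and RF-POLSVI.

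\textbf{Step 1 (exploration-phase reuse).} The exploration phase of RFMG-POLSVI is identical to that of RF-POLSVI after lifting $\phi(x,a)$ to $\phi(x,a,b)$ over the joint action space $\A\times\cB$: the same Gram matrix $\Lambda_h^k$, the same bonus $u_h^k$, the same internal reward $r_h^k = u_h^k/H$, and each agent following the same greedy joint policy. Consequently, the self-normalized concentration arguments and the doubling-round bookkeeping used to prove Theorem~\ref{thm:main-reward-free} carry over verbatim. In particular, I would extract the following uniform bonus bound holding for \emph{any} policy pair $(\tilde{\pi},\tilde{\nu})$:
\begin{equation*}
\E_{\tilde{\pi},\tilde{\nu}}\!\left[\sum_{h=1}^H u_h(x_h,a_h,b_h)\right] \leq \tilde{\cO}\!\left(\sqrt{\tfrac{d^3 H^4}{KP}} + \tfrac{\sqrt{d^4 H^4}}{K}\log\!\left(1+\tfrac{KP}{d}\right)\right),
\end{equation*}
which is precisely the quantity that will drive both terms in the stated bound (after multiplying by the horizon $H$).

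\textbf{Step 2 (optimistic/pessimistic Q-functions).} In the planning phase I would introduce, alongside the line-11 optimistic $\hat{Q}_h$ that adds the bonus $u_h$, a pessimistic $\check{Q}_h$ that subtracts it, and let $\hat{V}_h(x)$ and $\check{V}_h(x)$ be the Nash values of the two resulting matrix games at state $x$. A standard self-normalized concentration argument for linear MGs (analogous to the linear-MDP lemma used inside the proof of Theorem~\ref{thm:main-reward-free}) yields, uniformly in $(h,x,a,b)$ with high probability,
\begin{equation*}
\check{Q}_h(x,a,b) \leq (\mathbb{B}_h V^{\dag}_{h+1})(x,a,b) \leq \hat{Q}_h(x,a,b).
\end{equation*}

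\textbf{Step 3 (optimism/pessimism against best response).} Let $\pi_h$ denote the max-player's NE marginal in the upper matrix game and $\nu_h$ the min-player's NE marginal in the lower one. A backward induction invoking the minimax inequality $\max_\pi \min_\nu \leq \min_\nu \max_\pi$ at each step yields
\begin{equation*}
\hat{V}_h(x) \geq V^{\bre_1(\nu),\nu}_h(x;r), \qquad \check{V}_h(x) \leq V^{\pi,\bre_2(\pi)}_h(x;r),
\end{equation*}
so that $\textnormal{SubOpt}(\pi,\nu;r) \leq \hat{V}_1(s_0) - \check{V}_1(s_0)$. Expanding the recursions for $\hat{V}_h - \check{V}_h$, noting that the one-step gap between the optimistic and pessimistic backups is at most $2u_h$, and telescoping along a trajectory sampled under $(\bre_1(\nu),\nu)$ gives
\begin{equation*}
\hat{V}_1(s_0) - \check{V}_1(s_0) \leq 2H \cdot \sup_{\tilde{\pi},\tilde{\nu}} \E_{\tilde{\pi},\tilde{\nu}}\!\left[\sum_{h=1}^H u_h(x_h,a_h,b_h)\right] + \text{(martingale)},
\end{equation*}
where the martingale term is controlled by Azuma-Hoeffding. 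Plugging in the Step~1 bound recovers the base and overhead rates in \eqref{eqn:subopt-rf-1-MG}.

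\textbf{Main obstacle.} The delicate point is Step~3: in a Markov game neither player controls the dynamics alone, so the Nash value of the matrix game built from $\hat{Q}_h$ must be shown to dominate the best-response value $V^{\bre_1(\nu),\nu}_h$, and dually for $\check{Q}_h$. This requires a careful application of the minimax inequality at each of the $H$ induction steps, using the fact that $\pi_h$ and $\nu_h$ are NE marginals of the respective surrogate games rather than of the true game. Once this two-sided sandwich is established, the rest is a direct translation of the RF-POLSVI analysis, since the exploration phase inherits all its concentration and doubling-round control from the MDP proof.
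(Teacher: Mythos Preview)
Your Steps~1 and~2 are essentially correct and mirror the paper: the exploration phase is literally RF-POLSVI over the joint action space, and the optimistic/pessimistic planning backups satisfy $\underline{Q}_h \leq Q_h^{\dagger} \leq \overline{Q}_h$ via the same self-normalized concentration.

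The genuine gap is in Step~3. The inequalities you claim,
\[
\hat{V}_h(x) \geq V^{\bre_1(\nu),\nu}_h(x;r),\qquad \check{V}_h(x) \leq V^{\pi,\bre_2(\pi)}_h(x;r),
\]
do \emph{not} follow from Step~2 and the minimax inequality, and are in general false. From the sandwich $\underline{Q}_h\leq Q_h^{\dagger}\leq\overline{Q}_h$ and monotonicity of the Nash value one only gets $\check{V}_h\leq V_h^{\dagger}\leq \hat{V}_h$. But by weak duality one always has $V_h^{\bre_1(\nu),\nu}\geq V_h^{\dagger}$ and $V_h^{\pi,\bre_2(\pi)}\leq V_h^{\dagger}$, so both $\hat{V}_h$ and $V_h^{\bre_1(\nu),\nu}$ sit \emph{above} $V_h^{\dagger}$ with no ordering between them (and dually below). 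Consequently $\textnormal{SubOpt}\leq \hat{V}_1-\check{V}_1$ is unjustified, and your subsequent telescoping of $\hat{V}_h-\check{V}_h$ along $(\bre_1(\nu),\nu)$ also fails, since $\hat{V}_h$ is defined through $\pi_h$ (the maximizer's NE of $\overline{Q}_h$), not through $\bre_1(\nu)_h$.

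The paper fixes this by pivoting through $V^{\dagger}$ rather than through $\hat{V}-\check{V}$. It writes
\[
\textnormal{SubOpt}(\pi,\nu;r)=\bigl(V_1^{\bre_1(\nu),\nu}-V_1^{\dagger}\bigr)+\bigl(V_1^{\dagger}-V_1^{\pi,\bre_2(\pi)}\bigr),
\]
and handles each half with the surrogate matching the output policy. For the second half: $V_1^{\dagger}\leq\Vo_1$, and because $\pi_h$ \emph{is} the maximizer's NE of $\overline{Q}_h$, one has $\Vo_h(x)=\min_{\nu'}\E_{a\sim\pi_h,b\sim\nu'}[\overline{Q}_h]\leq \E_{a\sim\pi_h,b\sim\bre_2(\pi)_h}[\overline{Q}_h]\leq \E_{a\sim\pi_h,b\sim\bre_2(\pi)_h}[r_h+\Pe_h\Vo_{h+1}+2u_h]$. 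This telescopes cleanly along $(\pi,\bre_2(\pi))$, yielding $V_1^{\dagger}-V_1^{\pi,\bre_2(\pi)}\leq 2H\,V_1^{\pi,\bre_2(\pi)}(s_0;u/H)$. Symmetrically, the first half uses $V_1^{\dagger}\geq\Vu_1$ and the fact that $\nu_h$ is the minimizer's NE of $\underline{Q}_h$ to telescope $V_1^{\bre_1(\nu),\nu}-\Vu_1$ along $(\bre_1(\nu),\nu)$. Both pieces are then bounded by $\tfrac{2H}{K}\sum_k V_1^k(s_0)$ via your Step~1 reduction, giving the stated rates. The point you flagged as the ``main obstacle'' is exactly where your plan breaks: the two-sided sandwich you propose does not exist, and the correct argument couples each output policy to its own surrogate game rather than comparing $\hat{V}$ and $\check{V}$ directly.
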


We are not aware of a sequential counterpart to RFMG-POLSVI in the literature. However, recently, \citet{SQ-JY-ZW-ZY:22} provided (sequential) algorithms for reward-free RL for MGs with theoretical guarantees under kernel function and neural network approximation --- indeed, RFMG-POLSVI is based on a parallel adaptation of the general structure of the algorithms by~\citet{SQ-JY-ZW-ZY:22}. Since the suboptimality in~\eqref{eqn:subopt-rf-1-MG} has the same form as in~\eqref{eqn:subopt-rf-1}, we can conclude that we also have an almost linear speed-up compared to the sequential counterpart of our algorithm. The overhead term is also polylogarithmic in $P$. 
\begin{algorithm}[tb]
  \caption{Reward-Free Markov Game POLSVI (RFMG-POLSVI) --- Planning phase}
  \label{alg:main_LIN_RFMG_POLSVI_plan}
\begin{algorithmic}[1]
    \STATE {\bfseries Input:} $P$, $\beta$, $\lambda$, 
    $\{(x^{k,p}_h,a^{k,p}_h,b^{k,p}_h)\}_{(h,k,p)\in[H]\times[K]\times[P]}$, 
    $r=\{r_h\}_{h\in[h]}$ 
    \STATE $\overline{Q}_{H+1}(\cdot,\cdot,\cdot)\gets 0$
    \STATE $\underline{Q}_{H+1}(\cdot,\cdot,\cdot)\gets 0$
    \FOR{$h=H,\dots,1$}
        \STATE $\Lambda_{h}\gets\lambda I_d + \sum^P_{p=1}\sum^K_{\tau=1}\phi(x_h^{\tau,p},a_h^{\tau,p},b_h^{\tau,p})\phi(x_h^{\tau,p},a_h^{\tau,p},b_h^{\tau,p})^\top$
        \STATE 
        $u_h(\cdot,\cdot,\cdot)\gets\min\{\beta(\phi(\cdot,\cdot,\cdot)^\top(\Lambda_{h})^{-1}\phi(\cdot,\cdot,\cdot))^{1/2},$ $H\}$
        \STATE            $\overline{w}_{h}\gets(\Lambda_{h})^{-1} \sum^P_{p=1}\sum^{K}_{\tau=1}\phi(x_h^{\tau,p},a_h^{\tau,p},b_h^{\tau,p})$ $\times\E_{a\sim\pi_{h+1}(x^{\tau,p}_{h+1}),b\sim\overline{D}(x^{\tau,p}_{h+1})}[ \overline{Q}_{h+1}(x_{h+1}^{\tau,p},a,b)]$
        \STATE            $\underline{w}_{h}\gets(\Lambda_{h})^{-1} \sum^P_{p=1}\sum^{K}_{\tau=1}\phi(x_h^{\tau,p},a_h^{\tau,p},b_h^{\tau,p})$ $\times\E_{a\sim\underline{D}(x^{\tau,p}_{h+1}),b\sim\nu_{h+1}(x^{\tau,p}_{h+1})}[ \underline{Q}_{h+1}(x_{h+1}^{\tau,p},a,b)]$
       \STATE $\overline{Q}_{h}(\cdot,\cdot,\cdot)\gets \Pi_{[0,H]}[(\overline{w}_{h})^\top\phi(\cdot,\cdot,\cdot)+r_h(\cdot,\cdot,\cdot)+u_h(\cdot,\cdot,\cdot)]$
       \STATE $\underline{Q}_{h}(\cdot,\cdot,\cdot)\gets \Pi_{[0,H]}[(\underline{w}_{h})^\top\phi(\cdot,\cdot,\cdot)+r_h(\cdot,\cdot,\cdot)-u_h(\cdot,\cdot,\cdot)]$
       \STATE
       $(\pi_h(x),\overline{D}(x))\in\text{Nash Equilibrium}\,(\overline{Q}_h(x,\cdot,\cdot))$ for any $x\in\S$
       \STATE
       $(\underline{D}(x),\nu_h(x))\in\text{Nash Equilibrium}\,(\underline{Q}_h(x,\cdot,\cdot))$ for any $x\in\S$
        \ENDFOR
\STATE {\bfseries Return} $\pi=\{\pi_h\}_{h\in[H]}$, $\nu=\{\nu_h\}_{h\in[H]}$
\end{algorithmic}
\end{algorithm}

\subsection{Lower Bound in Reward-Free Exploration}
\label{subsec:lower-bound}

We now present and discuss the lower bound for parallel reward-free exploration in linear MDPs. Our proof technique mimics that of~\citep{AW-YC-MS-SSD-KJ:22} as the lower bound constructed there is the tightest one for reward-free exploration in linear MDPs.

\begin{theorem}
\label{thm:lower_bound}
Let $\epsilon>0$, $P > 0$, $d >1$, and $KP \geq d^2$. Consider running a parallel algorithm with $P$ agents for $K$ episodes in a $(d + 1)$-dimensional linear MDP, where each agent is allowed to have a unique exploration policy. Suppose that the parallel algorithm stops at a possibly random stopping time $\tau$ and outputs a policy $\hat{\pi}$ which is a guess at an $\epsilon$-optimal policy. Then, there is a universal constant $c > 0$ such that unless $KP \geq c (dH/\epsilon)^2$, there exists a linear MDP $\cM$ for which $\Pr_{\cM}[\{\tau > K \textrm{ or }\hat{\pi} \textrm{ is not $\epsilon$-optimal}\}] \geq 0.1$; i.e., with constant probability either $\hat{\pi}$ is not $\epsilon$-optimal or more than $KP$ samples are collected.  
\end{theorem}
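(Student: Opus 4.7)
The plan is to lift the sequential reward-free linear MDP lower bound of Wagenmaker et al.~to the parallel setting by exploiting that $KP$ trajectories collected by $P$ agents over $K$ episodes contain no more statistical information than $KP$ sequential trajectories. The core observation is that the KL chain rule applied to a full transcript of state-action pairs is agnostic to the collection order---whether pairs are generated simultaneously by different agents using possibly correlated policies dispatched by a central server, or one at a time by a single adaptive agent, the conditional transition law at each visited $(x,a)$ is identical, so the per-sample KL cost accumulates in exactly the same way.

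I would begin by importing the hard-instance family from Wagenmaker et al.: a set of $(d+1)$-dimensional linear MDPs $\{\cM_\theta\}_{\theta\in\Theta}$ indexed by a packing $\Theta\subset\{\pm 1\}^d$, with a $\theta$-dependent perturbation of the feature map of magnitude proportional to $\epsilon/(dH)$ (up to constants). The calibration is chosen so that distinct $\theta$'s induce $\epsilon$-optimal policies that disagree on enough state-action pairs that any algorithm which returns an $\epsilon$-optimal policy with probability at least $0.9$ uniformly over $\Theta$ must effectively identify $\theta$ with comparable confidence. For a parallel algorithm with stopping time $\tau$, I would denote by $\mathbb{P}_\theta$ the law of the complete transcript---the $\tau P$ trajectories together with $\hat\pi$---when interacting with $\cM_\theta$, and use the chain rule to obtain a bound of the form
\[
\KL(\mathbb{P}_\theta \,\|\, \mathbb{P}_{\theta'}) \;\leq\; c_1\,\mathbb{E}_\theta[\tau P]\cdot \bigl(\epsilon/(dH)\bigr)^2,
\]
where the right-hand side quantifies the cumulative per-sample discrimination cost under the perturbation. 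Crucially, the sum in the chain rule runs over all visited $(x,a)$ pairs with no regard for which agent or which episode produced them, so parallelism contributes only through $\tau P$ and not through any additional structural term.

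Finally, an Assouad-style reduction (as in Wagenmaker et al.) converts these pairwise KL bounds into the information-theoretic requirement $\mathbb{E}_\theta[\tau P] = \Omega((dH/\epsilon)^2)$ on at least one $\cM_\theta$. Combining this with the hypothesis $\tau\leq K$ through a routine Markov-inequality step delivers the stated conclusion: unless $KP \geq c(dH/\epsilon)^2$, there exists $\theta^\star$ under which $\Pr_{\cM_{\theta^\star}}[\{\tau > K \text{ or } \hat\pi \text{ not } \epsilon\text{-optimal}\}] \geq 0.1$. The main obstacle I foresee is not the information-theoretic machinery itself, which transfers essentially verbatim, but rather the bookkeeping needed to formalize the parallel transcript as an ordered sequence of $(x,a)$ pairs so that the chain rule applies cleanly despite the within-episode coupling introduced by the central server. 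Once that formalization is in place---treating the $\tau P$ samples under any fixed ordering consistent with the information flow---the rest of the proof is a direct importation of the Wagenmaker et al.~construction, with $K$ replaced throughout by $KP$ as the effective sample budget.
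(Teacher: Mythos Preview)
Your proposal is correct, and the core information-theoretic idea---that the KL chain rule over a transcript of $(x,a)$ pairs accumulates per-sample costs identically regardless of whether those pairs arise from $P$ parallel agents or a single sequential one---is exactly the crux of the matter.

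The paper's proof is organized slightly differently. Rather than working directly at the MDP level, it first isolates a \emph{parallel linear bandit} lower bound: it defines a query model in which at each of $K$ steps a batch of $P$ arms is pulled, and shows via the same Assouad/Pinsker/KL-chain-rule argument you describe (applied to Bernoulli observations) that $KP = \Omega(d^2/\epsilon^2)$ is necessary for best-arm identification. Only then does it invoke the Wagenmaker et al.\ MDP-to-bandit reduction, which converts an $\epsilon$-optimal MDP policy into an $(\epsilon/H)$-optimal bandit query and thereby inserts the factor of $H$. Your route collapses these two stages, running the KL chain rule directly on the MDP transcript for the hard-instance family. Both are sound; the paper's factorization makes the role of the bandit structure explicit and lets it reuse the reduction lemma verbatim, while your direct approach avoids the intermediate abstraction at the cost of having to track the per-step KL contributions inside the MDP construction (only the informative transition out of the initial state contributes, so the bookkeeping is mild). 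The ``ordering of the parallel transcript'' issue you flag is handled in the paper by the same device you anticipate: fix any ordering consistent with information flow, apply the chain rule, and then note that within an episode the agents' observations are conditionally independent given the shared policy.
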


We remark that Theorem~\ref{thm:lower_bound} derives a lower bound for PAC RL~\citep{strehl2009reinforcement}, a setting that focuses on learning a policy with suboptimality at most $\epsilon$ for a \emph{given} reward function. As reward-free RL is capable of returning policies with suboptimality at most $\epsilon$ for \emph{arbitrary} reward functions, any reward-free RL algorithm may be used for PAC RL and, naturally, any lower bound for PAC RL holds for reward-free RL.

{\par \textbf{Near-Minimax Optimality.}} We highlight the fact that Theorem~\ref{thm:lower_bound} and Theorem~\ref{thm:main-reward-free}, in combination, show that RF-POLSVI is near-minimax optimal up to logarithmic factors when $P = \cO(K)$. More specifically, when $P = \cO(K)$, the suboptimality of Algorithm~\ref{alg:main_LIN_RF_POLSVI_exp} decreases at a rate of $\Tilde{\cO}(d^2H^3/\sqrt{KP})$, which translates to $K = {\Omega}(1/(Pd^4H^6\epsilon^2))$ (up to logarithmic factors) rounds of interaction with the environment, matching the lower bound derived in Theorem~\ref{thm:lower_bound} necessary to obtain a $(dH^2\epsilon)$-optimal policy --- i.e., our algorithm matches the minimax lower bound up to a factor $dH^2$. We highlight that it may be possible that minimax optimality may be attained by heterogeneous policies; however, as our results demonstrate, these heterogeneous policies can at most match the rate of our single policy algorithm in terms of $K$ and $P$, arguably two of the most important parameters in parallel exploration. Moreover, the extra $dH^2$ factor in our rate is benign considering that we only have a single exploration policy, and yet we avoid extra high-degree or even exponential factors on the problem parameters. 

As discussed in Section~\ref{sec:introduction}, we remark that the assumption $P=\cO(K)$ for near-optimality holds even in large-scale real-world parallel RL applications, such as AlphaZero~\citep{alphaZero:17}, which used 700,000 batches of trajectories generated using only 5,000 parallel processors.

We end the section by noting that the proposed parallel reward-free exploration algorithms are surprisingly hard to beat. In particular, we show that simple adaptations of state-of-the-art results in reward-free exploration cannot outperform our proposed methods.

{\par \textbf{Comparison to~\citet{huang2021deployment}.}} \citet{huang2021deployment} proposed a deployment efficient reward-free exploration algorithm that can be trivially adapted to the parallel setting. However, without additional assumptions on the feature representations --- assuming that the so-called reachability coefficient is sufficiently large (see Definition 4.3 in~\citet{huang2021deployment}) --- only Algorithm 1 from~\citep{huang2021deployment} can be applied, which requires $K = {\cO}(1/(P\epsilon^{c_K}))$ rounds of interactions. In other words, using the results in~\citep{huang2021deployment} directly cannot achieve the optimality obtained in our setting, even when $P = \cO(K)$. 

{\par \textbf{Comparison to~\citet{AW-YC-MS-SSD-KJ:22}.}} \citet{AW-YC-MS-SSD-KJ:22} proposed a reward-free RL algorithm for linear MDPs that achieves a $\cO(d^2H^5/\epsilon^2)$ sample complexity, which improves the dependence on $d$ (the dimensionality of the linear feature representation $\phi$) with respect to the sequential counterpart of RF-POLSVI: from $d^3$ to $d^2$. However, we notice that RF-POLSVI has a sample complexity dependence on $d^4$ coming from the overhead term, and not from the base term. Therefore, we argue that implementing a \emph{parallelized} version of the algorithm by \citet{AW-YC-MS-SSD-KJ:22} using our single policy exploration method would still incur the same dependence on $d^4$ and no reduction in $d$ would be gained. Moreover, we remark that even in the sequential case, the improvement by \citet{AW-YC-MS-SSD-KJ:22} is in $d$ only and requires a significantly more complex algorithm and representation. Instead, RF-POLSVI achieves an almost linear speedup in $P$ under mild assumptions with an algorithm that has a less complex structure.

\section{Conclusion}
We formally proved that, for various RL problems, a simple way of aggregating information collected in parallel by agents in the exploration phase results in both an almost linear speedup term benefiting from the amount of parallelization $P$, and an additional complexity term polylogarithmic on such $P$. For the reward-free setting, our method is nearly minimax optimal. Moreover, we showed the benefits of parallel exploration in a MARL context. 

Our work gives rise to a host of open questions that remain to be answered. The success of more intricate coordinated exploration has been demonstrated in empirical studies, while our simplistic approach is provably near-optimal. What are the theoretical justifications in support of these coordinated exploration strategies? Are there settings under which coordinated approaches provably outperform our simplistic approach, i.e., by better matching the minimax lower bound? What could be the communication, computation, and sample complexity trade-offs between coordinated exploration and exploration using only a single policy?

\section*{Acknowledgments}
We are grateful to the anonymous reviewers and the meta-reviewer for their time and their comments to improve our paper. This work is partially supported by NSF III 2046795, IIS 1909577, CCF 1934986, NIH 1R01MH116226-01A, NIFA award 2020-67021-32799, the Alfred P. Sloan Foundation, and Google Inc. This work is also partially supported by
the William S. Fishman Faculty Research Fund at the University of Chicago Booth School of Business.

\bibliography{Parallel_MDP}

\appendix
%
%


\section{Further Explanation of Doubling Rounds}
\label{App:explain-dr} 

There are two ways to further understand when a doubling round may occur. 
\begin{itemize}
    \item If exploration is not very efficient across episodes, then we have that all the parallel agents are exploring similar directions or regions of the action-state space across episodes --- this could happen, for example, due to optimism biasing the agents in their exploration. Then, when the stochasticity of the environment surprises the agents by leading them to explore more diverse regions, the aggregation of this information in the newly computed covariance matrix will differentiate it more than usual with respect to the covariance matrix of the previous episode, thus triggering a doubling round. 
    \item If all agents were exploring the state-action space more efficiently across episodes by having more diversity in their  exploration, we would not expect such an abrupt change between information aggregation to happen since the novelty across episodes would be at a \emph{more stable level} and the stochasticity of the environment would have less effect in driving diversity in the exploration across the agents.
\end{itemize}
In either case, it is the inefficiency of exploration across parallel agents that introduces doubling rounds.

\section{RFMG-POLSVI Algorithm: Exploration Phase}
\label{App:RFMG-POLSVI}

\begin{algorithm}[h!]
  \caption{Reward-Free Markov Game POLSVI (RFMG-POLSVI) --- Exploration phase}
  \label{alg:main_LIN_RFMG_POLSVI_exp}
\begin{algorithmic}[1]
    \STATE {\bfseries Input:} $P$, $K$, $\beta$, $\lambda$,
    \FOR{episode $k\in[K]$}
        \STATE $x_1^{k,p}\gets s_0$ for $p\in[P]$
        \STATE \# DONE BY CENTRAL SERVER:
        \STATE $Q_{H+1}^k(\cdot,\cdot,\cdot)\gets 0$
        \FOR{$h=H,\dots,1$}
            \STATE $\Lambda_{h}^k\gets\lambda I_d + \sum^P_{p=1}\sum^{k-1}_{\tau=1}\phi(x_h^{\tau,p},a_h^{\tau,p},b_h^{\tau,p})\phi(x_h^{\tau,p},a_h^{\tau,p},b_h^{\tau,p})^\top$
            \STATE 
        $u^k_h(\cdot,\cdot,\cdot)\gets\min\{\beta(\phi(\cdot,\cdot,\cdot)^\top(\Lambda_{h}^{k})^{-1}\phi(\cdot,\cdot,\cdot))^{1/2},H\}$
        \STATE $r^k_h(\cdot,\cdot,\cdot)\gets u^k_h(\cdot,\cdot,\cdot)/H$
        \STATE $w_{h}^k\gets(\Lambda_{h}^{k})^{-1} \sum^P_{p=1}\sum^{k-1}_{\tau=1}\phi(x_h^{\tau,p},a_h^{\tau,p},b_h^{\tau,p})\max_{(a,b)\in\A\times\cB} Q^{k}_{h+1}(x_{h+1}^{\tau,p},a,b)$
       \STATE $Q^k_{h}(\cdot,\cdot,\cdot)\gets \Pi_{[0,H]}[(w_{h}^k)^\top\phi(\cdot,\cdot,\cdot)+r^k_h(\cdot,\cdot,\cdot)+u^k_h(\cdot,\cdot,\cdot)]$
        \ENDFOR
        \STATE \# DONE BY EACH AGENT $p\in[P]$ IN PARALLEL:
            \FOR{$h\in[H]$}
                \STATE $(a_h^{k,p},b_h^{k,p}) \in \arg\max_{(a,b)\in\A\times\cB} Q^k_h(x_h^{k,p},a,b)$ \# GREEDY POLICY
                \STATE Observe $x_{h+1}^{k,p}$
            \ENDFOR
    \ENDFOR
%
\STATE {\bfseries Return} $\{(x^{k,p}_h,a^{k,p}_h,b^{k,p}_h)\}_{(h,k,p)\in[H]\times[K]\times[P]}$ \# COLLECTED TRAJECTORIES BY THE AGENTS
\end{algorithmic}
\end{algorithm}

\section{Useful Technical Results}

For the rest of the appendix, let $\Z_{\geq 0}$ ($\Z_{\geq 1}$) be the set of non-negative (positive) integers.

\begin{proposition}[Linear Q-function with bounded parameters; Proposition~2.3 and Lemma~B.1 in~\citep{CJ-ZY-ZW-MIJ:20}]
\label{prop:lin-Q}
Consider a linear MDP $\M$. For any policy $\pi$, there exist paremeters $w^{\pi}_h\in\R^d$, $h\in[H]$, such that $Q_h^\pi(x, a) = \langle\phi (x, a),w^{\pi}_h\rangle$ for any $(x, a) \in \S\times\A$ and $\norm{w^\pi_h} \leq 2H\sqrt{d}$.
\end{proposition}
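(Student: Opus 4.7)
The plan is to prove the claim by backward induction on $h \in [H]$, relying on the two defining linearity identities of a linear MDP (for the transition kernel $\Pe_h$ and the reward $r_h$) together with the Bellman equation $Q^\pi_h(x,a) = r_h(x,a) + (\Pe_h V^\pi_{h+1})(x,a)$. The induction is essentially just a rewriting exercise, so the only real work is identifying the correct candidate $w^\pi_h$ and then carefully controlling its Euclidean norm.

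First I would state the candidate weight vector explicitly: set
\[
w^\pi_h \;=\; \theta_h \;+\; \int_{\S} V^\pi_{h+1}(x')\,\mu_h(dx'),
\]
so that the integral is taken componentwise against the $d$ signed measures $\mu_h^{(1)},\dots,\mu_h^{(d)}$. Using the linear MDP identities, $r_h(x,a)=\langle \phi(x,a),\theta_h\rangle$ and $(\Pe_h V^\pi_{h+1})(x,a) = \int V^\pi_{h+1}(x')\langle \phi(x,a),\mu_h(dx')\rangle = \langle \phi(x,a),\int V^\pi_{h+1}(x')\mu_h(dx')\rangle$, the Bellman equation yields $Q^\pi_h(x,a) = \langle \phi(x,a), w^\pi_h\rangle$ for every $(x,a)\in\S\times\A$. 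The base case $h=H+1$ holds trivially with $w^\pi_{H+1}=0$, and the inductive step only uses that $V^\pi_{h+1}$ is a well-defined measurable function of $x'$, which follows from the induction hypothesis together with $V^\pi_{h+1}(x') = \E_{a'\sim\pi_{h+1}(x')}[Q^\pi_{h+1}(x',a')]$.

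Second I would bound $\|w^\pi_h\|$ by the triangle inequality: $\|w^\pi_h\|\le \|\theta_h\| + \bigl\|\int V^\pi_{h+1}(x')\mu_h(dx')\bigr\|$. The first summand is at most $\sqrt{d}$ by assumption. For the second, I would bring the norm inside the integral (Jensen / Minkowski) and use the uniform bound $0 \le V^\pi_{h+1}(x') \le H$, which is immediate from the reward being in $[0,1]$ and the horizon being $H$. This gives $\bigl\|\int V^\pi_{h+1}(x')\mu_h(dx')\bigr\| \le H\int_\S \|\mu_h(dx')\| \le H\sqrt{d}$ using the normalization $\|\mu_h(\S)\|\le \sqrt{d}$. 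Combining, $\|w^\pi_h\|\le \sqrt{d} + H\sqrt{d}\le 2H\sqrt{d}$ for $H\ge 1$.

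The proof is essentially book-keeping; the one subtle point is passing the norm through the vector-valued integral $\int V^\pi_{h+1}(x')\mu_h(dx')$. I would handle this by testing against an arbitrary unit vector $u\in\R^d$ and using $|\langle u,\int V^\pi_{h+1}(x')\mu_h(dx')\rangle|\le H\int |\langle u,\mu_h(dx')\rangle|\le H\,\|\mu_h(\S)\|$, after which taking the supremum over $u$ with $\|u\|=1$ recovers the stated bound. No other step should cause trouble, so this is the only place where a small amount of care is needed.
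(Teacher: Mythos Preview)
Your proposal is correct and is precisely the standard argument from \citet{CJ-ZY-ZW-MIJ:20} that the paper cites; the paper itself does not reprove the proposition but simply invokes it. Your choice of $w^\pi_h=\theta_h+\int_\S V^\pi_{h+1}(x')\,\mu_h(dx')$ and the norm bound via $\|\theta_h\|\le\sqrt{d}$ and $\|\int V^\pi_{h+1}\,d\mu_h\|\le H\|\mu_h(\S)\|\le H\sqrt{d}$ match that argument exactly.
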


Proposition~\ref{prop:lin-Q} holds virtually the same for linear Markov games.

\begin{lemma}[Bound on quadratic form]
\label{lem:basic_ineq} Consider $\Lambda_{A,B} = \lambda I_d + \sum_{a=1}^A\sum_{b=1}^B \phi_{a,b} \phi_{a,b}^\top$ where $\phi_{a,b} \in \R^d$ and $\lambda > 0$. Then,
\begin{equation*}
\sum_{a=1}^{A}\sum_{b=1}^{B} \phi_{a,b}^\top (\Lambda_{A,B})^{-1} \phi_{a,b} \leq d.
\end{equation*}
\end{lemma}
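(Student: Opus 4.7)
The plan is to reduce the double sum to a trace, then exploit the fact that $\sum_{a,b}\phi_{a,b}\phi_{a,b}^\top$ differs from $\Lambda_{A,B}$ only by the regularizer $\lambda I_d$. Concretely, for each pair $(a,b)$ I will use the cyclic property of trace to rewrite the scalar $\phi_{a,b}^\top (\Lambda_{A,B})^{-1} \phi_{a,b}$ as $\tr\!\bigl((\Lambda_{A,B})^{-1} \phi_{a,b}\phi_{a,b}^\top\bigr)$. Summing over $(a,b)$ and pulling the trace outside the (finite) sum by linearity gives
\begin{equation*}
\sum_{a=1}^{A}\sum_{b=1}^{B}\phi_{a,b}^\top (\Lambda_{A,B})^{-1}\phi_{a,b}
\;=\;\tr\!\Bigl((\Lambda_{A,B})^{-1}\sum_{a=1}^{A}\sum_{b=1}^{B}\phi_{a,b}\phi_{a,b}^\top\Bigr).
\end{equation*}

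Next I would substitute $\sum_{a,b}\phi_{a,b}\phi_{a,b}^\top = \Lambda_{A,B} - \lambda I_d$ directly from the definition of $\Lambda_{A,B}$, so that the right-hand side becomes $\tr\!\bigl((\Lambda_{A,B})^{-1}(\Lambda_{A,B}-\lambda I_d)\bigr) = \tr(I_d) - \lambda\,\tr\!\bigl((\Lambda_{A,B})^{-1}\bigr) = d - \lambda\,\tr\!\bigl((\Lambda_{A,B})^{-1}\bigr)$. Finally I would observe that $\Lambda_{A,B} \succeq \lambda I_d$ (since the sum of rank-one terms is PSD), hence $(\Lambda_{A,B})^{-1}$ is itself PSD and $\lambda\,\tr\!\bigl((\Lambda_{A,B})^{-1}\bigr) \geq 0$. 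This yields the stated bound $\leq d$.

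There is no real obstacle here: the lemma is a purely algebraic identity combined with positive semidefiniteness, and each step (trace-cyclicity, linearity, substitution, PSD-ness of the inverse) is one line. The only thing worth being careful about is confirming that $\Lambda_{A,B}$ is indeed invertible so that $(\Lambda_{A,B})^{-1}$ makes sense, but this is immediate from $\Lambda_{A,B} \succeq \lambda I_d \succ 0$ because $\lambda > 0$.
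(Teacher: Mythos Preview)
Your proposal is correct and follows essentially the same approach as the paper: both reduce the sum to $\tr\!\bigl((\Lambda_{A,B})^{-1}\sum_{a,b}\phi_{a,b}\phi_{a,b}^\top\bigr)$ via the trace trick. The only cosmetic difference is in the last step: the paper diagonalizes $\sum_{a,b}\phi_{a,b}\phi_{a,b}^\top$ and bounds $\sum_{i=1}^d \lambda_i/(\lambda_i+\lambda)\le d$, whereas you substitute $\sum_{a,b}\phi_{a,b}\phi_{a,b}^\top=\Lambda_{A,B}-\lambda I_d$ directly to get $d-\lambda\,\tr\!\bigl((\Lambda_{A,B})^{-1}\bigr)\le d$, which is the same computation without writing out the eigendecomposition.
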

\begin{proof}
Observe that $$\sum_{a=1}^{A}\sum_{b=1}^B \phi_{a,b}^\top (\Lambda_{A,B})^{-1} \phi_{a,b}
= \sum_{a=1}^{A}\sum_{b=1}^B\tr( \phi_{a,b}^\top (\Lambda_{A,B})^{-1} \phi_{a,b})
= \tr((\Lambda_{A,B})^{-1} \sum_{a=1}^{A}\sum_{b=1}^B\phi_{a,b}\phi_{a,b}^\top).$$
Given the eigenvalue decomposition  $\sum_{a=1}^A\sum_{b=1}^B\phi_{a,b}\phi_{a,b}^\top = U \diag(\lambda_1,  \ldots, \lambda_d)  U^\top$ with $\lambda_i\geq 0$, $i\in[d]$, we obtain $\Lambda_{A,B} = U \diag(\lambda_1+\lambda,  \ldots, \lambda_d+\lambda) U^\top$, and so $\tr((\Lambda_{A,B})^{-1} \sum_{a=1}^{A}\sum_{b=1}^B\phi_{a,b}\phi_{a,b}^\top)=
\sum_{i=1}^d \lambda_i/(\lambda_i + \lambda) \leq d$.
\end{proof}

We now introduce an auxiliary definition. 
\begin{definition}[Sequences $\L_P$, $\bar{\L}_P$ and their truncation]
We now define the infinite sequence $\L_P$, with integer $P\geq 1$, as 
$$
(1,1),(1,2),\dots,(1,P),(2,1),(2,2),\dots,(2,P),(3,1),\dots
$$
where the first term of the elements is an increasing sequence that takes values in $\Z_{\geq 1}$ but the second term only takes values in $[P]$ periodically. We can index sequences using $\L_P$, as for example in $\{\varphi_{(a,b)}\}_{(a,b)\in\L_P}$ where $\varphi_{(a,b)}\in \R^n$, $n\geq 1$. We denote by $\L_P(\bar{a},\bar{b})$ the finite sequence resulting from the truncation of the sequence $\L_P$ at its element $(\bar{a},\bar{b})\in\L_P$. 
Similarly, we define the infinite sequence $\bar{\L}_P$, with integer $P\geq 1$, by appending the sequence $(0,1),(0,2),\dots,(0,P)$ at the beginning of $\L_P$. 
Thus $\L_P\subsetneq\bar{\L}_P$.
For any appropriate element $(\bar{a},\bar{b})$ of $\L_P$ or $\bar{\L}_P$, we have that $(\bar{a},\bar{b})^{-k}$, $k\geq 1$, represents the $k$th previous element to $(\bar{a},\bar{b})$.
\end{definition}

Using the recently introduced definition, we present another technical lemma.
\begin{lemma}[Concentration bound for self-normalized processes]
\label{lem:self_norm_covering}
Let $B\in\Z_{\geq 1}$. 
Let $\{\F_{(a,b)}\}_{(a,b)\in\bar{\L}_B}$ be a filtration. Let $\{x_{(a,b)}\}_{(a,b)\in\L_B}$ be a stochastic process on $\S$ such that $x_{(a,b)}\in\F_{(a,b)}$, and let $\{\phi_{(a,b)}\}_{(a,b)\in\L_B}$ be an $\R^d$-valued stochastic process such that $\phi_{(a,b)} \in \F_{(a,b)^{-1}}$ and $\norm{\phi_{(a,b)}}\leq 1$. Let $\G$ be a function class of real-valued functions such that $\sup_{x\in\S} |g(x)| \leq H$ for any $g\in\G$, and with $\epsilon$-covering number $\mathcal{N}_{\epsilon}$ with respect to the distance $\mathrm{dist}(g, g') = \sup_{x\in S} |g(x) - g'(x)|$. 
Let $\Lambda_{A,B} = \lambda I_d + \sum_{a=1}^A\sum_{b=1}^B \phi_{(a,b)} \phi_{(a,b)}^\top$. Then for any $A\in\Z_{\geq 1}$ for every $A,B \in \Z_{\geq 1}$, every $g \in \G$, and any $\delta\in(0,1]$, we have that with probability at least $1-\delta$,
\begin{multline*} 
\norm{\sum_{a = 1}^A\sum_{b=1}^B \phi_{(a,b)} \{ g(x_{(a,b)}) - \E[g(x_{(a,b)})|\F_{(a,b)^{-1}}] \} }^2_{\Lambda_{A,B}^{-1}}\\
\leq 4H^2 \left[ \frac{d}{2}\log\biggl( \frac{\lambda+AB/d}{\lambda}\biggr )  + \log\frac{\mathcal{N}_{\epsilon}}{\delta}\right]  + \frac{8A^2B^2\epsilon^2}{\lambda}.
\end{multline*}
\end{lemma}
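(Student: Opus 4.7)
My plan is to follow the standard self-normalized concentration template (e.g., Lemma D.4 of Jin et al.\ 2020) in three steps: first, a martingale concentration inequality for a fixed $g \in \G$; second, an $\epsilon$-covering union bound that promotes this to a uniform statement over $\G$; third, a discretization error bound that handles the gap between an arbitrary $g$ and its nearest representative in the cover. The only genuinely new wrinkle is that ``time'' is indexed by pairs in $\L_B$ rather than by a single integer, and this will be handled by linearizing lexicographically.

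First I would linearize the double index. Since the elements of $\L_B(A,B)$ are totally ordered lexicographically, setting $n := AB$ I would re-label $\phi_{(a,b)}$, $x_{(a,b)}$, $\F_{(a,b)}$ as $\phi_i$, $x_i$, $\F_i$ for $i\in[n]$. The hypotheses then translate into $\phi_i \in \F_{i-1}$ and $x_i \in \F_i$. For a fixed $g \in \G$ define
\begin{equation*}
Z_i := g(x_i) - \E[g(x_i) \mid \F_{i-1}],
\end{equation*}
which is $\F_i$-measurable, conditionally mean zero, and bounded in absolute value by $2H$, hence conditionally $(2H)$-sub-Gaussian. Applying the standard self-normalized martingale tail inequality of Abbasi-Yadkori, P\'al and Szepesv\'ari (2011) to $\{\phi_i, Z_i\}$ gives, with probability at least $1-\delta'$,
\begin{equation*}
\norm{\sum_{i=1}^n \phi_i Z_i}_{\Lambda_{A,B}^{-1}}^2 \leq 4H^2 \log\frac{\det(\Lambda_{A,B})^{1/2} \det(\lambda I_d)^{-1/2}}{\delta'}.
\end{equation*}
The determinant-trace inequality together with $\norm{\phi_i}\leq 1$ yields $\det(\Lambda_{A,B}) \leq (\lambda + n/d)^d = (\lambda + AB/d)^d$, so the right-hand side is at most $4H^2\bigl[\tfrac{d}{2}\log((\lambda+AB/d)/\lambda) + \log(1/\delta')\bigr]$.

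Second, let $\mathcal{C}_\epsilon \subset \G$ be an $\epsilon$-cover of cardinality $\N_\epsilon$ with respect to $\mathrm{dist}$. Applying the preceding bound to each $g' \in \mathcal{C}_\epsilon$ with $\delta' := \delta/\N_\epsilon$ and union-bounding yields, with probability at least $1-\delta$, a statement that is uniform over $\mathcal{C}_\epsilon$. Third, for an arbitrary $g \in \G$ I would pick $g' \in \mathcal{C}_\epsilon$ with $\sup_{x}|g(x) - g'(x)| \leq \epsilon$, decompose the centered residual as $Z_i = Z'_i + R_i$ where $Z'_i$ corresponds to $g'$ and $R_i$ to $g - g'$, and use $\norm{u+v}^2 \leq 2\norm{u}^2 + 2\norm{v}^2$ to split the self-normalized norm. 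The $g'$ piece is controlled by the union bound from step two. For the $(g-g')$ piece, $|R_i| \leq 2\epsilon$ and $\norm{\phi_i} \leq 1$ together with $\Lambda_{A,B}^{-1} \preceq \lambda^{-1} I_d$ yield
\begin{equation*}
\norm{\sum_{i=1}^n \phi_i R_i}_{\Lambda_{A,B}^{-1}}^2 \leq \frac{1}{\lambda}\biggl(\sum_{i=1}^n 2\epsilon \norm{\phi_i}\biggr)^2 \leq \frac{4 A^2 B^2 \epsilon^2}{\lambda},
\end{equation*}
and the factor of two from the quadratic inequality produces the claimed $8A^2B^2\epsilon^2/\lambda$ overhead.

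The main obstacle is essentially bookkeeping rather than substance: verifying that measurability under the double-indexed filtration $\{\F_{(a,b)}\}_{(a,b)\in\bar{\L}_B}$ is preserved by the lexicographic linearization, so that $\{Z_i\}$ is a bona fide martingale difference sequence in a single-index filtration and the Abbasi-Yadkori inequality applies verbatim. Beyond that, no new technical ingredient is needed relative to the sequential setting; the proof is structurally identical to Lemma D.4 of Jin et al.\ (2020) with $AB$ playing the role of the single time horizon.
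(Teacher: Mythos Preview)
Your approach is essentially identical to the paper's: both decompose $g$ into a cover element $g'$ plus a remainder, use $\|u+v\|^2 \le 2\|u\|^2 + 2\|v\|^2$, apply the Abbasi--Yadkori self-normalized bound with a union bound over the cover for the $g'$ piece, and bound the remainder deterministically via $\Lambda_{A,B}^{-1}\preceq \lambda^{-1}I_d$. Your explicit lexicographic linearization of the double index is exactly the device the paper uses implicitly.

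One small arithmetic slip: you call $Z_i$ conditionally $(2H)$-sub-Gaussian and then write a $4H^2$ constant, but $2\sigma^2$ with $\sigma=2H$ gives $8H^2$, and after the factor of two from the quadratic split you would land at $16H^2$ rather than the stated $4H^2$. The fix is to invoke Hoeffding's lemma on $g(x_i)\in[-H,H]$ (range width $2H$), which makes $Z_i$ conditionally $H$-sub-Gaussian; then Abbasi--Yadkori gives $2H^2\log(\cdot)$ for a fixed $g'$, and the doubling yields the claimed $4H^2$. This is precisely what the paper does.
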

\begin{proof}
First, from our assumptions, for any $g \in \G$, there exists a $\tilde{g}$ in the $\epsilon$-covering such that
$g = \tilde{g} + \Delta_g$ with $\sup_{x\in\S} |\Delta_g(x)| \leq \epsilon$. Then,
\begin{equation}
\label{eq:conc-aux}
\begin{aligned}
&\norm{\sum_{a = 1}^A\sum_{b=1}^B \phi_{(a,b)} \{ g(x_\tau) - \E[g(x_\tau)|\F_{(a,b)^{-1}}] \}  }^2_{\Lambda_{A,B}^{-1}}\\
& \leq  2\underbrace{\norm{\sum_{a = 1}^A\sum_{b=1}^B \phi_{(a,b)} \{ \tilde{g}(x_{(a,b)}) - \E[\tilde{g}(x_{(a,b)})|\F_{(a,b)^{-1}}]\} }^2_{\Lambda_{A,B}^{-1}}}_{\textrm{(I)}}\\
&\quad+ 2\underbrace{\norm{\sum_{a = 1}^A\sum_{b=1}^B \phi_{(a,b)} \{ \Delta_g(x_{(a,b)}) - \E[\Delta_g(x_{(a,b)})|\F_{(a,b)^{-1}}] \} }^2_{\Lambda_{A,B}^{-1}}}_{\textrm{(II)}},
\end{aligned}
\end{equation}
where we used $\norm{a+b}\leq \norm{a}+\norm{b}\implies \norm{a+b}^2\leq \norm{a}^2+\norm{b}^2+2\norm{a}\norm{b}\leq 2\norm{a}^2+2\norm{b}^2$ for any $a,b\in\R^d$, and which actually holds for any weighted Euclidean norm. 

We start by analyzing the term (I) in equation~\eqref{eq:conc-aux}. Let $\varepsilon_{(a,b)}:=\tilde{g}(x_{(a,b)}) - \E[\tilde{g}(x_{(a,b)})|\F_{(a,b)^{-1}}]$. Now, we observe that 1) $\E[\varepsilon_{(a,b)}|\F_{(a,b)^{-1}}]=0$ and 2) $\varepsilon_{(a,b)}\in[-H,H]$ since $\tilde{g}(x_{(a,b)})\in[0,H]$. From these two facts we obtain that $\varepsilon_{(a,b)}|\F_{(a,b)^{-1}}$ is $H$-sub-Gaussian. Therefore we can apply the concentration bound of self-normalized processes from Theorem~1 of~\citep{YAY-DP-CS:11} along with a union bound over the $\epsilon$-covering of $\G$ to conclude that, with probability at least $1-\delta$,
\begin{multline}
\label{eq:upp-b-lem-aux1}
\textrm{(I)}=\norm{\sum_{a=1}^A\sum_{b=1}^B\phi_{(a,b)}\varepsilon_{(a,b)}}_{\Lambda_{A,B}^{-1}}^2\leq \log\left(\frac{\det(\Lambda_{A,B})^{1/2}\det(\lambda I_d)^{-1/2}}{\delta/\N_\epsilon}\right)\\
\overset{\textrm{(a)}}{\leq} 2H^2\left(
\frac{d}{2}\log\left(\frac{\lambda+AB/d}{\lambda}\right)+\log\left(\frac{\N_\epsilon}{\delta}\right)
\right),
\end{multline}
where (a) follows from $\det(\lambda I_d)=\lambda^d$ and from the determinant-trace inequality from Lemma~10 in~\citep{YAY-DP-CS:11} which let us obtain $\det(\Lambda_{A,B})\leq(\lambda+AB/d)^d$.

Now we analyze the term (II) in equation~\eqref{eq:conc-aux}. Let $\bar{\varepsilon}_{(a,b)}:=\Delta_g(x_{(a,b)}) - \E[\Delta_g(x_{(a,b)})|\F_{(a,b)^{-1}}]$. Then, 
\begin{multline*}
\norm{\sum_{a=1}^A\sum_{b=1}^B\phi_{(a,b)}\bar{\varepsilon}_{(a,b)}}\leq \sum_{a=1}^A\sum_{b=1}^B\norm{\phi_{(a,b)}\bar{\varepsilon}_{(a,b)}}\overset{\textrm{(a)}}{\leq} \sum_{a=1}^A\sum_{b=1}^B|\bar{\varepsilon}_{(a,b)}|\\
\leq \sum_{a=1}^A\sum_{b=1}^B|\Delta_g(x_{(a,b)})|+|\E[\Delta_g(x_{(a,b)})|\F_{(a,b)^{-1}}]|\leq \sum_{a=1}^A\sum_{b=1}^B 2\epsilon=2AB\epsilon,
\end{multline*}
where (a) follows from $\norm{\phi_{(a,b)}}\leq 1$. Thus, using this result, we obtain
$$
\textrm{(II)}\leq\frac{1}{\lambda}\norm{\sum_{a=1}^A\sum_{b=1}^B\phi_{(a,b)}\bar{\varepsilon}_{(a,b)}}^2\leq \frac{1}{\lambda}4A^2B^2\epsilon^2.$$

We finish the proof by multiplying by two the terms (I) and (II), and then adding them up to use them as an upper bound to~\eqref{eq:conc-aux} .
\end{proof}


\section{Proof of Theorem~\ref{thm:main-paralel}}
\label{subsec:proof_main_parallel}
For simplicity, we will use the following notation: at episode $k$, we denote $\pi^{k,p}=\{\pi^{k,p}_h\}_{h\in[H]}$ as the greedy policy induced by $\{Q_h^k\}_{h=1}^H$ as performed by agent $p\in[P]$ (line 13 of Algorithm~\ref{alg:main_LIN_UCB_LSVI}), and we let the value function $V_h^{k,p}(x_h^{k,p})= Q_h^k(x_h^{k,p},\pi^{k,p}_h(x_h^{k,p}))=\max_{a\in\A} Q_h^k(x_h^{k,p},a)$, and with some abuse of notation
$V_h^{k,p}(x)= Q_h^k(x,\pi^{k,p}_h(x))$. We also set $\phi^{k,p}_h := \phi(x^{k,p}_h, a^{k,p}_h)$.

We now bound the parameters $\{w^{k}_h\}_{h\in[H],k\in[K]}$ from POLSVI (line 8 of algorithm~\ref{alg:main_LIN_UCB_LSVI}) using Lemma~\ref{lem:basic_ineq}.

\begin{lemma}[Parameter bound for POLSVI]
\label{lem:wn_estimate}
For any $(k, h) \in[K]\times[H]$, the parameter $w^k_h$ in the POLSVI algorithm satisfies
$\norm{w^k_h}\leq(1+H) \sqrt{\frac{d(k-1)P}{\lambda}}$.
\end{lemma}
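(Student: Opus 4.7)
\textbf{Proof plan for Lemma~\ref{lem:wn_estimate}.} The strategy is the standard dual-variational bound on $\|w^k_h\|$, combining boundedness of $r+V$, Cauchy-Schwarz in the $(\Lambda_h^k)^{-1}$ inner product, and Lemma~\ref{lem:basic_ineq} to control the sum of squared $(\Lambda_h^k)^{-1}$-norms of the feature vectors.

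First, I would write
\[
\|w_h^k\| \;=\; \sup_{\|v\|=1} \bigl|v^\top w_h^k\bigr| \;=\; \sup_{\|v\|=1} \Bigl|\sum_{p=1}^P \sum_{\tau=1}^{k-1} \bigl(v^\top (\Lambda_h^k)^{-1} \phi_h^{\tau,p}\bigr)\bigl[r_h^{\tau,p}+\max_{a}Q_{h+1}^k(x_{h+1}^{\tau,p},a)\bigr]\Bigr|.
\]
Since $r_h^{\tau,p}\in[0,1]$ and $Q_{h+1}^k$ is explicitly clipped to $[0,H]$ in line 9 of Algorithm~\ref{alg:main_LIN_UCB_LSVI} (the base case $Q_{H+1}^k\equiv 0$ is clearly in $[0,H]$), the bracketed term is bounded by $1+H$ in absolute value. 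Pulling this out gives
\[
\|w_h^k\| \;\leq\; (1+H)\sup_{\|v\|=1}\sum_{p=1}^P\sum_{\tau=1}^{k-1}\bigl|v^\top (\Lambda_h^k)^{-1}\phi_h^{\tau,p}\bigr|.
\]

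Next, I would view $v^\top (\Lambda_h^k)^{-1}\phi_h^{\tau,p}$ as an inner product in the metric induced by the positive definite matrix $(\Lambda_h^k)^{-1}$ and apply Cauchy-Schwarz:
\[
\bigl|v^\top (\Lambda_h^k)^{-1}\phi_h^{\tau,p}\bigr| \;\leq\; \|v\|_{(\Lambda_h^k)^{-1}}\,\|\phi_h^{\tau,p}\|_{(\Lambda_h^k)^{-1}}.
\]
Since $\Lambda_h^k \succeq \lambda I_d$ implies $(\Lambda_h^k)^{-1}\preceq \lambda^{-1}I_d$, we get $\|v\|_{(\Lambda_h^k)^{-1}}\leq 1/\sqrt{\lambda}$ for any unit $v$. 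A second application of Cauchy-Schwarz (for sums with $(k-1)P$ terms) gives
\[
\sum_{p,\tau}\|\phi_h^{\tau,p}\|_{(\Lambda_h^k)^{-1}} \;\leq\; \sqrt{(k-1)P}\,\sqrt{\sum_{p,\tau}\|\phi_h^{\tau,p}\|_{(\Lambda_h^k)^{-1}}^2}.
\]
Finally, Lemma~\ref{lem:basic_ineq} (applied with $A=k-1$, $B=P$, and the feature vectors $\phi_h^{\tau,p}$) bounds the inner sum by $d$. Chaining these yields
\[
\|w_h^k\| \;\leq\; (1+H)\cdot\frac{1}{\sqrt{\lambda}}\cdot\sqrt{(k-1)P}\cdot\sqrt{d} \;=\; (1+H)\sqrt{\frac{d(k-1)P}{\lambda}},
\]
as claimed.

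There is no real obstacle: the only things to check carefully are that $Q_{h+1}^k$ really lies in $[0,H]$ (which is enforced by the clipping in line 9) and that Lemma~\ref{lem:basic_ineq} applies with the double index $(\tau,p)$, which matches its statement. The argument is uniform in $h$ and $k$ and does not require any probabilistic concentration.
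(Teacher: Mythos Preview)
Your proposal is correct and follows essentially the same approach as the paper: bound the target scalar by $1+H$, apply Cauchy--Schwarz in the $(\Lambda_h^k)^{-1}$-inner product together with a sum-level Cauchy--Schwarz, then invoke Lemma~\ref{lem:basic_ineq} and $(\Lambda_h^k)^{-1}\preceq\lambda^{-1}I_d$. The only cosmetic difference is the order in which you use $\|v\|_{(\Lambda_h^k)^{-1}}\leq 1/\sqrt{\lambda}$ versus the paper, which keeps this term inside the sum until the last step.
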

\begin{proof}
For any vector $v \in \R^d$,
\begin{align*}
|v^\top w^k_h| & = |v^\top (\Lambda^k_h)^{-1} \sum_{\tau=1}^{k-1}\sum_{p=1}^P \phi^{\tau,p}_h [r^{\tau,p}_h + \max_{a\in\A} Q_{h+1}^k(x^{\tau,p}_{h+1}, a)]|\\
& \overset{\textrm{(a)}}{\leq}(1+H)\sum_{\tau = 1}^{k-1}\sum_{p=1}^P  |v^\top (\Lambda^k_h)^{-1} \phi^{\tau,p}_h|\\
&\overset{(b)}{\leq} (1+H)\sqrt{ \bigg[ \sum_{\tau = 1}^{k-1}\sum_{p=1}^P  v^\top (\Lambda^k_h)^{-1}v\bigg]  \biggl [ \sum_{\tau = 1}^{k-1}\sum_{p=1}^P  (\phi^{\tau,p}_h)^\top (\Lambda^k_h)^{-1}\phi^{\tau,p}_h\bigg] }\\
& \overset{\textrm{(c)}}{\leq} (1+H)\sqrt{d}\sqrt{\sum_{\tau = 1}^{k-1}\sum_{p=1}^P  v^\top (\Lambda^k_h)^{-1}v}\\
&\overset{\textrm{(d)}}{\leq}(1+H)\sqrt{\frac{d(k-1)P}{\lambda}}\norm{v}, %
\end{align*}
where (a) follows from the bounded rewards and $Q^k_{h+1}(\cdot,\cdot)\leq H$; (b) from applying Cauchy-Schwarz twice as in the following series of inequalities: given $q = (q_1,\dots,q_m)$ and $q = (p_1,\dots,p_m)$ where $q_i$ and $p_i$ are vectors of arbitrary dimension we have $\sum^m_{i=1}|q_i^\top p_i|\leq \sum^m_{i=1}\norm{q_i}\norm{p_i}\leq \sqrt{\sum^m_{i=1}\norm{q_i}}\sqrt{\sum^m_{i=1}\norm{p_i}}$ ; (c) follows from Lemma~\ref{lem:basic_ineq}; and (d) from $(\Lambda_h^k)^{-1}\preceq \lambda^{-1}I_d$. The proof concludes by considering that $\norm{w^k_h} = \max_{v:\norm{v} = 1} |v^\top w^k_h|$.
\end{proof}

Now we use Lemma~\ref{lem:self_norm_covering} to prove a useful concentration bound for POLSVI.

\begin{lemma}[Concentration bound on value functions for POLSVI] \label{lem:stochastic_term}
Consider the setting of Theorem~\ref{thm:main-paralel}. There exists an absolute constant $C$ independent of $c_{\beta}$ such that for any fixed $\delta\in(0, 1)$, the following event $\mathcal{E}$ holds with probability at least $1-\delta$,
\begin{multline*}
\forall (k, h)\in [K]\times [H]: \quad \norm{\sum_{\tau = 1}^{k-1}\sum_{p=1}^P \phi^{\tau,p}_h [V^{k}_{h+1}(x^{\tau,p}_{h+1}) - \Pe_h V^{k}_{h+1}(x_h^{\tau,p}, a_h^{\tau,p})]}_{(\Lambda^k_h)^{-1}}\\
\leq CdH\sqrt{\log [(c_\beta+1)dKHP/\delta]}. 
\end{multline*}
\end{lemma}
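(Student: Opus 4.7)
The plan is to invoke the self-normalized bound of Lemma~\ref{lem:self_norm_covering}, but since the function $V^{k}_{h+1}$ depends on the trajectories collected across all of episodes $1,\dots,k-1$ (and is hence not adapted to the filtration at the earlier summation indices $\tau<k-1$), I must first exhibit a function class that contains every $V^{k}_{h+1}$ with a controlled covering number and then union-bound over that cover. By line~9 of Algorithm~\ref{alg:main_LIN_UCB_LSVI} and Lemma~\ref{lem:wn_estimate}, each $V^{k}_{h+1}$ lies in the parametric class $\V$ of functions of the form $V(x)=\max_{a\in\A}\min\{w^{\top}\phi(x,a)+\sqrt{\phi(x,a)^{\top}A\,\phi(x,a)},H\}$, with $\|w\|\leq L:=(1+H)\sqrt{dKP/\lambda}$ and $0\preceq A\preceq(\beta^{2}/\lambda)I_{d}$ (using $A=\beta^{2}\Lambda^{-1}$ together with $\Lambda\succeq\lambda I_{d}$).

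Next I would bound the $\epsilon$-covering number $\N_{\epsilon}$ of $\V$ under the sup-metric by discretizing $w\in\R^{d}$ at Euclidean scale proportional to $\epsilon$ and $A\in\R^{d\times d}$ at Frobenius scale proportional to $\epsilon^{2}/\beta^{2}$, then propagating these perturbations through the $\max_{a}$, $\min\{\cdot,H\}$, and $\sqrt{\cdot}$ operations in the style of Lemma~D.6 of~\citep{CJ-ZY-ZW-MIJ:20}; this yields $\log\N_{\epsilon}\leq d\log(1+4L/\epsilon)+d^{2}\log(1+8\sqrt{d}\beta^{2}/(\lambda\epsilon^{2}))$. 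The most delicate bookkeeping here is propagating the $A$-perturbation through $\sqrt{\cdot}$, which uses $|\sqrt{u}-\sqrt{v}|\leq\sqrt{|u-v|}$ and forces the squaring inside the second log term.

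I would then invoke Lemma~\ref{lem:self_norm_covering} with the identifications $A=k-1$ and $B=P$, indexing summands by $(\tau,p)\in\L_{P}$ in natural agent-within-episode order so that $\F_{(\tau,p)^{-1}}$ contains every previously collected trajectory together with the shared policy used at episode $\tau$. With this filtration $\phi^{\tau,p}_{h}$ is predictable and, for each \emph{fixed} $g\in\V$, $\{g(x^{\tau,p}_{h+1})-\Pe_{h}g(x^{\tau,p}_{h},a^{\tau,p}_{h})\}$ is a conditional martingale-difference sequence bounded by $H$, so the hypotheses of the lemma hold. Since the lemma's conclusion is already uniform over $g\in\V$ (the $\epsilon$-covering union bound is built into its proof), I only need an external union bound over $(k,h)\in[K]\times[H]$ with confidence parameter $\delta/(KH)$, producing a uniform high-probability bound of the form $4H^{2}\bigl[\tfrac{d}{2}\log(1+KP/(d\lambda))+\log(\N_{\epsilon}KH/\delta)\bigr]+8(KP)^{2}\epsilon^{2}/\lambda$ on the squared self-normalized norm in the statement.

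Finally, choosing $\epsilon=dH/(KP)$ renders the residual term $O(d^{2}H^{2})$ and, after substituting $\lambda=1$ and $\beta=c_{\beta}dH\sqrt{\iota}$, sends the dominant term to $O(d^{2}H^{2}\log[(c_{\beta}+1)dKHP/\delta])$; taking square roots recovers the advertised $CdH\sqrt{\log[(c_{\beta}+1)dKHP/\delta]}$ for a universal constant $C$. Since $c_{\beta}$ enters only through $\beta$ inside the covering number and hence only inside a logarithm, $C$ is independent of $c_{\beta}$, as required. The main obstacle throughout is the non-adaptivity of $V^{k}_{h+1}$ with respect to the inner summation indices; the covering-plus-union-bound device handles it at the price of only a $d^{2}$ inflation inside the log, which is harmless after taking the square root.
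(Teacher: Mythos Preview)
Your proposal is correct and follows essentially the same route as the paper's own proof: identify the parametric class $\V$ containing each $V^{k}_{h+1}$ via Lemma~\ref{lem:wn_estimate} and line~9 of Algorithm~\ref{alg:main_LIN_UCB_LSVI}, bound its covering number by the $d\log(1+4L/\epsilon)+d^{2}\log(1+8\sqrt{d}\beta^{2}/(\lambda\epsilon^{2}))$ estimate in the style of Lemma~D.6 of~\citep{CJ-ZY-ZW-MIJ:20}, apply Lemma~\ref{lem:self_norm_covering}, and then take $\epsilon=dH/(KP)$ to balance the residual $8(KP)^{2}\epsilon^{2}/\lambda$ against the main $d^{2}H^{2}$ scale. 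The only cosmetic difference is that you perform an explicit union bound over $(k,h)\in[K]\times[H]$ with confidence $\delta/(KH)$, whereas the paper relies on the time-uniformity already built into the self-normalized bound of~\citep{YAY-DP-CS:11} for the $k$-index and leaves the $h$-union implicit; either way the extra $\log(KH)$ is absorbed into $\log((c_{\beta}+1)dKHP/\delta)$ and the resulting absolute constant $C$ remains independent of $c_{\beta}$.
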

\begin{proof}
Define the function class 
\begin{equation}
\label{eq:function_class}
\mathcal{V} = \left\{V:\S\to\R\;\Big|\; V(\cdot)=\min\left\{\max_{a\in\A} \bar{w}^\top\phi(\cdot, a) + \beta \sqrt{\phi(\cdot, a)^\top\bar{\Lambda}^{-1} \phi(\cdot, a)}, H \right\}\right\},
\end{equation}
where $\norm{\bar{w}}\leq L$, the minimum eigenvalue of $\bar{\Lambda}$ is greater or equal than $\lambda$, and $\norm{\phi(x, a)}\leq 1$ for all $(x,a)\in\S\times\A$. Let $\mathcal{N}_{\epsilon}$ be the $\epsilon$-covering number of $\mathcal{V}$ with respect to the distance $\operatorname{dist}(V, V') = \sup_{x\in\S} |V(x) - V'(x)|$. Then, we can use Lemma~D.6 of~\citep{CJ-ZY-ZW-MIJ:20} to obtain the bound
\begin{equation}
\label{eq:cover-aux_0}
\log \mathcal{N}_{\epsilon} \le d  \log (1+ 4L / \epsilon ) + d^2 \log \bigl [ 1 +  8 d^{1/2} \beta^2  / (\lambda\epsilon^2)  \bigr ].
\end{equation}

Now, let us go back to the POLSVI algorithm. We obtain that, with probability at least $1-\delta$, $\delta\in(0,1)$,
\begin{equation}
    \label{eq:bound_aux_long}
\begin{aligned}
&\norm{\sum_{\tau = 1}^{k-1}\sum_{p=1}^P \phi^{\tau,p}_h [V^{k}_{h+1}(x^{\tau,p}_{h+1}) - \Pe_h V^{k}_{h+1}(x_h^{\tau,p}, a_h^{\tau,p})]}_{(\Lambda^k_h)^{-1}}^2\\
&\overset{\textrm{(a)}}{\leq} 4H^2 \left[ \frac{d}{2}\log\biggl( \frac{\lambda+(k-1)P/d}{\lambda}\biggr )  + \log\N_\epsilon + \log\frac{1}{\delta}
\right]  + \frac{8(k-1)^2P^2\epsilon^2}{\lambda}\\
&\overset{(b)}{\leq} 4H^2 \left[ \frac{d}{2}\log\biggl( \frac{\lambda+(k-1)P/d}{\lambda}\biggr )  
+ d\log \left(1+ \frac{4(1+H) \sqrt{d(k-1)P}}{\epsilon\sqrt{\lambda}}\right) \right. \\
&\quad\left. + d^2 \log\left( 1 + \frac{8 d^{1/2}\beta^2}{\lambda\epsilon^2}\right) + \log\frac{1}{\delta}
\right]  + \frac{8(k-1)^2P^2\epsilon^2}{\lambda}
\end{aligned}
\end{equation}
where (a) is a direct application of Lemma~\ref{lem:self_norm_covering}; and (b) follows from the realization that from lines 9 and 13 in algorithm~\ref{alg:main_LIN_UCB_LSVI}, $V^k_{h+1}(\cdot)\in\mathcal{V}$ and so has covering number upper bounded as in~\eqref{eq:cover-aux_0} with $L=(1+H) \sqrt{\frac{d(k-1)P}{\lambda}}$ by using the bound from Lemma~\ref{lem:wn_estimate}.

Recalling that $\lambda = 1$ and $\beta=c_\beta dH\iota$ with $\iota=\log(dKHP/\delta)$ in the setting of Theorem~\ref{thm:main-paralel}, we claim that, after setting $\epsilon = \frac{dH}{KP}$ in our previous equation, there exists an absolute constant $C > 0$ independent of $c_\beta$ such that 
\begin{equation}
\label{eq:final-bound-1}
\norm{\sum_{\tau = 1}^{k-1}\sum_{p=1}^P \phi^{\tau,p}_h [V^{k}_{h+1}(x^{\tau,p}_{h+1}) - \Pe_h V^{k}_{h+1}(x_h^{\tau,p}, a_h^{\tau,p})]}_{(\Lambda^k_h)^{-1}}^2 \leq C d^2  H^2 \log ((c_\beta+1)dKHP/
\delta).
\end{equation}
Proving this claim would conclude the proof.

We first introduce a couple of useful results:
\begin{align}
\label{eq:iota_bound}
&\iota^2=\log\left(\frac{dKHP}{\delta}\right)\geq \log(dKHP)\geq \log(2\cdot 2)=\log(4)>1,\\ 
\label{eq:upp_low_bound}
& \log \left(\frac{(c_\beta+1)dKHP}{\delta}\right)= 
\log(c_\beta+1)+\iota\geq \iota>1.
\end{align}
Now we start by replacing $\lambda =1$ in the right-hand side of~\eqref{eq:bound_aux_long},
\begin{align*}
&4H^2 \left[ \frac{d}{2}\log\left( 1+\frac{(k-1)P}{d}\right)  
+ d\log \left(1+ \frac{4(1+H) \sqrt{d(k-1)P}}{\epsilon}\right) \right.\\
&\quad \left. + d^2 \log\left(\frac{1}{\delta}\left( 1 + \frac{8 d^{1/2}\beta^2}{\epsilon^2}\right)\right) \right]  + 8(k-1)^2P^2\epsilon^2\\
&\leq 4H^2d^2 \left[\log\left( 1+\frac{(k-1)P}{d}\right)  
+ \log \left(1+ \frac{4(1+H) \sqrt{d(k-1)P}}{\epsilon}\right) \right.\\
&\quad \left. + \log\left(\frac{1}{\delta}\left( 1 + \frac{8 d^{1/2}\beta^2}{\epsilon^2}\right)\right) \right]  + 8(k-1)^2P^2\epsilon^2.
\end{align*}
Now we replace $\epsilon=\frac{dH}{KP}$ in the previous expression,
\begin{align*}
&4d^2H^2\left[\log\left( 1+\frac{(k-1)P}{d}\right)  
+ \log \left(1+ \frac{4(1+H) (k-1)^{1/2}KP^{3/2}d^{1/2}}{dH}\right) \right.\\
& \quad \left. + \log\left(\frac{1}{\delta}\left( 1 + \frac{8 d^{1/2}\beta^2 K^2P^2}{d^2H^2}\right)\right) \right] + 8d^2H^2\frac{(k-1)^2}{K^2}\\
&= 4d^2H^2\left[\log\left( 1+\frac{(k-1)P}{d}\right)  
+ \log \left(1+ \frac{4(1+H) (k-1)^{1/2}KP^{3/2}}{d^{1/2}H}\right) \right.\\
& \quad \left. + \log\left(\frac{1}{\delta}\left( 1 + \frac{8 \beta^2K^2P^2}{d^{3/2}H^2}\right)\right) \right] + 8d^2H^2\frac{(k-1)^2}{K^2}\\
&\leq 4d^2H^2\left[\log\left( 1+\frac{KP}{d}\right)  
+ \log \left(1+ \frac{4\left(1+\frac{1}{H}\right) K^{3/2}P^{3/2}}{d^{1/2}}\right) + \log\left(\frac{1}{\delta}\left( 1 + \frac{8 \beta^2K^2P^2}{d^{3/2}H^2}\right)\right) \right]\\
&\quad +8d^2H^2.\\
&\leq 4d^2H^2\left[\log\left( 1+\frac{KP}{d}\right)  
+ \log \left(1+ \frac{8K^{3/2}P^{3/2}}{d^{1/2}}\right) + \log\left(\frac{1}{\delta}\left( 1 + \frac{8 \beta^2K^2P^2}{d^{3/2}H^2}\right)\right) \right]\\
&\quad +8d^2H^2.
\end{align*}
Now we replace $\beta=c_\beta dH\sqrt{\iota}$ in the previous expression,
\begin{equation}
\label{eq:to_upper_bound}
\begin{aligned}
&4d^2H^2\left[\log\left( 1+\frac{KP}{d}\right)  
+ \log \left(1+ \frac{8K^{3/2}P^{3/2}}{d^{1/2}}\right) + \log\left(\frac{1}{\delta}\left( 1 + \frac{8 c_\beta^2d^2H^2\iota K^2P^2}{d^{3/2}H^2}\right)\right) \right]\\
&\quad +8d^2H^2\\
&=4d^2H^2\left[\log\left( 1+\frac{KP}{d}\right)  
+ \log \left(1+ \frac{8K^{3/2}P^{3/2}}{d^{1/2}}\right) + \log\left(\frac{1}{\delta}\left( 1 + 8 c_\beta^2d^{1/2}\iota K^2P^2\right)\right) \right]\\
&\quad +8d^2H^2\\
&\leq \underbrace{8d^2H^2\log \left(1+ \frac{8K^{3/2}P^{3/2}}{d^{1/2}}\right)}_{\textrm{(I)}} + 
\underbrace{4d^2H^2\log\left(\frac{1}{\delta}\left( 1 + 8 c_\beta^2d^{1/2}\iota K^2P^2\right)\right)}_{\textrm{(II)}}\\
&\quad 
+8d^2H^2\log\left(\frac{(c_\beta+1)dKHP}{\delta}\right)
\end{aligned}
\end{equation}
where the inequality has made use of~\eqref{eq:upp_low_bound}. We now upper bound the terms highlighted in~\eqref{eq:to_upper_bound}.
Then, 
\begin{align*}
\textrm{(I)}&\leq 8d^2H^2\log \left(1+ 8K^{3/2}P^{3/2}\right)\leq 8d^2H^2\log \left(9K^2P^2\right)\\
&\leq 8d^2H^2\log \left(9(1+c_\beta)^2(dKPH)^2\right)\\
&\overset{\textrm{(a)}}{\leq} 8d^2H^2\log \left(\frac{(1+c_\beta)^2(dKPH)^2}{\delta^2}\right)+8d^2H^2\log(9)\log \left(\frac{(c_\beta+1)dKHP}{\delta}\right)\\
&=(16+8\log(9))d^2H^2\log \left(\frac{(c_\beta+1)dKHP}{\delta}\right),
\end{align*}
where (a) follows from~\eqref{eq:upp_low_bound}. 
For the other term,
\begin{align*}
\textrm{(II)}
&\overset{\textrm{(a)}}{\leq} 
4d^2H^2\log\left(\frac{8(c_\beta+1)^2\iota(dKHP)^2}{\delta}\right)\\
&\overset{(b)}{\leq} 
4d^2H^2\log\left(\frac{(c_\beta+1)^2\iota(dKHP)^2}{\delta^2}\right)+4d^2H^2\log(8)\\
&=
4d^2H^2\log\left(\frac{(c_\beta+1)^2(dKHP)^2}{\delta^2}\right)+4d^2H^2\log(\iota)+4d^2H^2\log(8)\\
&\overset{\textrm{(c)}}{\leq} 8d^2H^2\log\left(\frac{(c_\beta+1)dKHP}{\delta}\right)+4d^2H^2\iota+4d^2H^2\log(8)\\
&\overset{\textrm{(d)}}{\leq} (12+4\log(8))d^2H^2\log\left(\frac{(c_\beta+1)dKHP}{\delta}\right)
\end{align*}
where (a) follows from $c_\beta>0$, (b) from $\delta^2<\delta$, (c) from $\log(\iota)<\iota$ (since $\iota>1$ from~\eqref{eq:iota_bound}), and (d) from $\iota\leq \log\left(\frac{(c_\beta+1)dKHP}{\delta}\right)$ and from~\eqref{eq:upp_low_bound}.

Now, joining the upper bounds for $(I)$ and $(II)$ in~\eqref{eq:to_upper_bound}, we finally obtain
\begin{multline*}
\norm{\sum_{\tau = 1}^{k-1}\sum_{p=1}^P \phi^{\tau,p}_h [V^{k}_{h+1}(x^{\tau,p}_{h+1}) - \Pe_h V^{k}_{h+1}(x_h^{\tau,p}, a_h^{\tau,p})]}_{(\Lambda^k_h)^{-1}}^2\\ \leq 
(36+8\log(9)+4\log(8))d^2H^2 \log ((c_\beta+1)dKHP/
\delta)
\end{multline*}
which proves the claim and thus the proof.
\end{proof}

The previous lemma will be used to upper bound the difference between the estimated Q-function by POLSVI before adding the optimism bonus (see line 9 in algorithm~\ref{alg:main_LIN_UCB_LSVI}) and the Q-function for any fixed policy. The optimism bonus will play an important role in such an upper bound.

\begin{lemma}[Bounded difference between the estimated Q-function before optimism and an arbitrary Q-function]
\label{lem:basic_relation} Consider the setting of Theorem~\ref{thm:main-paralel}. There exists an absolute constant $c_\beta>0$ such that for $\beta = c_\beta dH\sqrt{\iota}$ with $\iota = \log (dKHP/\delta)$ and
any fixed policy $\bar{\pi}$, given the event $\mathcal{E}$ defined in Lemma \ref{lem:stochastic_term}, we have for all $(x, a, h, k) \in \S\times\A\times[H]\times[K]$ that
\begin{equation*}
\langle\phi(x, a), w^k_h\rangle - Q_h^{\bar{\pi}}(x, a)  =  \Pe_h (V^{k}_{h+1} - V^{\bar{\pi}}_{h+1})(x, a) + \Delta^k_h(x, a),
\end{equation*}
for some $\Delta^k_h(x, a)$ such that $|\Delta^k_h(x, a)| \leq \beta \sqrt{\phi(x, a)^\top (\Lambda^k_h)^{-1}  \phi(x, a)}$.
\end{lemma}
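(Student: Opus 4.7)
The plan is to decompose $\langle \phi(x,a), w^k_h\rangle - Q^{\bar\pi}_h(x,a)$ by exploiting the linear MDP structure together with the explicit update formula for $w^k_h$ given in line 8 of Algorithm~\ref{alg:main_LIN_UCB_LSVI}. First I would use the linear MDP assumption to write $Q^{\bar\pi}_h(x,a) = r_h(x,a) + \Pe_h V^{\bar\pi}_{h+1}(x,a) = \langle \phi(x,a), w^{\bar\pi}_h\rangle$ with $w^{\bar\pi}_h := \theta_h + \int V^{\bar\pi}_{h+1}(x')\, d\mu_h(x')$, and analogously set $w^\star_h := \theta_h + \int V^k_{h+1}(x')\, d\mu_h(x')$ so that $r^{\tau,p}_h + \Pe_h V^k_{h+1}(x^{\tau,p}_h, a^{\tau,p}_h) = \langle \phi^{\tau,p}_h, w^\star_h\rangle$. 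Substituting this identity into line 8 and using $\sum_{\tau,p}\phi^{\tau,p}_h(\phi^{\tau,p}_h)^\top = \Lambda^k_h - \lambda I_d$, I obtain
\[
w^k_h = w^\star_h - \lambda (\Lambda^k_h)^{-1} w^\star_h + (\Lambda^k_h)^{-1} \sum_{\tau,p} \phi^{\tau,p}_h \bigl[V^k_{h+1}(x^{\tau,p}_{h+1}) - \Pe_h V^k_{h+1}(x^{\tau,p}_h, a^{\tau,p}_h)\bigr].
\]

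Taking the inner product with $\phi(x,a)$ and subtracting $Q^{\bar\pi}_h(x,a)$, the $\langle \phi(x,a), w^\star_h - w^{\bar\pi}_h\rangle$ piece collapses to $\Pe_h(V^k_{h+1} - V^{\bar\pi}_{h+1})(x,a)$ because the rewards cancel and the measures act linearly on the value functions. The remaining two terms form $\Delta^k_h(x,a)$: (i) a regularization term $-\lambda\,\phi(x,a)^\top(\Lambda^k_h)^{-1} w^\star_h$, and (ii) the self-normalized Bellman-noise term $\phi(x,a)^\top (\Lambda^k_h)^{-1}\sum_{\tau,p}\phi^{\tau,p}_h[V^k_{h+1}(x^{\tau,p}_{h+1})-\Pe_h V^k_{h+1}(x^{\tau,p}_h, a^{\tau,p}_h)]$.

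To bound $|\Delta^k_h(x,a)|$ by $\beta\,\|\phi(x,a)\|_{(\Lambda^k_h)^{-1}}$ I would apply the weighted Cauchy--Schwarz inequality $|u^\top M^{-1} v| \leq \|u\|_{M^{-1}}\|v\|_{M^{-1}}$ to each term. For (i), I would use $\|w^\star_h\|_{(\Lambda^k_h)^{-1}} \leq \|w^\star_h\|/\sqrt{\lambda}$ together with the linear-MDP norm bounds $\|\theta_h\|, \|\mu_h(\S)\|\leq\sqrt{d}$ and the clipped estimate $V^k_{h+1}\leq H$, yielding a contribution of order $\sqrt{\lambda d}\,H\,\|\phi(x,a)\|_{(\Lambda^k_h)^{-1}}$. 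For (ii), I would invoke Lemma~\ref{lem:stochastic_term} on the event $\mathcal{E}$ to bound the weighted norm of the stochastic sum by $C\, dH\,\sqrt{\log[(c_\beta+1)dKHP/\delta]}$, producing a contribution of the same form scaled by this factor.

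The main obstacle is the self-referential choice of $c_\beta$: the bound from Lemma~\ref{lem:stochastic_term} depends on $c_\beta$ through $\log(c_\beta+1)+\iota$, while $\beta = c_\beta dH\sqrt{\iota}$ must dominate the sum of both error contributions for \emph{every} problem parameter. I would resolve this by noting that since $\iota \geq \log 4 > 1$, we have $\log(c_\beta+1)+\iota\leq \bigl(1+\log(c_\beta+1)/\log 4\bigr)\,\iota$, which together with $\lambda=1$ and $d\geq 2$ reduces the requirement to an inequality of the form $c_\beta \geq a_1 + a_2\sqrt{1+\log(c_\beta+1)}$ for absolute constants $a_1,a_2$. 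Since the right-hand side grows only logarithmically in $c_\beta$ while the left-hand side is linear, such an absolute constant $c_\beta$ exists, completing the proof.
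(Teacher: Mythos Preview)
Your proposal is correct and follows essentially the same approach as the paper's proof: both exploit the linear MDP structure to decompose $\langle\phi(x,a),w^k_h\rangle - Q^{\bar\pi}_h(x,a)$ into the desired $\Pe_h(V^k_{h+1}-V^{\bar\pi}_{h+1})(x,a)$ plus a regularization piece and the self-normalized noise term, bounding the latter two by weighted Cauchy--Schwarz and Lemma~\ref{lem:stochastic_term}. Your version is slightly more streamlined---by introducing $w^\star_h$ you collapse the paper's two separate regularization terms $-\lambda(\Lambda^k_h)^{-1}w^{\bar\pi}_h$ and $-\lambda(\Lambda^k_h)^{-1}\!\int(V^k_{h+1}-V^{\bar\pi}_{h+1})\,d\mu_h$ into the single term $-\lambda(\Lambda^k_h)^{-1}w^\star_h$, and your fixed-point argument for $c_\beta$ differs cosmetically from the paper's quadratic-inequality approach---but the substance is the same.
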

\begin{proof}
For any $k\in[K]$, 
\begin{align*}
w^k_h -  w^{\bar{\pi}}_h
&= (\Lambda^k_h)^{-1} \sum_{\tau = 1}^{k-1}\sum^P_{p=1} \phi^{\tau,p}_h (r^{\tau,p}_h + V^{k}_{h+1}(x^{\tau,p}_{h+1}))- w^{\bar{\pi}}_h  \\
&\overset{\textrm{(a)}}{=} (\Lambda^k_h)^{-1} \sum_{\tau = 1}^{k-1}\sum^P_{p=1} \phi^{\tau,p}_h ({\phi^{\tau,p}_h}^\top w^{\bar{\pi}}_h - \Pe_h V^{\bar{\pi}}_{h+1}(x_h^{\tau,p}, a_h^{\tau,p}) + V^{k}_{h+1}(x^{\tau,p}_{h+1}))- w^{\bar{\pi}}_h  \\
&= (\Lambda^k_h)^{-1} \left(
\left(
\sum_{\tau = 1}^{k-1}\sum^P_{p=1}
\phi^{\tau,p}_h(\phi^{\tau,p}_h)^\top
-\Lambda_h^k\right)w^{\bar{\pi}}_h \right. \\
&\quad \left.  
 + \sum_{\tau = 1}^{k-1}\sum^P_{p=1} \phi^{\tau,p}_h \bigl (V^{k}_{h+1}(x^{\tau,p}_{h+1}) - \Pe_h V^{\bar{\pi}}_{h+1}(x_h^{\tau,p}, a_h^{\tau,p}) \bigr )\right) \\
&\overset{(b)}{=} (\Lambda^k_h)^{-1} \left(-\lambda w^{\bar{\pi}}_h + \sum_{\tau = 1}^{k-1}\sum^P_{p=1} \phi^{\tau,p}_h (V^{k}_{h+1}(x^{\tau,p}_{h+1}) - \Pe_h V^{\bar{\pi}}_{h+1}(x_h^{\tau,p}, a_h^{\tau,p}))\right) \\
&= \underbrace{-\lambda (\Lambda^k_h)^{-1} w^{\bar{\pi}}_h}_{\textrm{(I)}} + 
\underbrace{(\Lambda^k_h)^{-1} \sum_{\tau = 1}^{k-1}\sum_{p =1}^P \phi^{\tau,p}_h (V^{k}_{h+1}(x^{\tau,p}_{h+1}) - \Pe_h V^{k}_{h+1}(x_h^{\tau,p}, a_h^{\tau,p}))}_{\textrm{(II)}} \\
&\quad + \underbrace{(\Lambda^k_h)^{-1}\sum_{\tau = 1}^{k-1}\sum_{p =1}^P \phi^{\tau,p}_h \Pe_h (V^{k}_{h+1} - V^{\bar{\pi}}_{h+1})(x_h^{\tau,p}, a_h^{\tau,p})}_{\textrm{(III)}}. %
\end{align*}
where (a) follows from the fact that  for any $(x, a, h) \in \S \times \A \times [H]$, $Q^{\bar{\pi}}_h(x, a) :=  \langle \phi(x, a), w^{\bar{\pi}}_h\rangle = (r_h + \Pe_h V^{\bar{\pi}}_{h+1})(x, a) 
$ for some $w^{\bar{\pi}}_h\in\R^d$ (this follows from Proposition~\ref{prop:lin-Q} and the Bellman equation); and (b) follows from the definition of $\Lambda_h^k$ in the POLSVI algorithm. Since $\langle\phi(x, a), w^k_h\rangle - Q_h^{\bar{\pi}}(x, a)=\langle\phi(x, a), w^k_h-w^{\bar{\pi}}_h\rangle$ for any $(x,a)\in\S\times\A$, then we look to bound the inner product of each of the terms (I) -- (III) with the term $\phi(x, a)$.

We first analyze the term (I),
\begin{multline*}
|\langle\phi(x,a),\textrm{(I)}\rangle|=|\langle \phi(x, a),\lambda (\Lambda^k_h)^{-1} w^{\bar{\pi}}_h\rangle| = |\lambda \langle (\Lambda^k_h)^{-1/2}\phi(x, a), (\Lambda^k_h)^{-1/2} w^{\bar{\pi}}_h\rangle|\\
\leq \lambda \norm{w_h^{\bar{\pi}}}_{ (\Lambda^k_h)^{-1}} \sqrt{\phi(x, a)^\top (\Lambda^k_h)^{-1}  \phi(x, a)}
\leq \sqrt{\lambda} \norm{w_h^{\bar{\pi}}} \sqrt{\phi(x, a)^\top (\Lambda^k_h)^{-1}  \phi(x, a)}
\end{multline*}
where the last inequality follows from $\norm{\,\cdot\,}_{(\Lambda_h^k)^{-1}}\leq \frac{1}{\sqrt{\lambda}}\norm{\,\cdot\,}$.

For the term (II), since the event $\mathcal{E}$ from Lemma~\ref{lem:stochastic_term} is given and $\lambda=1$, we directly obtain
\begin{multline*}
|\langle\phi(x,a),\textrm{(II)}\rangle|=\left|\left\langle \phi(x, a), (\Lambda^k_h)^{-1} \sum_{\tau = 1}^{k-1}\sum_{p =1}^P \phi^{\tau,p}_h (V^{k}_{h+1}(x^{\tau,p}_{h+1}) - \Pe_h V^{k}_{h+1}(x_h^{\tau,p}, a_h^{\tau,p}))\right\rangle\right|
\\
\leq
\norm{\sum_{\tau = 1}^{k-1}\sum_{p =1}^P \phi^{\tau,p}_h (V^{k}_{h+1}(x^{\tau,p}_{h+1}) - \Pe_h V^{k}_{h+1}(x_h^{\tau,p}, a_h^{\tau,p}))}_{(\Lambda^k_h)^{-1}}\norm{\phi(x,a)}_{(\Lambda^k_h)^{-1}}\\
\leq 
C dH\sqrt{\log ((c_\beta+1)dKHP/\delta)} \sqrt{\phi(x, a)^\top (\Lambda^k_h)^{-1}  \phi(x, a)} 
\end{multline*}
where $C$ is an absolute constant independent of $c_\beta>0$.

For the term (III),
\begin{align*}
\langle\phi(x,a),\textrm{(III)}\rangle&=\left \langle \phi(x, a), (\Lambda^k_h)^{-1}\sum_{\tau = 1}^{k-1}\sum_{p =1}^P \phi^{\tau,p}_h \Pe_h (V^{k}_{h+1} - V^{\bar{\pi}}_{h+1})(x_h^{\tau,p}, a_h^{\tau,p}) \right \rangle\\
&= \bigg \langle \phi(x, a), (\Lambda^k_h)^{-1}\sum_{\tau = 1}^{k-1}\sum_{p =1}^P \phi^{\tau,p}_h (\phi^{\tau,p}_h)^\top \int_{\S} (V^{k}_{h+1} - V^{\bar{\pi}}_{h+1})(x') d \mu_h(x')\bigg\rangle\\
&\overset{\textrm{(a)}}{=} \underbrace{\bigg \langle \phi(x, a), \int_{\S} (V^{k}_{h+1} - V^{\bar{\pi}}_{h+1})(x') d \mu_h(x')\bigg \rangle}_{\textrm{(III.1)}}\\
&\quad 
\underbrace{-\lambda \bigg\langle \phi(x, a), (\Lambda^k_h)^{-1}\int_{\S} (V^{k}_{h+1} - V^{\bar{\pi}}_{h+1})(x') d \mu_h(x') \bigg\rangle}_{\textrm{(III.2)}}
\end{align*}
where (a) follows from the definition of $\Lambda_h^k$ in the POLSVI algorithm. We immediately see from our assumption on linear MDP that $\textrm{(III.1)}=\Pe_h (V^{k}_{h+1} - V^\pi_{h+1})(x, a)$ and 
\begin{multline*}
|\textrm{(III.2)}|
\leq \lambda\norm{\int_\S (V^{k}_{h+1} - V^{\bar{\pi}}_{h+1})(x')d \mu_h(x')}_{(\Lambda_h^k)^{-1}} \sqrt{\phi(x, a)^\top (\Lambda^k_h)^{-1}  \phi(x, a)}\\
\leq \sqrt{\lambda}\norm{\int_\S (V^{k}_{h+1} - V^{\bar{\pi}}_{h+1})(x')d \mu_h(x')} \sqrt{\phi(x, a)^\top (\Lambda^k_h)^{-1}  \phi(x, a)}\\
\overset{\textrm{(a)}}{\leq} \sqrt{\lambda}2H\int_\S \norm{\mu_h(x')} dx'  \sqrt{\phi(x, a)^\top (\Lambda^k_h)^{-1}  \phi(x, a)}
\overset{(b)}{\leq} 2 H \sqrt{d\lambda} \sqrt{\phi(x, a)^\top (\Lambda^k_h)^{-1}  \phi(x, a)}
\end{multline*}
where (a) follows from the value functions being bounded, and (b) from the definiton of the linear MDP.

Finally, putting it all together with $\lambda=1$, we conclude that, 
\begin{multline*}
|\langle\phi(x, a), w^k_h\rangle - Q_h^{\bar{\pi}}(x, a)  -  \Pe_h (V^{k}_{h+1} - V^{\bar{\pi}}_{h+1})(x, a)| \\
\leq \left(\norm{w_h^{\bar{\pi}}}+CdH \sqrt{\log ((c_\beta+1)dKHP/\delta)}+2H\sqrt{d}\right)\sqrt{\phi(x, a)^\top (\Lambda^k_h)^{-1}  \phi(x, a)}\\
\leq\left(4H\sqrt{d}+CdH \sqrt{\log ((c_\beta+1)dKHP/\delta)}\right)\sqrt{\phi(x, a)^\top (\Lambda^k_h)^{-1}  \phi(x, a)}
\end{multline*}
where the last inequality follows from Proposition~\ref{prop:lin-Q}.

Now, from equation~\eqref{eq:upp_low_bound} in Lemma~\ref{lem:stochastic_term}, we have $\sqrt{\log ((c_\beta+1)dKHP/\delta)}>1$ independently from $c_\beta>0$, and thus
\begin{multline*}
|\langle\phi(x, a), w^k_h\rangle - Q_h^{\bar{\pi}}(x, a)  -  \Pe_h (V^{k}_{h+1} - V^{\bar{\pi}}_{h+1})(x, a)|
\leq \bar{C} dH\sqrt{\log ((c_\beta+1)dKHP/\delta)} \sqrt{\phi(x, a)^\top (\Lambda^k_h)^{-1}  \phi(x, a)},
\end{multline*}
for an absolute constant $\bar{C}= C+4$ independent of $c_{\beta}$.

Finally, to prove this lemma, we only need to show that there exists a choice of the absolute constant $c_\beta>0$ so that
$
\bar{C}\sqrt{\log ((c_\beta+1)dKHP/\delta)}\leq c_\beta \sqrt{\iota}
$, which is equivalent to
\begin{equation} \label{eq:choice_beta_constant}
\bar{C}\sqrt{\iota + \log(c_\beta + 1)} \le c_\beta \sqrt{\iota}
\end{equation}
since  $\sqrt{\log\left(\frac{(1+c_\beta)dKHP}{\delta}\right)}=\sqrt{\log\left(\frac{dKHP}{\delta}\right)+\log(1+c_\beta)}=\sqrt{\iota+\log(1+c_\beta)}$.

Two facts are known: 1) $\iota \in [\log(2), \infty)$ by its definition and $d\geq 2$; and 2) $\bar{C}$ is an absolute constant independent of $c_\beta$. 

Since we know we are looking for  $c_\beta>0$ and using the bound $\log(x)\leq x-1$ for any positive $x\in\R$, we conclude that proving the following equation implies~\eqref{eq:choice_beta_constant},
\begin{equation} \label{eq:choice_beta_constant_2}
\bar{C}\sqrt{\iota+c_\beta}\leq c_\beta\sqrt{\iota}.
\end{equation}
Since both sides are nonnegative, we square them and obtain that it becomes equivalent to showing that $c_\beta$ satisfies $0\leq \iota c_\beta^2-\bar{C}^2c_\beta-\bar{C}^2\iota$, and solving this quadratic expression let us conclude that this is satisfied if $c_\beta\geq g(\iota)$ with $g(\iota)=\frac{\bar{C}^2}{2\iota}+\frac{1}{2}\sqrt{\frac{\bar{C}^4}{\iota^2}+4\bar{C}^2}$. We now observe that $\iota\mapsto g(\iota)$ is a non-increasing function for $\iota \in [\log(2), \infty)$; therefore, if we want~\eqref{eq:choice_beta_constant_2} (and so~\eqref{eq:choice_beta_constant} to hold for any $\iota \in [\log(2), \infty)$, it suffices to choose 
\begin{equation}
\label{eq:c_beta_lower}
c_\beta \geq \frac{\bar{C}^2}{2\log(2)}+\frac{1}{2}\sqrt{\frac{\bar{C}^4}{(\log(2))^2}+4\bar{C}^2}.    
\end{equation}
This finishes the proof.
\end{proof}

We now continue with our proof of Theorem~\ref{thm:main-paralel}. Let us first condition on the event $\mathcal{E}$ defined in Lemma~\ref{lem:stochastic_term}.

We introduce the following notation: $\delta^{k,p}_h := V^k_h(x^{k,p}_{h}) - V^{\pi^{k,p}}_h(x^{k,p}_{h})$, and $\zeta^{k,p}_{h+1} := 
\Pe_h\delta^{k,p}_{h+1}(x^{k,p}_h, a^{k,p}_h) - \delta_{h+1}^{k,p}$. Then, for any $(p,h, k) \in [P]  \times [H] \times [K]$, we use Lemma~\ref{lem:basic_relation} (with $x=x_h^{k,p}$ and $a=a^{k,p}_h$ following the lemma's notation) to obtain, 
\begin{equation}
\label{eq:recursive-paral-aux}
\begin{aligned}
&Q^k_h(x_h^{k,p},a^{k,p}_h) - Q^{\pi^{k,p}}_h(x_h^{k,p}, a^{k,p}_h)\\
&\quad \leq \Pe_h (V^{k}_{h+1} - V^{\pi^{k,p}}_{h+1})(x_h^{k,p}, a^{k,p}_h)+ \beta \sqrt{(\phi^{k,p}_h)^\top (\Lambda^k_h)^{-1}  \phi^{k,p}_h}\\
\implies & V^k_h(x^{k,p}_{h}) - V^{\pi^{k,p}}_h(x^{k,p}_{h})\\
&\quad \leq (\Pe_h (V^{k}_{h+1} - V^{\pi^{k,p}}_{h+1})(x_h^{k,p},a_h^{k,p})) - \delta^{k,p}_{h+1})+ \delta^{k,p}_{h+1} 
+ \beta \sqrt{(\phi^{k,p}_h)^\top (\Lambda^k_h)^{-1}  \phi^{k,p}_h}\\
\implies & \delta^{k,p}_h\leq 
\zeta^{k,p}_{h+1} + \delta^{k,p}_{h+1}
+ \beta \sqrt{(\phi^{k,p}_h)^\top (\Lambda^k_h)^{-1}  \phi^{k,p}_h}.
\end{aligned}
\end{equation}
We define $\zeta_1^{k,p}=0$ for every $(k,p)\in[K]\times[P]$.

Now, let us focus on the regret performance metric. Now, 
\begin{multline}
\label{eq:regret_prev_p}
\textnormal{Regret}(K,P) =  \sum_{k=1}^{K}\sum_{p=1}^{P} (V^\star_1(s_0) - V^{\pi^{k,p}}_1 (s_0))
\overset{\textrm{(a)}}{\leq}\sum_{k=1}^{K}\sum_{p=1}^{P} (V^{k}_1(s_0) - V^{\pi^{k,p}}_1 (s_0))\\
\overset{\textrm{(b)}}{=}\sum_{k=1}^{K}\sum_{p=1}^{P} \delta^{k,p}_1
\\
\overset{(c)}{\leq} \underbrace{\sum_{k=1}^{K}\sum_{p=1}^{P}\sum_{h=1}^H \zeta^{k,p}_{h}}_{\textrm{(I)}} + \underbrace{\beta \sum_{k=1}^{K}\sum_{p=1}^{P}\sum_{h=1}^H \sqrt{(\phi^{k,p}_h)^\top (\Lambda^k_h)^{-1}\phi^{k,p}_h}}_{\textrm{(II)}},
\end{multline}
where (a) follows from the optimism upper bound found in Lemma~B.5 from~\citep{CJ-ZY-ZW-MIJ:20} but using the event $\mathcal{E}$ from Lemma~\ref{lem:stochastic_term}; (b) follows since $x^{k,p}_1=s_o$ for every $(k,p)\in[K]\times[P]$; and (c) follows from the recursive formula in~\eqref{eq:recursive-paral-aux} and the fact that $\delta_{H+1}^{k,p}=\zeta_{H+1}^{k,p}=0$ and $\zeta^{k,p}_1=0$.

We first analyze the term (I) from~\eqref{eq:regret_prev_p}. 
Let us define the filtration $\{\F_{(k,h,p)}\}_{(k,h,p)\in\L^\star}$ where
$\L^\star$ is a sequence such that $\L^\star\subset\mathbb{Z}_{\geq 1}\times[H]\times[P]$ and its elements are arranged as follows. Firstly, we let the third coordinate take values from $1$ to $P$ and repeat this periodically \emph{ad infinitum}, so that each period has $P$ elements of $\L^\star$.
Secondly, the second coordinate takes the value $1$ for all elements in the first period of the third coordinate, then it takes the value $2$ for all elements of the second period of the third coordinate, and so on until taking the value of $H$ for all elements in the $H$-th period of the third coordinate --- this will constitute a period in the second coordinate --- after which the second coordinate takes the value $1$ again and continue describing periods (of $HP$ elements each) \emph{ad infinitum}. Finally, we let the third coordinate take the value corresponding to the number of periods so far in the second coordinate (thus the values of the first coordinate is unbounded). 
Consider any element $(k,h,p)\in\L^\star$. We denote by $(k,h,p)^{-1}$ its previous element in $\L^\star$. We let $\F_{(k,h,p)}$ contain the information of all states $x^{\bar{k},\bar{p}}_{\bar{h}}$ and actions $a^{\bar{k},\bar{p}}_{\bar{h}}$ whose indexes $(\bar{k},\bar{h},\bar{p})$ belong to the set $\L^\star$ up to the element $(k,h,p)\in\L^\star$.

We then can conclude that $\{\zeta^{k,p}_{h}\}_{(k,h,p)\in\L^\star}$ is a 
martingale difference sequence due to the following two properties:
\begin{enumerate}
    \item $\zeta^{k,p}_{h}\in\F_{(k,h,p)^{-1}}$. For $h=1$, $\E[\zeta^{k,p}_{h}|\F_{(k,h,p)^{-1}}]=0$ is trivial, so we focus on $h=2,\dots,H$.
    Note that since $x^{k,p}_h\sim\P_{h-1}(\cdot|x^{k,p}_{h-1},a^{k,p}_{h-1})$ (line 14 of POLSVI), we have that $\E[\delta^{k,p}_{h}|\F_{(k,h,p)^{-1}}]=\E[V^k_h(x^{k,p}_{h}) - V^{\pi^{k,p}}_h(x^{k,p}_{h})|\F_{(k,h,p)^{-1}}]=\E_{x'\sim\P_{h-1}(\cdot|x_{h-1}^{k,p},a_{h-1}^{k,p})}[V^k_h(x') - V^{\pi^{k,p}}_h(x')]=\Pe_{h-1}\delta_h^{k,p}(x_{h-1}^{k,p},a_{h-1}^{k,p})$, which immediately implies $\E[\zeta^{k,p}_{h}|\F_{(k,h,p)^{-1}}]=0$. 
    %
    %
    \item $|\zeta^{k,p}_{h}|\leq 
    |\Pe_h\delta^{k,p}_{h+1}(x^{k,p}_h, a^{k,p}_h)| +|\delta_{h+1}^{k,p}|\leq 2H <\infty$
    since $V^k_h(x) - V^{\pi^{k,p}}_h(x)\in[-H,H]$ for any $x\in\S$.
\end{enumerate}
Therefore, we can use the Azuma-Hoeffding inequality to conclude that, for any $\epsilon > 0$,
\begin{equation*}
\Pr \left(\sum_{k=1}^{K}\sum_{p=1}^{P}\sum_{h=1}^H  \zeta^{k,p}_{h}> \epsilon \right) \leq \exp \bigg (\frac{-2 \epsilon^2 } {(KHP)(4H^2) } \bigg ).
\end{equation*}
We choose $\epsilon=\sqrt{2KH^3P\log\left(\frac{1}{\delta}\right)}$. Then, with probability at least $1 -\delta$,   
\begin{equation}
\label{eq:final2}
  \textrm{(I)}=\sum_{k=1}^{K}\sum^P_{p=1}\sum_{h=1}^H  \zeta^{k,p}_{h}\leq\sqrt{2KH^2HP\log\left(\frac{1}{\delta}\right)} \leq 2H\sqrt{KHP\iota}, 
\end{equation}
recalling that $\iota = \log\left(\frac{dKHP}{\delta}\right)$. We call $\bar{\mathcal{E}}$ the event such that~\eqref{eq:final2} holds.

Let $\mathcal{D}_{k,h}$ be the event that there is a doubling round at step $h\in[H]$ in episode $k\in[K]$, i,e., ``$\Lambda_h^{k+1}\succ 2\Lambda_h^k$".

We now analyze the term (II) from~\eqref{eq:regret_prev_p}. We split (II) according to the event of doubling rounds,
\begin{equation}
\label{eq:double_round_fact}
\textrm{(II)}= \beta \sum_{k=1}^{K}\sum_{h=1}^H\I[\mathcal{D}_{k,h}]\sum_{p=1}^{P} \sqrt{(\phi^{k,p}_h)^\top (\Lambda^k_h)^{-1}\phi^{k,p}_h} + \beta \sum_{k=1}^{K}\sum_{h=1}^H\I[\mathcal{D}_{k,h}^c]\sum_{p=1}^{P} \sqrt{(\phi^{k,p}_h)^\top (\Lambda^k_h)^{-1}\phi^{k,p}_h}.
\end{equation}
For the first term in~\eqref{eq:double_round_fact}, 
%
\begin{multline}
\label{eq:aux_last_0}
\beta \sum_{k=1}^{K}\sum_{h=1}^H\I[\mathcal{D}_{k,h}]\sum_{p=1}^{P} \sqrt{(\phi^{k,p}_h)^\top (\Lambda^k_h)^{-1}\phi^{k,p}_h}\\
\leq \frac{\beta}{\sqrt{\lambda}} \sum_{k=1}^{K}\sum_{h=1}^H\I[\mathcal{D}_{k,h}]\sum_{p=1}^{P} \norm{\phi^{k,p}_h}
\leq \frac{\beta P}{\sqrt{\lambda}}\sum_{k=1}^{K}\sum_{h=1}^H\I[\mathcal{D}_{k,h}].
\end{multline}

We now bound the second term in~\eqref{eq:double_round_fact}. Let us fix any $h\in[H]$. We can define the bounded sequence $\{\phi_h^{k,p}\}_{(k,p)\in\L_P}$ with $\|\phi_h^{k,p}\|\leq 1$ for any $(k,p)\in\mathbb{Z}_{\geq 0}\times[P]$. We also define
$$
\Lambda^{k,p}_h:=\lambda I_d + \sum^{k-1}_{\tau=1}\sum^P_{\bar{p}=1}\phi^{\tau,\bar{p}}(\phi^{\tau,\bar{p}}_h)^\top + \sum_{\bar{p}=1}^{p-1}
\phi^{k,\bar{p}}(\phi^{k,\bar{p}}_h)^\top
$$
and notice that $\Lambda^{k,1}_h=\Lambda^{k}_h$ and that $\Lambda^{k,p}_h\preceq \Lambda^{k+1,1}_h$ for any $p\in[P]$.

Now, consider we are in the case that there is no doubling round, i.e., $\I[\mathcal{D}^c_{k,h}]=1$. Then,  
$$
\Lambda^{k}_h\preceq\Lambda^{k,p}_h\preceq \Lambda^{k+1,1}_h=\Lambda^{k+1}_h\preceq 2\Lambda^{k}_h \implies
(\Lambda^{k}_h)^{-1}\succeq(\Lambda^{k,p}_h)^{-1}\succeq\frac{1}{2}(\Lambda^{k}_h)^{-1},
$$
which then implies that 
$$
\sum_{k=1}^{K}\sum_{p=1}^{P}(\phi^{k,p}_h)^\top (\Lambda^k_h)^{-1}\phi^{k,p}_h
\leq
2\sum_{k=1}^{K}\sum_{p=1}^{P}(\phi^{k,p}_h)^\top (\Lambda^{k,p}_h)^{-1}\phi^{k,p}_h
\overset{\textrm{(a)}}{\leq} 4\log\left[\frac{\det(\Lambda_h^{K,P})}{\det(\lambda I_d)}\right]
$$
%
%
where (a) follows from using~\citep[Lemma~11]{YAY-DP-CS:11}, whose conditions are satisfied from our bounded sequence $\{\phi_h^{k,p}\}_{(k,p)\in\L_P}$ (for fixed $h$) and the fact that the minimum eigenvalue of $\Lambda_h^k$ is lower bounded by $\lambda=1$ for every $k\in[K]$. Now, we have that $\Lambda_h^{K,P}=\Lambda_h^{K+1}$, which is a positive definite matrix whose maximum eigenvalue can be bounded as 
$\norm{\Lambda_h^{K+1}}\leq \norm{\sum_{k=1}^K\sum^P_{p=1}\phi^{k,p}_h(\phi^{k,p}_h)^\top}+\lambda\leq KP+\lambda$, and so $\det(\Lambda^{K+1}_h)\leq \det((KP+\lambda)I_d)=(KP+\lambda)^d$. We also have that $\det(\lambda I_d)=\lambda^d$. Then, using these results in our previous equation, we obtain that
\begin{equation}
\label{eq:aux_last_1}
\sum_{k=1}^{K}\sum_{p=1}^{P}(\phi^{k,p}_h)^\top (\Lambda^k_h)^{-1}\phi^{k,p}_h
\leq
4\log\left[\frac{KP+\lambda}{\lambda}\right]^d=
4d\log(KP+1)\leq 4d\iota,
\end{equation}
where the last inequality holds since $\log(KP+1)\leq \log\left(\frac{dKHP}{\delta}\right)=\iota$ for $d\geq 2$. 

Now, for a fixed $h\in[H]$, let $\mathcal{R}_h:=\{k\in[K]\mid \I[\mathcal{D}_{k,h}^c]=1\}$. 
Then, taking the second term in~\eqref{eq:double_round_fact},   
%
\begin{multline}
\label{eq:aux_last_2}
\beta \sum_{k=1}^{K}\sum_{h=1}^H\I[\mathcal{D}_{k,h}^c]\sum_{p=1}^{P} \sqrt{(\phi^{k,p}_h)^\top (\Lambda^k_h)^{-1}\phi^{k,p}_h}  
= \beta \sum_{h=1}^H\sum_{k\in\mathcal{R}_h}\sum_{p=1}^{P} \sqrt{(\phi^{k,p}_h)^\top (\Lambda^k_h)^{-1}\phi^{k,p}_h}\\
  \overset{\textrm{(a)}}{\leq}\beta \sum_{h=1}^H  \sqrt{|\mathcal{R}_h|P}\sqrt{ \sum_{k\in\mathcal{R}_h}\sum_{p=1}^P (\phi^{k,p}_h)^\top (\Lambda^k_h)^{-1}\phi^{k,p}_h}\\
\leq\beta \sum_{h=1}^H  \sqrt{|\mathcal{R}_h|P}\sqrt{ \sum_{k=1}^K\sum_{p=1}^P (\phi^{k,p}_h)^\top (\Lambda^k_h)^{-1}\phi^{k,p}_h}\\
\overset{(b)}{\leq} 2\beta\sqrt{d\iota} \sum_{h=1}^H  \sqrt{|\mathcal{R}_h|P}\leq 2\beta H\sqrt{dKP\iota}, 
%
\end{multline}
where (a) follows from the Cauchy-Schwartz inequality, and (b) from~\eqref{eq:aux_last_1}.
We now combine the results in~\eqref{eq:aux_last_0} and~\eqref{eq:aux_last_2}, i.e., the upper bounds for (II), and obtain, letting $\lambda=1$,
$$
\textrm{(II)}\leq\beta P\sum_{k=1}^{K}\sum_{h=1}^H\I[\mathcal{D}_{k,h}] + 2\beta H\sqrt{dKP\iota}.
$$
Using this last result along with~\eqref{eq:final2} in~\eqref{eq:regret_prev_p}, we conclude that,
%
%
%
%
\begin{multline}
\label{eq:regret_almost_l}
\textnormal{Regret}(K,P) \leq  
2H\sqrt{KHP\iota}
+ 
2\beta H\sqrt{dKP\iota}
+
\beta P\sum_{k=1}^{K}\sum_{h=1}^H\I[\mathcal{D}_{k,h}]\\
=2\sqrt{KH^3P\iota}
+ 
2c_\beta\sqrt{ d^3KH^4P\iota^2}
+
c_\beta d PH\sqrt{\iota}\sum_{k=1}^{K}\sum_{h=1}^H\I[\mathcal{D}_{k,h}]\\
\overset{\textrm{(a)}}{\leq} (2+2c_\beta)\sqrt{d^3KH^4P\iota^2}+
c_\beta d PH\sqrt{\iota}\sum_{k=1}^{K}\sum_{h=1}^H\I[\mathcal{D}_{k,h}]
\end{multline}
where (a) follows from $\sqrt{\iota}\leq \iota$ which follows from equation~\eqref{eq:iota_bound}.

We now bound the number of possible doubling rounds. Consider any $h\in[H]$. When there is a doubling round at episode $k$, we can find some $y\in\R^d$, $\norm{y}=1$, such that $y^\top\Lambda^{k+1}_h y>2y^\top\Lambda^{k}_h y$. Then, we can apply Lemma~12 from~\citep{JC-AP-NT-YSS-PB-MIJ:21} to obtain $\det(\Lambda^{k+1}_h)>2\det(\Lambda^{k}_h)$. If $n_h$ is the number of doubling rounds at step $h\in[H]$ across all episodes, then $\det(\Lambda^{K+1}_h)>(2)^{n_h}\det(\Lambda^{1}_h)$, and so $\log\left(\frac{\det(\Lambda^{K+1}_h)}{\det(\Lambda^{1}_h)}\right)>n_h\log(2)$. Now, since $\norm{\phi^{k,p}_h}\leq 1$ and $\lambda=1$, by Lemma~19.4 from~\citep{TL-CS:20}, we have that 
$$\log\left(\frac{\det(\Lambda^{K+1}_h)}{\det(\Lambda^{1}_h)}\right)\leq d\log\left(\frac{\tr(\lambda I_d)+KP}{d}\right)-\log(\det(\lambda I_d))=d\log\left(1+\frac{KP}{d}\right).$$
Then, our recently derived results let us conclude that $\sum^{K}_{k=1}\sum_{h=1}^H\I[\mathcal{D}_{k,h}]=\sum_{h=1}^H n_h<\frac{dH}{\log(2)}\log\left(1+\frac{KP}{d}\right)$; and using this bound in~\eqref{eq:regret_almost_l} let us obtain
\begin{equation*}
\textnormal{Regret}(K,P)
\leq (2+2c_\beta)\sqrt{d^3KH^4P\iota^2}+
\frac{c_\beta d^2H^2}{\log(2)}P\sqrt{\iota}\log\left(1+\frac{KP}{d}\right).
%
\end{equation*}
Finally, since $\Pe[\text{not }  \mathcal{E}]\leq \delta$ and $\Pe[\text{not } \bar{\mathcal{E}}]\leq \delta$, a union bound lets us conclude that all the results proven so far hold with probability $1-2\delta
$. This finishes the proof of Theorem~\ref{thm:main-paralel}.
%

\section{Proof of Theorem~\ref{thm:main-reward-free}}
\label{subsec:proof_main_rf_polsvi}

For simplicity, we will use the same notation as described at the beginning of section~\ref{subsec:proof_main_parallel} for the exploration phase with the following notation for the reward functions in the exploration phase: $r^k:=\{r_h^k\}_{h\in[H]}$ and $r^{k,p}_h:=r^k_h(x^{k,p}_h,a^{k,p}_h)$. For the planning phase, we denote the given reward $r:=\{r_h\}_{h\in[h]}$. For the planning phase, we define the value function $\hat{V}_h^k(\cdot)=\max_{a\in\A} \hat{Q}_h^k(\cdot,a)$. 
Additionally, if we take the underlying MDP $\mathcal{M}$ and replace its reward function by $\bar{r}=\{\bar{r}_h\}_{h\in[H]}$, then we denote the value functions of the newly created MDP by $V_h(\cdot,\bar{r})$, $h\in[H]$; and denote the optimal value functions by $V_h^*(\cdot,\bar{r})$, $h\in[H]$.

We introduce our first auxiliary lemma.

\begin{lemma}[Concentration bound on value functions in the exploration phase for RF-POLSVI]
\label{lem:concentration}
Consider the setting of Theorem~\ref{thm:main-reward-free}. There exists an absolute constant $C$ independent of $c_{\beta}$ such that for any fixed $\delta\in(0, 1)$, the following event $\mathcal{E}$ holds with probability at least $1-\delta$,
\begin{multline*}
\forall (k, h)\in [K]\times [H]: \quad \norm{\sum_{\tau = 1}^{k-1}\sum_{p=1}^P \phi^{\tau,p}_h [V^{k}_{h+1}(x^{\tau,p}_{h+1}) - \Pe_h V^{k}_{h+1}(x_h^{\tau,p}, a_h^{\tau,p})]}_{(\Lambda^k_h)^{-1}}\\
\leq CdH\sqrt{\log \left(\frac{(c_\beta+1)dKHP}{\delta}\right)}. 
\end{multline*}
\end{lemma}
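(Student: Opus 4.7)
The proof plan is to mirror the argument of the earlier concentration lemma for POLSVI (Lemma on value functions for POLSVI), since the structure of $Q^k_h$ in RF-POLSVI differs only by an additional reward-like bonus term. The overall strategy has four steps: identify the function class containing all $V^k_{h+1}$, bound the associated linear parameters, invoke the self-normalized concentration lemma (Lemma on self-normalized processes), and finally simplify the resulting bound via the parameter choices $\lambda = 1$, $\beta = c_\beta d H \sqrt{\iota}$, and $\epsilon = dH/(KP)$.

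For the first step, I observe from line 11 of Algorithm~\ref{alg:main_LIN_RF_POLSVI_exp} that $Q_h^k(\cdot,\cdot) = \min\{(w_h^k)^\top \phi(\cdot,\cdot) + r_h^k(\cdot,\cdot) + u_h^k(\cdot,\cdot), H\}$, and since $r_h^k = u_h^k/H$ by line 9, the sum $r_h^k + u_h^k$ equals $(1 + 1/H)\, u_h^k$, i.e., a scalar multiple of the usual optimism bonus. Hence $V_{h+1}^k(\cdot) = \max_{a\in\A} Q_{h+1}^k(\cdot,a)$ belongs to the same function class $\mathcal{V}$ as in~\eqref{eq:function_class}, except with the effective bonus coefficient $(1+1/H)\beta \leq 2\beta$ in place of $\beta$. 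Therefore the $\epsilon$-covering number estimate from Lemma~D.6 of~\citep{CJ-ZY-ZW-MIJ:20} continues to hold up to a harmless constant.

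For the second step, I bound $\|w_h^k\|$ by repeating the calculation of Lemma~\ref{lem:wn_estimate}, noting that line 10 of Algorithm~\ref{alg:main_LIN_RF_POLSVI_exp} omits the reward from the inner sum, which yields the (slightly tighter) estimate $\|w_h^k\| \leq H\sqrt{d(k-1)P/\lambda}$. For the third step, I apply Lemma~\ref{lem:self_norm_covering} to the martingale $V_{h+1}^k(x_{h+1}^{\tau,p}) - \mathbb{P}_h V_{h+1}^k(x_h^{\tau,p}, a_h^{\tau,p})$ along the $(\tau, p)$-ordering used in the POLSVI proof, combining the self-normalized bound with a union bound over the $\epsilon$-covering of $\mathcal{V}$ and an $O(KP\epsilon)$ discretization term.

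For the final step, I substitute the specified values of $\lambda$, $\beta$, and $\epsilon$, and reproduce essentially verbatim the algebraic simplification in~\eqref{eq:bound_aux_long}--\eqref{eq:to_upper_bound}, using $\iota \geq \log 2$ and the bound $\log(c_\beta+1)+\iota = \log((c_\beta+1)dKHP/\delta)$ to absorb all polynomial factors into a single logarithmic term $\log((c_\beta+1)dKHP/\delta)$. Taking square roots then gives the stated bound $CdH\sqrt{\log((c_\beta+1)dKHP/\delta)}$. The main obstacle is purely bookkeeping: one must verify that replacing $\beta$ by $(1+1/H)\beta$ in the function class does not change any step in the POLSVI-style calculation except by an absolute constant. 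Since $1+1/H \leq 2$ and the proof's dependence on $\beta$ appears only inside logarithms and polynomials in $\beta$, this rescaling can be absorbed into the universal constant $C$, so no new analytical idea is required.
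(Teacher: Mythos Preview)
Your proposal is correct and follows the same approach as the paper. One minor imprecision: because $u_h^k = \min\{\beta\|\phi\|_{(\Lambda_h^k)^{-1}}, H\}$ is itself clipped at $H$, the RF-POLSVI value functions do not literally lie in the class~\eqref{eq:function_class} with $\beta$ replaced by $(1+1/H)\beta$; the paper instead defines the correct class~\eqref{eq:function_class-MDP} retaining the inner $\min\{\cdot,H\}$ and redoes the covering-distance calculation, but as you anticipated this affects only absolute constants.
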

\begin{proof}
First, notice that $\norm{w^k_h}\leq H\sqrt{dKP}$ by following the same proof as in Lemma~\ref{lem:wn_estimate} and choosing $\lambda=1$. Then, we notice that $(V^{k}_{h})_{(k,h)\in[K]\times[H]}$ 
belongs to the following function class 
\begin{multline}
\label{eq:function_class-MDP}
\mathcal{V} = \left\{V:\S\to\R\;\Big|\; V(\cdot)=
\min\left\{
\max_{a\in\A}\hat{w}^\top\phi(\cdot,a) \right.\right.\\
\left.\left. +\left(1+\frac{1}{H}\right)\min\{\beta\sqrt{\phi(\cdot,a)^\top\hat{\Lambda}^{-1}\phi(\cdot,a)},H\}, H
\right\}\right\},
\end{multline}
where $\norm{\hat{w}}\leq H\sqrt{dKP}$, the minimum eigenvalue of $\hat{\Lambda}$ is greater or equal than $\lambda$, and $\norm{\phi(x,a)}\leq 1$ for all $(x,a)\in\S\times\A$. For any $V,V'\in\mathcal{V}$, let $\dist(V,V')=\sup_{x\in\S} |V(x) - V'(x)|$. Set $\hat{u}(x,a):=\min\{\beta\sqrt{\phi(\cdot,a)^\top\hat{\Lambda}^{-1}\phi(\cdot,a)},H\}$. Then, 
\begin{equation}
\label{eq:dist-covering-MG-0}
\begin{aligned}
\dist(V,V')
&=\sup_{x\in\S}
\Big|\max_{a\in\A}\min\{\hat{w}^\top\phi(x,a)+(1+1/H)\hat{u}(x,a),H\}\\
&\quad-
\max_{a\in\A}\min\{(\hat{w}')^\top\phi(x,a)+(1+1/H)\hat{u}'(x,a),H\}\Big|\\
&\overset{\textrm{(a)}}{\leq}\sup_{x\in\S,a\in\A}
\Big|
(\hat{w}^\top\phi(x,a)+(1+1/H) \hat{u}(x,a))\\
&\quad-
((\hat{w}')^\top\phi(x,a)+(1+1/H)\hat{u'}(x,a))\Big|\\
%
%
&\leq\sup_{\phi:\norm{\phi}\leq 1}\Big|
(\hat{w}-\hat{w}')^\top\phi\Big|
+\sup_{x\in\S,a\in\A}2\Big|
\hat{u}(x,a)-\hat{u'}(x,a)\Big|\\
&\overset{(b)}{\leq}\sup_{\phi:\norm{\phi}\leq 1}\Big|
(\hat{w}-\hat{w}')^\top\phi\Big|\\
&\quad +\sup_{\phi:\norm{\phi}\leq 1}2\beta\Big|
\sqrt{\phi^\top\hat{\Lambda}^{-1}\phi}-
\sqrt{\phi^\top(\hat{\Lambda}')^{-1}\phi}
\Big|\\
&\overset{\textrm{(c)}}{\leq} \norm{\hat{w}-\hat{w}'} + \sup_{\phi:\norm{\phi}\leq 1}
2\beta\sqrt{\left|\phi^\top(\hat{\Lambda}^{-1}-(\hat{\Lambda}')^{-1})\phi\right|}\\
&=\norm{\hat{w}-\hat{w}'} + 
2\beta\sqrt{\norm{\hat{\Lambda}^{-1}-(\hat{\Lambda}')^{-1}}}\\
&\leq\norm{\hat{w}-\hat{w}'} + 
2\beta\sqrt{\norm{\hat{\Lambda}^{-1}-(\hat{\Lambda}')^{-1}}_F}
\end{aligned}
\end{equation}
where (a) follows from the fact that the $\min\{\cdot,H\}$ operator is non-expansive and that the property $|\sup_{a\in\A}g(a)-\sup_{a\in\A}h(a)|\leq \sup_{a\in\A}|g(a)-h(a)|$ for any $g,h:\A\to\R$, and (b) follows from the inequality $|\sqrt{p}-\sqrt{q}|\leq\sqrt{|p-q|}$ for any $p,q\geq 0$. Now, we notice that~\eqref{eq:dist-covering-MG-0} is a bound of the same form as equation~(28) from Lemma~D.6 of~\citep{CJ-ZY-ZW-MIJ:20}, and so we can use this lemma to obtain that the $\epsilon$-covering number of $\mathcal{V}$, $\mathcal{N}_{\epsilon}$, with respect to the distance $\dist(\cdot,\cdot)$ can be upper bounded as 
\begin{equation*}
\log \mathcal{N}_{\epsilon} \le d  \log (1+ 4H\sqrt{dKP}/ \epsilon ) + d^2 \log \bigl [ 1 +  32 d^{1/2} \beta^2  / (\lambda\epsilon^2)  \bigr ].
\end{equation*}
Then, the proof of the lemma follows immediately from closely following the proof of 
Lemma~\ref{lem:stochastic_term}.
\end{proof}

We use the previous lemma to obtain a useful upper bound to the optimal value function resulting from an MDP whose rewards are the ones computed at the current episode of the exploration phase.

\begin{lemma}[Upper bound on the value function at the exploration phase]
\label{lem:sum_V}
Consider the setting of Theorem~\ref{thm:main-reward-free}.
With probability at least $1-2\delta$, for all $k\in[K]$, 
\begin{enumerate}
    \item\label{item:one_up} $V^*_1(s_0, r^k)\leq V^{k}_1(s_0)$,
\item\label{item:three_up} $\sum_{k=1}^{K}V^{k}_1(s_0)
\leq 2H\sqrt{\frac{KH\iota}{P}}+6\beta 
H\sqrt{\frac{dK\iota}{P}}
+ 10\beta d H\log\left(1+\frac{KP}{d}\right)
$.
\end{enumerate}
\end{lemma}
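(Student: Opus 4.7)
I will prove the two items separately. For item~\ref{item:one_up}, the strategy is a backward induction in $h$ showing $V^k_h(x)\geq V^*_h(x;r^k)$ for all $x\in\S$, building on an RF-POLSVI analog of Lemma~\ref{lem:basic_relation}. Specifically, I plan to adapt the computation in the proof of Lemma~\ref{lem:basic_relation} with the target policy being the optimal policy under the reward $r^k$, so that under the event $\mathcal{E}$ from Lemma~\ref{lem:concentration} one obtains
\begin{equation*}
\bigl|\langle\phi(x,a),w^k_h\rangle-\Pe_h V^k_{h+1}(x,a)\bigr|\leq \beta\sqrt{\phi(x,a)^\top(\Lambda^k_h)^{-1}\phi(x,a)},
\end{equation*}
for all $(x,a,h,k)$. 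The base case $h=H+1$ is trivial. In the inductive step, use the fact that $u^k_h(x,a)$ upper bounds the RHS of the above display (since it is the clipped bonus), add $r^k_h$ on both sides, apply the inductive hypothesis together with monotonicity of $\Pe_h$, and finally observe that when clipping at $H$ is active the inequality $Q^k_h(x,a)=H\geq Q^*_h(x,a;r^k)$ holds because $r^k_h\leq 1$ forces $V^*_h(\cdot;r^k)\leq H$. Taking the max over actions yields item~\ref{item:one_up}.

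For item~\ref{item:three_up}, the plan is to unroll $V^k_1(s_0)$ along the trajectory of each agent $p$ and convert the resulting telescoping sum into quantities that are controllable by the doubling-round machinery from the proof of Theorem~\ref{thm:main-paralel}. Define $\eta^{k,p}_{h+1}:=\Pe_h V^k_{h+1}(x^{k,p}_h,a^{k,p}_h)-V^k_{h+1}(x^{k,p}_{h+1})$, which forms a martingale difference sequence under the same filtration used in Section~\ref{subsec:proof_main_parallel}. Using the greedy action selection on line~15 of Algorithm~\ref{alg:main_LIN_RF_POLSVI_exp}, together with the inequality $\langle\phi^{k,p}_h,w^k_h\rangle\leq \Pe_h V^k_{h+1}(x^{k,p}_h,a^{k,p}_h)+u^k_h(x^{k,p}_h,a^{k,p}_h)$ obtained from the RF-POLSVI analog of Lemma~\ref{lem:basic_relation}, one derives the recursion
\begin{equation*}
V^k_h(x^{k,p}_h)\leq r^k_h(x^{k,p}_h,a^{k,p}_h)+2\,u^k_h(x^{k,p}_h,a^{k,p}_h)+V^k_{h+1}(x^{k,p}_{h+1})+\eta^{k,p}_{h+1}.
\end{equation*}
Telescoping over $h\in[H]$ and substituting $r^k_h=u^k_h/H$ gives $V^k_1(s_0)\leq 3\sum_h u^k_h(x^{k,p}_h,a^{k,p}_h)+\sum_h \eta^{k,p}_h$ for each agent $p$.

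Next I average over $p\in[P]$ (the left side does not depend on $p$) and sum over $k\in[K]$, yielding
\begin{equation*}
P\sum_{k=1}^K V^k_1(s_0)\leq 3\sum_{k,p,h} u^k_h(x^{k,p}_h,a^{k,p}_h)+\sum_{k,p,h}\eta^{k,p}_h.
\end{equation*}
To bound the bonus sum, I split according to whether step $(k,h)$ is a doubling round exactly as in the proof of Theorem~\ref{thm:main-paralel}. The non-doubling contribution is handled via $\Lambda^{k,p}_h\succeq\tfrac12\Lambda^k_h$, Cauchy--Schwarz, and the Lemma~11-style trace bound $\sum_{k,p}(\phi^{k,p}_h)^\top(\Lambda^k_h)^{-1}\phi^{k,p}_h\leq 4d\iota$, giving an $O(\beta H\sqrt{dKP\iota})$ term; the doubling contribution is bounded by $\beta P$ times the number of doubling rounds, which is at most $dH\log(1+KP/d)/\log 2$. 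The martingale term is controlled by Azuma--Hoeffding (with increments bounded by $2H$, as in the derivation of \eqref{eq:final2}), producing an $O(H\sqrt{KH^2P\iota})=O(H^{3/2}\sqrt{KHP\iota})$ tail term after a union bound with event $\mathcal{E}$. Dividing by $P$ collapses $\sqrt{KP}/P=\sqrt{K/P}$ and yields the stated bound; the probability $1-2\delta$ accounts for the concentration event and the Azuma bound.

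\textbf{Main obstacle.} The most delicate step is the RF-POLSVI analog of Lemma~\ref{lem:basic_relation}: one must show that the regression coefficient $w^k_h$ computed in line~10 of Algorithm~\ref{alg:main_LIN_RF_POLSVI_exp} approximates $\Pe_h V^k_{h+1}$ with error bounded by the clipped bonus $u^k_h$, despite $V^k_h$ now incorporating both a bonus-based reward $r^k_h$ and the additive optimism $u^k_h$. This requires invoking Lemma~\ref{lem:concentration} (with the enlarged covering-number estimate for the function class \eqref{eq:function_class-MDP}) and carefully tracking that the constant $c_\beta$ chosen in \eqref{eq:c_beta_lower} still suffices; the rest of the argument is structurally parallel to Section~\ref{subsec:proof_main_parallel}, so the main risk is bookkeeping of constants and of the doubling-round splitting across the two separate appearances of the bonus (in $r^k_h$ and in $u^k_h$).
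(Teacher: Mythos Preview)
Your proposal is correct and follows essentially the same route as the paper: derive $|\langle\phi(x,a),w^k_h\rangle-\Pe_h V^k_{h+1}(x,a)|\le\beta\|\phi(x,a)\|_{(\Lambda^k_h)^{-1}}$ under the event $\mathcal{E}$ of Lemma~\ref{lem:concentration}, use it for the backward induction in item~\ref{item:one_up}, then telescope along each agent's trajectory, average over $p\in[P]$, and control the resulting martingale term by Azuma--Hoeffding and the bonus sum by the doubling-round split exactly as in Section~\ref{subsec:proof_main_parallel}. One wording caveat: $u^k_h(x,a)=\min\{\beta\|\phi(x,a)\|_{(\Lambda^k_h)^{-1}},H\}$ is a \emph{lower} bound of the unclipped bonus, not an upper bound; the induction for item~\ref{item:one_up} works because $Q^*_h(\cdot,\cdot;r^k)\le H$ lets you pass through the outer $\min\{\cdot,H\}$ (as you note in your clipping case), not because $u^k_h$ dominates $\beta\|\phi\|_{(\Lambda^k_h)^{-1}}$.
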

\begin{proof}
We first condition on the event $\mathcal{E}$ defined in Lemma~\ref{lem:concentration}, which holds with probability at least $1-\delta$.
Since we have a linear MDP, for any $(x,a)\in\S\times\A$, we have $\P_h(x'|x,a) = \langle \phi(x,a),\mu_h(x')\rangle$ and so $
\Pe_h V_{h+1}^k(x,a)=\langle \phi(x,a), \tilde{w}_h^k\rangle
$ with 
$
\tilde{w}^k_h:=\int_{x\in\S}V^k_{h+1}(x')d\mu_h(x')=\int_{x'\in\S}V^k_{h+1}(x')\mu_h(x')dx'
$. Then,
$$
\norm{\tilde{w}^k_h}\leq H\int_{x'\in\S}\norm{\mu_h(x')}dx'\leq H\sqrt{d}.
$$
Now, using this result, we have for every $(x,a,h,k)\in[\S]\times[\A]\times [H]\times[K]$, 
\begin{align*}
&\phi(x,a)^\top w^{k}_h -
\Pe_h V_{h+1}^k(x,a)\\
&=
\phi(x,a)^\top (\Lambda^{k}_h)^{-1}
\sum_{\tau=1}^{k-1}\sum_{p=1}^P\phi_h^{\tau,p}
V_{h+1}^k(x_{h+1}^{\tau,p}) - \Pe_h V_{h+1}^k(x,a)\\
&=
\phi(x,a)^\top (\Lambda^{k}_h)^{-1}\left(
\sum_{\tau=1}^{k-1}\sum_{p=1}^P\phi_h^{\tau,p}
V_{h+1}^k(x_{h+1}^{\tau,p}) 
- \Lambda^{k}_h \tilde{w}^{k}_{h}\right)\\
&=
\phi(x,a)^\top (\Lambda^{k}_h)^{-1}\left(
\sum_{\tau=1}^{k-1}\sum_{p=1}^P\phi_h^{\tau,p}
V_{h+1}^k(x_{h+1}^{\tau,p}) 
-  \lambda\tilde{w}^{k}_{h} 
- 
\sum_{\tau=1}^{k-1}\sum_{p=1}^P\phi_h^{\tau,p}(\phi_h^{\tau,p})^\top
\tilde{w}^{k}_{h} 
\right)\\
&=
\underbrace{\phi(x,a)^\top (\Lambda^{k}_h)^{-1}\left(
\sum_{\tau=1}^{k-1}\sum_{p=1}^P\phi_h^{\tau,p}
\left(V_{h+1}^k(x_{h+1}^{\tau,p}) - \Pe_h V^k_{h+1}(x^{\tau,p}_h,a^{\tau,p}_h)\right)\right)}_{\textrm{(I)}} -\lambda\underbrace{ \phi(x,a)^\top (\Lambda^{k}_h)^{-1}
\tilde{w}^{k}_{h}}_{\textrm{(II)}}
.
\end{align*}
Analyzing the term (I),
\begin{align*}
|\textrm{(I)}|&\le
\norm{\phi(x,a)}_{(\Lambda^{k}_h)^{-1}}
\norm{\sum_{\tau=1}^{k-1}\sum_{p=1}^P\phi_h^{\tau,p}
\left(V_{h+1}^k(x_{h+1}^{\tau,p}) - \Pe_h V_{h+1}^{k}(x^{\tau,p}_h,a^{\tau,p}_h)\right)}_{(\Lambda^{k}_h)^{-1}}\\
&\overset{\textrm{(a)}}{\leq} CdH\sqrt{\log \left(\frac{(c_\beta+1)dKHP}{\delta}\right)}\norm{\phi(x,a)}_{(\Lambda^{k}_h)^{-1}},
\end{align*}
where (a) follows from Lemma~\ref{lem:concentration}. For the term (II), $
|\textrm{(II)}|
\leq \norm{\tilde{w}^{k}_{h}}_{(\Lambda^{k}_h)^{-1}} \norm{\phi(s,a)}_{(\Lambda^{k}_h)^{-1}}
\leq \frac{H\sqrt{d}}{\sqrt{\lambda}} \|\phi(s,a)\|_{(\Lambda^{k}_h)^{-1}}$. Then, using these results with $\lambda=1$,
\begin{equation}
\label{eq:optimism-bound-val-RF}
\begin{aligned}
|\phi(x,a)^\top w^{k}_h -
\Pe_h V_{h+1}^k(x,a)|&\leq (CdH\sqrt{\log((c_\beta+1)dKHP/\delta)}+H\sqrt{d}) 
\norm{\phi(x,a)}_{(\Lambda^{k}_h)^{-1}}
\\
&\overset{\textrm{(a)}}{\leq} \bar{C}dH\sqrt{\log((c_\beta+1)dKHP/\delta)}\norm{\phi(x,a)}_{(\Lambda^{k}_h)^{-1}} 
\\
&\overset{(b)}{\leq}
c_{\beta}dH\sqrt{\log(dKHP/\delta)}
\norm{\phi(x,a)}_{(\Lambda^{k}_h)^{-1}}\\
&=\beta\norm{\phi(x,a)}_{(\Lambda^{k}_h)^{-1}},
\end{aligned}
\end{equation}
where (a) follows from the fact that 
$\sqrt{\log((c_\beta+1)dKHP/\delta)}>1$ independently from $c_\beta$ from equation~\eqref{eq:upp_low_bound} from the proof of Lemma~\ref{lem:stochastic_term} and using the absolute constant $\bar{C}=C+1$; and where (b) follows from $c_\beta$ being chosen as in the expression of equation~\eqref{eq:c_beta_lower} from the proof of Lemma~\ref{lem:basic_relation}.

Now we prove statement~\ref{item:one_up} of the lemma, which we do by induction. Consider any $x\in \S$. For step $H+1$, it trivially holds that $
V_{H+1}^{*}(x, r^k)\le V_{H+1}^k(x)$ since $V_{H+1}^{*}(x, r^k)=V_{H+1}^k(x)=0$.
Now, assume that at step $h\in [H]$ it holds that $
V_{h+1}^{*}(x,r^k)\le V_{h+1}^k(x)$. Then, using the Bellman equation,
\begin{multline*}
V_{h}^{*}(x, r^k)
= \max_{a\in\A}
\{r^k_h(x,a) + \Pe_h V_{h+1}^{*}(\cdot, r^k)(x,a)\}
\overset{\textrm{(a)}}{\leq} 
\max_{a\in\A}
\{r^k_h(x,a) + \Pe_h V_{h+1}^{k}(x,a)\}\\
\overset{(b)}{\leq} 
\max_{a\in\A}
\{r^k_h(x,a) + \phi(x,a)^\top w^{k}_h 
+ \beta\|\phi(x,a)\|_{(\Lambda^{k}_h)^{-1}}\}\\
\overset{\textrm{(c)}}{\leq}\min\left\{\max_{a\in\A}
\{r^k_h(x,a) + \phi(x,a)^\top w^{k}_h 
+ \beta\|\phi(x,a)\|_{(\Lambda^{k}_h)^{-1}}\},H\right\}
= V^k_h(x)
,
\end{multline*}
where (a) follows from the induction assumption, (b) follows from~\eqref{eq:optimism-bound-val-RF}, and (c) from $V^*_h(s,r^k)\leq H$. This finishes the proof by induction.

Now we start our proof for statement
\ref{item:three_up} of the lemma. For every $(k,h,p)\in [K]\times[H-1]\times[P]$, let
$
\zeta_h^{k,p} = \Pe_h V_{h+1}^k(x_h^{k,p},a_h^{k,p}) - V_{h+1}^k(x_{h+1}^{k,p})
$ and $\zeta_H^{k,p} = 0$ for any $(k,p)\in[K]\times[P]$. 
Now, fix any $p\in[P]$ and condition on event $\mathcal{E}$ to obtain 
\begin{equation}
\label{eq:V-bound-RF}
\begin{aligned}
\sum_{k=1}^KV_1^k(s_0)
&\overset{\textrm{(a)}}{\leq} \sum_{k=1}^K\left((\phi^{k,p}_1)^\top w^{k}_1 
+ r^k_1
+ \beta\|\phi(x_1^{k,p},a_1^{k,p}) \|_{(\Lambda^{k}_1)^{-1}}\right)\\
&\leq 
\sum_{k=1}^K
\left((\phi^{k,p}_1)^\top w^{k}_1 
+\beta(1+1/H)\|\phi(x_1^{k,p},a_1^{k,p}) \|_{(\Lambda^{k}_1)^{-1}}\right)\\
&\overset{(b)}{\leq} \sum_{k=1}^K
\left( \Pe_1 V_{2}^k(x_1^{k,p},a_1^{k,p})
+ \beta(2+1/H)\|\phi(x_1^{k,p},a_1^{k,p}) \|_{(\Lambda^{k}_1)^{-1}}\right)\\
&=
\sum_{k=1}^K\left(\zeta_1^{k,p}+ V_{2}^k(x_2^{k,p})
+ \beta(2+1/H)\|\phi(x_1^{k,p},a_1^{k,p}) \|_{(\Lambda^{k}_1)^{-1}}\right)\\
&\leq \ldots\\
&\leq 
\sum_{k=1}^K\sum_{h=1}^{H-1}\zeta_h^{k,p} +\beta(2+1/H) 
\sum_{k=1}^K\sum_{h=1}^H\norm{\phi(x_h^{k,p},a_h^{k,p})}_{(\Lambda^{k}_h)^{-1}}\\
&=\sum_{k=1}^K\sum_{h=1}^{H}\zeta_h^{k,p} +\beta(2+1/H) 
\sum_{k=1}^K\sum_{h=1}^H\norm{\phi(x_h^{k,p},a_h^{k,p})}_{(\Lambda^{k}_h)^{-1}}.
\end{aligned}
\end{equation}
where (a) follows from the fact that all agents are equally initialized (line 3 of algorithm~\ref{alg:main_LIN_RF_POLSVI_exp}) which then implies $Q_1^k(x^{k,p_1}_1,a_1^{k,p_1})=Q_1^k(x^{k,p_2}_1,a_1^{k,p_2})$ for every $p_1,p_2\in[P]$ since $x^{k,p_1}_1 = x^{k,p_2}_1$; 
and (b) follows from~\eqref{eq:optimism-bound-val-RF}.

Now, using the sequence $\L^*$ defined in the proof of Theorem~\ref{thm:main-paralel} and following that proof itself, we can deduce that $\{\zeta^{k,p}_h\}_{(k,h,p)\in\L^*}$ is a martingale difference sequence and use the Azuma-Hoeffding inequality to conclude that, for any $\epsilon > 0$,
\begin{equation*}
\Pr \left(\sum_{k=1}^{K}\sum_{p=1}^{P}\sum_{h=1}^{H}  \zeta^{k,p}_{h}> \epsilon \right) \leq \exp \bigg (\frac{-2 \epsilon^2 } {(KHP)(4H^2) } \bigg ).
\end{equation*}
We choose $\epsilon=\sqrt{2KH^3P\log\left(\frac{1}{\delta}\right)}$. Then, with probability at least $1 -\delta$,   
\begin{equation}
\label{eq:mds-bound-RF}
  \sum_{k=1}^{K}\sum_{p=1}^P\sum_{h=1}^{H}  \zeta^{k,p}_{h}\leq\sqrt{2KH^3P\log\left(\frac{1}{\delta}\right)} \leq 2H\sqrt{KHP\iota}, 
\end{equation}
recalling that $\iota = \log\left(\frac{dKHP}{\delta}\right)$. We call $\bar{\mathcal{E}}$ the event such that~\eqref{eq:mds-bound-RF} holds. 

Now, let $\mathcal{D}_{k,h}$ be the event that there is a doubling round at step $h\in[H]$ of episode $k\in[K]$, i.e., ``$\Lambda_h^{k+1}\succ 2\Lambda_h^k$". Following a similar analysis to Theorem~\ref{thm:main-paralel}, we can conclude that \begin{equation}
    \label{eq:bound_dr}
\begin{aligned}
\sum_{k=1}^K\sum^P_{p=1}\sum_{h=1}^H \|\phi(x_h^{k,p},a_h^{k,p}) \|_{(\Lambda^{k}_h)^{-1}}
\leq
2H\sqrt{dKP\iota}
+ \frac{dH}{\log(2)}P\log\left(1+\frac{KP}{d}\right).
\end{aligned}
\end{equation}
Now, we go back to~\eqref{eq:V-bound-RF}, sum over the agents $p\in[P]$ both sides of the inequality and divide then by $P$ both sides respectively, and on this result then use both~\eqref{eq:bound_dr} and~\eqref{eq:mds-bound-RF} to obtain an upper bound, which results in  
\begin{equation*}
\begin{aligned}
\sum_{k=1}^{K}V^{k}_1(s_0)
\leq 2H\sqrt{\frac{KH\iota}{P}}+6\beta 
H\sqrt{\frac{dK\iota}{P}}
+ \frac{3\beta dH}{\log(2)}\log\left(1+\frac{KP}{d}\right).
\end{aligned}
\end{equation*}
This finishes the proof of statement~\ref{item:three_up} as we note $3/\log(2) < 10$ and thereby completes our proof.

\end{proof}

Assume we take the underlying MDP $\mathcal{M}$ and decide to use $\{u_h/H\}_{h\in[H]}$ as the underlying reward, then, the next lemma provides an upper bound of its associated optimal value function. 

\begin{lemma}[Bounding the optimal value function with rewards taken from the planning phase]
\label{lem:small_V}
With probability $1-2\delta$,
$$
V^{*}_1\left(s_0, \left\{\frac{u_h}{H}\right\}_{h\in[H]}
\right)
\leq (2+6c_\beta)\sqrt{\frac{d^3H^4\iota^2}{KP}}
+ 10c_\beta\frac{\sqrt{d^4H^4\iota}}{K}\log\left(1+\frac{KP}{d}\right)
$$
where $\{u_h\}_{h\in[H]}$ is as described in the planning phase of RF-POLSVI.
\end{lemma}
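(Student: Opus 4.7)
The strategy is to compare the optimal value function for the reward $\{u_h/H\}_{h \in [H]}$ used in the planning phase against the sequence of bonus-based rewards $\{r^k\}_{k \in [K]}$ constructed during exploration, and then invoke Lemma~\ref{lem:sum_V} to control the latter.

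\emph{Step 1 (Monotonicity of the bonuses).} Recall that in the planning phase $\Lambda_h = \lambda I_d + \sum_{p=1}^P \sum_{\tau=1}^K \phi_h^{\tau,p}(\phi_h^{\tau,p})^\top$ aggregates the data from all $K$ episodes, whereas the matrix $\Lambda_h^k$ used in episode $k$ of exploration aggregates only the first $k-1$ episodes. Since the difference is a sum of positive semidefinite rank-one terms, $\Lambda_h \succeq \Lambda_h^k$, hence $\Lambda_h^{-1} \preceq (\Lambda_h^k)^{-1}$. Because the clip $\min\{\cdot, H\}$ is monotone in its argument, this gives $u_h(x,a) \leq u_h^k(x,a)$ pointwise for every $(x,a,h,k)$, and equivalently $u_h/H \leq r_h^k$. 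In particular $u_h/H \in [0,1]$, so it is a valid reward.

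\emph{Step 2 (Comparison of optimal values).} Pointwise domination of rewards at every step gives pointwise domination of value functions: for any policy $\pi$ and each $k \in [K]$, $V_1^\pi(s_0;\{u_h/H\}_{h \in [H]}) \leq V_1^\pi(s_0; r^k)$. Taking the supremum over $\pi$ on both sides yields
\begin{equation*}
V_1^*\!\left(s_0;\{u_h/H\}_{h \in [H]}\right) \leq V_1^*(s_0; r^k), \qquad \forall k \in [K].
\end{equation*}

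\emph{Step 3 (Averaging and applying Lemma~\ref{lem:sum_V}).} Averaging the previous inequality over $k \in [K]$ and applying statement~\ref{item:one_up} of Lemma~\ref{lem:sum_V}, namely $V_1^*(s_0; r^k) \leq V_1^k(s_0)$, produces
\begin{equation*}
V_1^*\!\left(s_0;\{u_h/H\}_{h \in [H]}\right) \leq \frac{1}{K}\sum_{k=1}^K V_1^k(s_0).
\end{equation*}
Then invoke statement~\ref{item:three_up} of the same lemma to obtain
\begin{equation*}
V_1^*\!\left(s_0;\{u_h/H\}_{h \in [H]}\right) \leq \frac{2H}{K}\sqrt{\frac{KH\iota}{P}} + \frac{6\beta H}{K}\sqrt{\frac{dK\iota}{P}} + \frac{10 \beta d H}{K}\log\!\left(1+\frac{KP}{d}\right).
\end{equation*}

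\emph{Step 4 (Substituting $\beta$ and collecting terms).} Substituting $\beta = c_\beta d H \sqrt{\iota}$, the second term becomes $6 c_\beta \sqrt{d^3 H^4 \iota^2/(KP)}$ and the third becomes $10 c_\beta \sqrt{d^4 H^4 \iota}/K \cdot \log(1+KP/d)$. Because $d \geq 2$ and $\iota \geq 1$, the leftover first term satisfies $2\sqrt{H^3\iota/(KP)} \leq 2\sqrt{d^3 H^4 \iota^2/(KP)}$, and can be absorbed into the base term to yield the coefficient $2 + 6 c_\beta$. The probability $1-2\delta$ is inherited from the two high-probability events (the concentration event $\mathcal{E}$ from Lemma~\ref{lem:concentration} and the Azuma-Hoeffding event $\bar{\mathcal{E}}$) used in the proof of Lemma~\ref{lem:sum_V}.

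\emph{Anticipated obstacle.} There is no serious analytic obstacle: the argument is a short chain of a pointwise bonus comparison and a value-function monotonicity, after which Lemma~\ref{lem:sum_V} does all of the heavy lifting. The only care is in verifying that the clipping in the definition of $u_h$ preserves the Loewner-order comparison of the bonuses (which is immediate from monotonicity of $\min\{\cdot, H\}$), and in tracking constants when absorbing the $O(\sqrt{H^3\iota/(KP)})$ term into the base term.
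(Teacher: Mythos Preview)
Your proposal is correct and follows essentially the same approach as the paper's proof: both establish the Loewner-order comparison $\Lambda_h \succeq \Lambda_h^k$ to get $u_h/H \leq r_h^k$ pointwise, deduce $V_1^*(s_0;\{u_h/H\}) \leq V_1^*(s_0;r^k)$, average over $k$, and then invoke both parts of Lemma~\ref{lem:sum_V} before substituting $\beta = c_\beta d H\sqrt{\iota}$. Your treatment is in fact slightly more explicit than the paper's in noting why the clip $\min\{\cdot,H\}$ preserves the comparison and in justifying the absorption of the $2\sqrt{H^3\iota/(KP)}$ term.
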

\begin{proof}

Observe we have that for every $k\in[K]$, $\Lambda_h\succeq \Lambda_h^k \implies \Lambda_h^{-1}\preceq (\Lambda_h^k)^{-1}$ (remember that $\Lambda_h$ is defined in the planning phase of RF-POLSVI). Then, for every $(k,h)\in [K]\times[H]$, 
$r^k_h(\cdot, \cdot) \geq u_h(\cdot, \cdot)/H$, and so for any $x\in\S$,
\begin{equation}
\label{eq:value-rew-RF}
V_1^*\left(x, \left\{\frac{u_h}{H}\right\}_{h\in[H]}\right) \leq 
V_1^*(x, \{r_h^k\}_{h\in[H]}).
\end{equation}
%

Then,
\begin{align*}
V_1^*(s_0, \{u_h/H\}_{h\in[H]})
&\overset{\textrm{(a)}}{\leq} \frac{1}{K}\sum_{k=1}^KV_1^*(s_0, r^k)\\
&\overset{(b)}{\leq} 2H\sqrt{\frac{H\iota}{KP}}+6\beta 
H\sqrt{\frac{d\iota}{KP}}
+ 10\beta dH\frac{1}{K}\log\left(1+\frac{KP}{d}\right)\\
&\leq 
(2+6c_\beta)\sqrt{\frac{d^3H^4\iota^2}{KP}}
+ 10c_\beta\frac{\sqrt{d^4H^4\iota}}{K}\log\left(1+\frac{KP}{d}\right)
\end{align*}
where (a) follows from~\eqref{eq:value-rew-RF} and (b) follows from Lemma~\ref{lem:sum_V}. This finishes the proof.
\end{proof}

The following lemma shows that optimism can be used to bound the optimal Q-function. 
\begin{lemma}[Action-value bounds using optimism]\label{lem:confidence_planning}
With probability $1-\delta$, 
for the reward $r$ given in the planning phase of RF-POLSVI and for any $h \in [H]$,
\begin{equation}
\label{eq:Q-funct_opt_bound}
Q^*_h(\cdot,\cdot; r)\le \hat{Q}_h(\cdot, \cdot) 
\le
r_h(\cdot, \cdot) 
+ 
\Pe_h\hat{V}_{h+1}(\cdot,\cdot)
%
%
+
2u_h(\cdot, \cdot)
.
\end{equation}
\end{lemma}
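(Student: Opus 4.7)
I would prove both inequalities simultaneously by backward induction on $h$, starting from the trivial base case $h=H+1$ where $\hat{Q}_{H+1}\equiv Q^*_{H+1}\equiv 0$ and the claim holds vacuously. The crux of the inductive step is a single concentration statement analogous to equation~\eqref{eq:optimism-bound-val-RF} in the proof of Lemma~\ref{lem:sum_V}, but now applied to the planning-phase quantities:
\begin{equation*}
|\phi(x,a)^\top \hat{w}_h - \Pe_h \hat{V}_{h+1}(x,a)| \;\leq\; \beta\, \|\phi(x,a)\|_{\Lambda_h^{-1}},
\end{equation*}
which, combined with $u_h(x,a)=\min\{\beta\|\phi(x,a)\|_{\Lambda_h^{-1}}, H\}$, yields $|\phi(x,a)^\top \hat{w}_h - \Pe_h \hat{V}_{h+1}(x,a)|\leq u_h(x,a)$ whenever the clipping at $H$ is inactive; and, when the clipping is active, the target inequalities hold trivially because both $\hat{Q}_h$ and $Q^*_h$ lie in $[0,H]$.

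\textbf{Deriving the concentration bound.} I would follow the same decomposition used to prove equation~\eqref{eq:optimism-bound-val-RF}: write $\Pe_h \hat V_{h+1}(x,a)=\langle\phi(x,a),\tilde{w}_h\rangle$ with $\tilde w_h=\int \hat V_{h+1}(x')\,d\mu_h(x')$ so that $\|\tilde w_h\|\leq H\sqrt{d}$; then split $\phi(x,a)^\top \hat{w}_h - \Pe_h \hat{V}_{h+1}(x,a)$ into a self-normalized martingale term plus a regularization term $-\lambda\,\phi(x,a)^\top \Lambda_h^{-1}\tilde w_h$. The self-normalized term is controlled by Lemma~\ref{lem:self_norm_covering} together with a covering-number bound for the function class containing $\hat V_{h+1}$; crucially, the class has exactly the same structure as $\mathcal V$ in~\eqref{eq:function_class-MDP} (a clipped maximum over $a$ of $\hat w^\top\phi(\cdot,a)$ plus a bonus), so the covering bound from Lemma~\ref{lem:concentration} applies verbatim after replacing $V_{h+1}^k$ by $\hat V_{h+1}$. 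The parameter bound $\|\hat w_h\|\leq H\sqrt{dKP}$ is obtained by the same Cauchy--Schwarz argument as in Lemma~\ref{lem:wn_estimate}. Absorbing constants into $c_\beta$ via the choice in equation~\eqref{eq:c_beta_lower} yields the $\beta\|\phi(x,a)\|_{\Lambda_h^{-1}}$ bound with probability at least $1-\delta$.

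\textbf{Closing the induction.} Given the concentration bound, the upper bound is immediate:
\begin{equation*}
\hat Q_h(x,a) \;\leq\; \hat w_h^\top\phi(x,a)+r_h(x,a)+u_h(x,a) \;\leq\; r_h(x,a)+\Pe_h\hat V_{h+1}(x,a)+2u_h(x,a),
\end{equation*}
where the first inequality drops the clip at $H$ and the second uses $\hat w_h^\top\phi(x,a)\leq \Pe_h\hat V_{h+1}(x,a)+u_h(x,a)$. For the lower bound, the inductive hypothesis $Q^*_{h+1}(\cdot,\cdot;r)\leq \hat Q_{h+1}(\cdot,\cdot)$ gives $V^*_{h+1}(\cdot;r)\leq \hat V_{h+1}(\cdot)$ by taking max over actions, so by monotonicity of $\Pe_h$,
\begin{equation*}
Q^*_h(x,a;r) \;=\; r_h(x,a)+\Pe_h V^*_{h+1}(x,a;r) \;\leq\; r_h(x,a)+\Pe_h \hat V_{h+1}(x,a) \;\leq\; r_h(x,a)+\hat w_h^\top\phi(x,a)+u_h(x,a).
\end{equation*}
Since $Q^*_h\leq H$, the right-hand side may be clipped at $H$ without loss, giving $Q^*_h(x,a;r)\leq \hat Q_h(x,a)$ and completing the induction.

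\textbf{Main obstacle.} The only nontrivial step is justifying that the class containing $\hat V_{h+1}$ admits the same covering-number bound as the exploration-phase class $\mathcal V$ in~\eqref{eq:function_class-MDP}; the reward $r_h$ in planning is user-specified rather than a bonus term divided by $H$, but since $r_h$ takes values in $[0,1]$ and is \emph{fixed} (not learned), it contributes only an additive constant inside the clipped max, which does not enlarge the metric entropy. Once this is noted, the concentration argument transports directly from Lemma~\ref{lem:concentration} to the planning phase, and the rest is bookkeeping.
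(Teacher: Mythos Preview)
Your proposal is correct and follows essentially the same approach as the paper. The paper establishes the same concentration inequality $|\phi(x,a)^\top\hat w_h-\Pe_h\hat V_{h+1}(x,a)|\leq\beta\|\phi(x,a)\|_{\Lambda_h^{-1}}$ by reusing the derivation of~\eqref{eq:optimism-bound-val-RF}, then obtains the right-hand inequality directly (no induction needed) and the left-hand inequality by backward induction exactly as you describe; your only difference is packaging both inequalities into a single simultaneous induction, and you are more explicit than the paper about why the fixed user reward $r_h$ does not inflate the covering number of the planning-phase value class.
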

\begin{proof}
We first prove the right inequality in~\eqref{eq:Q-funct_opt_bound}. 
Carefully following the same derivation of equation~\eqref{eq:optimism-bound-val-RF} of Lemma~\ref{lem:sum_V} (using $\hat{V}_h$ instead of $V^k_h$ for every $h\in[H]$), we obtain that, with probability at least $1 - \delta$, 
for every $(x,a,h)\in\S\times\A\times [H]$, 
\begin{equation}
\label{eq:optimism-bound-val-RF_1}     
\left|
\phi(x,a)^\top \hat{w}_h - 
\Pe_h\hat{V}_{h+1}(x,a)
\right| \leq \beta \|\phi(x,a)\|_{\Lambda_h^{-1}},
\end{equation}
and so 
\begin{align*}
\hat{Q}_h(x, a) 
&\leq  \phi(x, a)^\top \hat{w}_h + r_{h}(x, a) +  u_h(x, a)\\
&\leq  r_h(x, a) + 
\Pe_h\hat{V}_{h+1}(x,a)
%
%
+ u_h(x,a) + \beta\|\phi(x,a)\|_{\Lambda_h^{-1}}\\
\overset{\textrm{(a)}}{\implies}\hat{Q}_h(x, a) &\leq\min\{r_h(x, a) 
+ \Pe_h\hat{V}_{h+1}(x,a)
+
2u_h(x, a),H\}\\
&\leq r_h(x, a) 
+ \Pe_h\hat{V}_{h+1}(x,a)
+
2u_h(x, a)
\end{align*}
where (a) follows from the fact $\hat{Q}_h(s, a) \le H$ and that $u_h(\cdot, \cdot)=\min\left\{\beta\norm{\phi(x,a)}_{\Lambda_h^{-1}}, H\right\}$.

We now prove the left inequality in~\eqref{eq:Q-funct_opt_bound}, which we do by induction on $h$. Consider any $(x,a)\in\S\times\A$. For the time step $h = H + 1$ the inequality trivially holds since the value functions are equal to zero. Suppose now that for some $h \in [H]$, $Q^*_{h + 1}(x,a; r)\leq \hat{Q}_{h + 1}(x, a)$. 
Then,
\begin{align*}
\phi(x,a)^\top \hat{w}_h 
\overset{\textrm{(a)}}{\geq}& 
\Pe_h 
\hat{V}_{h+1}
(x,a)
- \beta\|\phi(x,a)\|_{\Lambda_h^{-1}}\\
\overset{(b)}{\geq}& \Pe_h V_{h+1}^*(x,a; r)  - \beta  \|\phi(x,a)\|_{\Lambda_h^{-1}}.
\end{align*}
where (a) follows from~\eqref{eq:optimism-bound-val-RF_1}, and (b) from the induction hypothesis. %
Finally, using the Bellman equation along with the previous expression,
 \begin{align*}
Q_h^*(x, a; r)& =  r_h(x, a) + \Pe_h V_{h+1}^*(x,a; r) \\
& \leq  r_{h}(x, a) + \phi(x, a)^\top \hat{w}_h +  \beta\|\phi(x,a)\|_{\Lambda_h^{-1}}\\
\overset{(a)}{\implies} Q_h^*(x, a; r)&\leq \min\{r_{h}(x, a) + \phi(x, a)^\top \hat{w}_h +  \beta\|\phi(x,a)\|_{\Lambda_h^{-1}},H\}\\
&=\hat{Q}_h(x,a),
\end{align*}
where (a) follows since $Q^*_{h + 1}(x,a; r) \le H$. This finishes the proof by induction.
\end{proof}

We now continue with the proof of Theorem~\ref{thm:main-reward-free}, conditioning on the events defined in Lemma~\ref{lem:sum_V} and Lemma~\ref{lem:confidence_planning} which hold altogether with probability at least $1 - 3\delta$.

Then, 
\begin{align*}
&V^{*}_1(s_0; r)
- 
V^{\pi}_1(s_0; r)\\
&\overset{\textrm{(a)}}{\leq}\hat{V}_1(s_0)- V^{\pi}_1(s_0; r)\\
&= \hat{Q}_1(s_0,\pi_1(s_0))- Q^{\pi}_1(s_0, \pi_1(s_0); r)\\
&\leq \E_{x_2 \sim \P_1(\cdot \mid s_0, \pi_1(s_0))}[r_1(s_0, \pi_1(s_0)) + \hat{V}_2(x_2) + u_1(s_0, \pi(s_0)) - r_1(s_0, \pi_1(s_0)) - V^{\pi}_2(x_2;r)]\\
&= \E_{x_2 \sim \P_1(\cdot \mid s_0, \pi_1(s_0))}[\hat{V}_2(x_2) + u_1(s_0, \pi(s_0)) - V^{\pi}_2(x_2;r)]\\
&\leq \E_{x_2 \sim \P_1(\cdot \mid s_0, \pi_1(s_0)), x_3 \sim \P_2(\cdot, \mid x_2, \pi_2(x_2))}[u_1(s_0, \pi(s_0)) + u_2(x_2, \pi(x_2)) +  \hat{V}_3(x_3)- V^{\pi}_3(x_3;r)]\\
&= \ldots\\
&\leq\E\left[\sum_{h=1}^H u_h(x_h,\pi(x_h))\mid x_1=s_0\right]\\
&= V^{\pi}_1(s_0, \{u_h\}_{h\in{H}})\\
&\overset{(b)}{\leq}V^{*}_1(s_0,\{u_h\}_{h\in{H}})\\
&=HV^{*}_1(s_0, \{u_h/H\}_{h\in{H}}),
\end{align*}
where (a) follows from $\hat{V}_1(x) = \max_{a \in \A} \hat{Q}_1(x, a) \geq \max_{a \in \A} Q_1^*(x, a; r) = V_1^*(x; r)$ for any $x\in\S$ by Lemma~\ref{lem:confidence_planning}, and (b) follows from the definition of $V^{*}_1(s_0,\{u_h\}_{h\in{H}})$.

Finally, from Lemma~\ref{lem:small_V} we obtain 
\begin{align*}
V^{*}_1(x_1; r)
- 
V^{\pi}_1(x_1; r)\leq  
(2+6c_\beta)\sqrt{\frac{d^3H^6\iota^2}{KP}}
+ 10c_\beta\frac{\sqrt{d^4H^6\iota}}{K}\log\left(1+\frac{KP}{d}\right)
,
\end{align*}
which finishes the proof.
%


%

\section{Proof of Theorem~\ref{thm:main-reward-free_MG}}
\label{subsec:proof_main_rf_polsvi_MG}

For simplicity, we will use the same notation as described at the beginning of section~\ref{subsec:proof_main_parallel} for the exploration phase, with the modification of including an extra argument due to the action of the second player, e.g., $r^{k,p}_h:=r^k_h(x^{k,p}_h,a^{k,p}_h,b^{k,p}_h)$. For the planning phase, we define the value functions $\overline{V}_h(x)=\E_{a\sim\pi_{h+1}(x),b\sim\overline{D}(x)}[\overline{Q}_{h}(x,a,b)]$ and $\underline{V}_h(x)=\E_{a\sim\underline{D}(x),b\sim\nu_{h+1}(x)}[ \underline{Q}_{h}(x,a,b)]$, where $\pi,\overline{D},\nu$ and $\underline{D}$ are described in lines 11-12 of algorithm~\ref{alg:main_LIN_RFMG_POLSVI_plan}. 
%

Let $\bar{a}=(a,b)$, $a\in\A$, $b\in\cB$, so that $\bar{a}\in\bar{A}:=\A\times\cB$. Then, we can reduce the planning phase of the RFMG-POLSVI algorithm as if it were the one from RF-POLSVI over an underlying MDP $\mathcal{M}'=(\S,\bar{A},H,\P; r)$ with the kernel being considered as $\P_h(\cdot\mid x,\bar{a})$ 
and the reward function as $r_h(x,\bar{a})$ with $\bar{a}\in\A$ and $x\in\S$ for any step $h\in[H]$ --- in other words, the  \emph{action} that a single agent takes in the MDP $\mathcal{M}'$ is equivalent to the \emph{joint action} of the two players of the Markov Game. 

Therefore, using this reduction, we use Lemma~\ref{lem:concentration} to conclude that with probability $1-\delta$, $\delta\in(0,1)$, the following event $\mathcal{E}$ holds  
\begin{multline}
\label{eq:aux-RFMG-POLSVI-one}
\forall (k, h)\in [K]\times [H]: \quad \norm{\sum_{\tau = 1}^{k-1}\sum_{p=1}^P \phi^{\tau,p}_h [V^{k}_{h+1}(x^{\tau,p}_{h+1}) - \Pe_h V^{k}_{h+1}(x_h^{\tau,p}, a_h^{\tau,p},b_h^{\tau,p})]}_{(\Lambda^k_h)^{-1}}\\
\leq CdH\sqrt{\log \left(\frac{(c_\beta+1)dKHP}{\delta}\right)}, 
\end{multline} 
where $C$ is an absolute constant independent of $c_{\beta}$. Likewise, we can use~\eqref{eq:aux-RFMG-POLSVI-one} and Lemma~\ref{lem:sum_V} to conclude that, with probability at least $1-2\delta$ for all $k\in[K]$, 
\begin{equation}
\label{item:one_up_MG} 
V^*_1(s_0, r^k)\leq V^{k}_1(s_0),
\end{equation}
and that 
\begin{align}
\label{item:three_up_MG} &\sum_{k=1}^{K}V^{k}_1(s_0)
\leq 2H\sqrt{\frac{KH\iota}{P}}+6\beta 
H\sqrt{\frac{d\bar{K}K\iota}{P}}
+ 10\beta dH\log\left(1+\frac{KP}{d}\right).
\end{align}

\begin{lemma}[Concentration bounds using optimism] \label{lem:bonus_concentrate_plan_game} 
Consider the setting of Theorem~\ref{thm:main-reward-free_MG}. Then, the following event $\tilde{\cE}$ holds with probability at least $1-\delta$: $\forall (x,a,b,h) \in \S \times \A \times \cB \times [H]$,
\begin{align}
&|\Pe_h \Vo_{h+1}(x,a,b) - \wo_h^\top\phi(x,a,b)|  \leq 
\beta\norm{\phi(x,a,b)}_{\Lambda_h^{-1}},
\label{eq:bonus_concentrate_game1}\\
&|\Pe_h \Vu_{h+1}(x,a,b) - \wu_h^\top\phi(x,a,b)|  \leq 
\beta\norm{\phi(x,a,b)}_{\Lambda_h^{-1}}.
\label{eq:bonus_concentrate_game2}
\end{align}
\end{lemma}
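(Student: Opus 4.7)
The plan is to mirror the strategy used to establish equation~\eqref{eq:optimism-bound-val-RF_1} in the proof of Lemma~\ref{lem:confidence_planning}, adapted to the Markov game setting where the ``value function'' is now an expectation under a Nash-equilibrium pair of policies rather than a max over a single action. I will prove the bound for $\overline{V}$; the argument for $\underline{V}$ is entirely symmetric.

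First, I would exploit the linear MG structure: since $\P_h(\cdot\mid x,a,b)=\langle\phi(x,a,b),\mu_h(\cdot)\rangle$, we can write $\Pe_h\Vo_{h+1}(x,a,b)=\langle \phi(x,a,b),\tilde w_h\rangle$ with $\tilde w_h=\int \Vo_{h+1}(x')\,d\mu_h(x')$, and since $0\leq \Vo_{h+1}\leq H$ we get $\norm{\tilde w_h}\leq H\sqrt d$. Then I would expand $\phi(x,a,b)^\top \wo_h-\Pe_h\Vo_{h+1}(x,a,b)$ using the defining equation of $\wo_h$ in Algorithm~\ref{alg:main_LIN_RFMG_POLSVI_plan} and the identity $\Lambda_h=\lambda I_d+\sum_{\tau,p}\phi^{\tau,p}_h(\phi^{\tau,p}_h)^\top$ to split the difference into three terms: a self-normalized stochastic term $\phi(x,a,b)^\top\Lambda_h^{-1}\sum_{\tau,p}\phi^{\tau,p}_h[\Vo_{h+1}(x^{\tau,p}_{h+1})-\Pe_h\Vo_{h+1}(x^{\tau,p}_h,a^{\tau,p}_h,b^{\tau,p}_h)]$ and a regularization remainder $-\lambda\phi(x,a,b)^\top\Lambda_h^{-1}\tilde w_h$, exactly as done in the derivation leading to~\eqref{eq:optimism-bound-val-RF}. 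The regularization term is bounded by $\sqrt{\lambda}\norm{\tilde w_h}\norm{\phi(x,a,b)}_{\Lambda_h^{-1}}\leq H\sqrt d\,\norm{\phi(x,a,b)}_{\Lambda_h^{-1}}$ using Cauchy--Schwarz and $\lambda=1$.

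The main step is bounding the stochastic term via Lemma~\ref{lem:self_norm_covering}. For this I need a function class containing the possible $\Vo_{h+1}$ whose log-covering number grows polynomially. I would define the class of functions of the form $x\mapsto \mathrm{NashVal}(Q(x,\cdot,\cdot))$ where $Q(x,a,b)=\Pi_{[0,H]}[\hat w^\top\phi(x,a,b)+r_h(x,a,b)+\min\{\beta(\phi^\top\hat\Lambda^{-1}\phi)^{1/2},H\}]$, with $\norm{\hat w}\leq H\sqrt{dKP}$ (the bound derived exactly as in Lemma~\ref{lem:wn_estimate}) and $\hat\Lambda\succeq \lambda I_d$, and where $\mathrm{NashVal}(M)=\max_{\pi\in\Delta(\A)}\min_{\nu\in\Delta(\cB)}\E_{a\sim\pi,b\sim\nu}[M(a,b)]$. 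The critical observation is that $\mathrm{NashVal}$ is $1$-Lipschitz in sup-norm (a standard consequence of the minimax theorem), so $\sup_x|\Vo(x)-\Vo'(x)|\leq \sup_{x,a,b}|Q(x,a,b)-Q'(x,a,b)|$; this reduces covering the $\Vo$-class to covering the $Q$-class, yielding exactly the same log-covering bound as in equation~\eqref{eq:dist-covering-MG-0} of Lemma~\ref{lem:concentration} (up to constants coming from the $\Pi_{[0,H]}$-clipping being non-expansive). Plugging this covering bound into Lemma~\ref{lem:self_norm_covering}, setting $\epsilon=dH/(KP)$ as in Lemma~\ref{lem:stochastic_term}, and collecting terms gives $\norm{\sum_{\tau,p}\phi^{\tau,p}_h[\Vo_{h+1}(x^{\tau,p}_{h+1})-\Pe_h\Vo_{h+1}(\cdot)]}_{\Lambda_h^{-1}}\leq \bar C\, dH\sqrt{\log((c_\beta+1)dKHP/\delta)}$ with high probability.

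Combining this with Cauchy--Schwarz $|\langle\phi(x,a,b),\Lambda_h^{-1}(\cdot)\rangle|\leq \norm{\phi(x,a,b)}_{\Lambda_h^{-1}}\norm{(\cdot)}_{\Lambda_h^{-1}}$ and absorbing the constant $H\sqrt d$ from the regularization term, the bound becomes $\leq (\bar C+1)dH\sqrt{\log((c_\beta+1)dKHP/\delta)}\,\norm{\phi(x,a,b)}_{\Lambda_h^{-1}}$; choosing $c_\beta$ as in~\eqref{eq:c_beta_lower} upgrades this to $\beta\norm{\phi(x,a,b)}_{\Lambda_h^{-1}}$, yielding~\eqref{eq:bonus_concentrate_game1}. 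The bound~\eqref{eq:bonus_concentrate_game2} for $\underline V$ is established identically, and a union bound over these two events (together with one application to the stochastic term, whose event is the same for both $\Vo$ and $\Vu$ up to choice of covering class) delivers the overall $1-\delta$ probability. The main obstacle I anticipate is verifying carefully that the $\mathrm{NashVal}$-induced function class admits the same type of covering bound as the max-based class in~\eqref{eq:function_class-MDP}: the key insight is the $1$-Lipschitz property of the Nash value, which makes the covering essentially identical to the MDP case.
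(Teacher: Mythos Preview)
Your proposal is correct and follows essentially the same approach as the paper's proof: both bound $\norm{\wo_h}$, identify the function class containing $\Vo_{h+1}$, use the non-expansiveness of the max--min (Nash value) operator to reduce the covering argument to the same form as in the MDP case, invoke Lemma~\ref{lem:self_norm_covering}, and then follow the decomposition into stochastic and regularization pieces exactly as in the derivation of~\eqref{eq:optimism-bound-val-RF}. The paper's step ``(a) follows from the fact that the max-min operator is non-expansive'' in its distance calculation~\eqref{eq:dist-covering-MG} is precisely your ``$\mathrm{NashVal}$ is $1$-Lipschitz in sup-norm'' observation, so the covering-number argument is identical in substance.
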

\begin{proof} 
We first focus on proving equation~\eqref{eq:bonus_concentrate_game1}.
Closely following the exact proof as in Lemma~\ref{lem:wn_estimate}, we conclude that 
$\norm{\wo_h}\leq H\sqrt{dKP}$ for any $h\in[H]$.
Then, we notice that $\Vo_h$, $h\in[H]$, belongs to the following function class 
\begin{multline}
\label{eq:function_class-MG}
\mathcal{V} = \left\{V:\S\to\R\;\Big|\; V(\cdot)=
\min\left\{
\max\left\{
\max_{\pi'\in\Delta(\A)}\min_{\nu'\in\Delta(\cB)}\E_{a\sim\pi',b\sim\nu'}[\hat{w}^\top\phi(\cdot,a,b)+r(\cdot,a,b) \right.\right.\right.\\
\left.\left.\left. +\min\{\beta\sqrt{\phi(\cdot,a,b)^\top\hat{\Lambda}^{-1}\phi(\cdot,a,b)},H\}]
,0\right\}, H
\right\}\right\},
\end{multline}
where $\norm{\hat{w}}\leq H\sqrt{dKP}$, the minimum eigenvalue of $\hat{\Lambda}$ is greater or equal than $\lambda$, and $\norm{\phi(x, a,b)}\leq 1$ for all $(x,a,b)\in\S\times\A\times\cB$. For any $V,V'\in\mathcal{V}$, let $\dist(V,V')=\sup_{x\in\S} |V(x) - V'(x)|$. Set $\hat{u}(x,a,b):=\min\{\beta\sqrt{\phi(\cdot,a,b)^\top\hat{\Lambda}^{-1}\phi(\cdot,a,b)},H\}$. Then, 
\begin{equation}
\label{eq:dist-covering-MG}
\begin{aligned}
\dist(V,V')
&=\sup_{x\in\S}
\Big|\max_{\pi'\in\Delta(\A)}\min_{\nu'\in\Delta(\cB)}\E_{a\sim\pi',b\sim\nu'}\left[\Pi_{[0,H]}[\hat{w}^\top\phi(x,a,b)+r(x,a,b)+\hat{u}(x,a,b)]\right]\\
&\quad-
\max_{\pi'\in\Delta(\A)}\min_{\nu'\in\Delta(\cB)}\E_{a\sim\pi',b\sim\nu'}\left[\Pi_{[0,H]}[(\hat{w}')^\top\phi(x,a,b)+r(x,a,b)+\hat{u}'(x,a,b)]\right]\Big|\\
&\overset{\textrm{(a)}}{\leq}\sup_{x\in\S,a\in\A,b\in\cB}
\Big|\Pi_{[0,H]}[\hat{w}^\top\phi(x,a,b)+r(x,a,b)+\hat{u}(x,a,b)]\\
&\quad-
\Pi_{[0,H]}[(\hat{w}')^\top\phi(x,a,b)+r(x,a,b)+\hat{u}'(x,a,b)]\Big|\\
&\overset{(b)}{\leq}\sup_{x\in\S,a\in\A,b\in\cB}\Big|
(\hat{w}^\top\phi(x,a,b)+r(x,a,b)+\hat{u}(x,a,b))\\
&\quad-
((\hat{w}')^\top\phi(x,a,b)+r(x,a,b)+\hat{u}'(x,a,b))\Big|\\
&\leq\sup_{\phi:\norm{\phi}\leq 1}\Big|
(\hat{w}-\hat{w}')^\top\phi\Big|
+\sup_{\phi:\norm{\phi}\leq 1}\Big| \hat{u}(x,a,b)-\hat{u}'(x,a,b)\Big|\\
&\overset{\textrm{(c)}}{\leq}\sup_{\phi:\norm{\phi}\leq 1}\Big|
(\hat{w}-\hat{w}')^\top\phi\Big|\\
&\quad 
+\sup_{\phi:\norm{\phi}\leq 1}\beta\Big|
\sqrt{\phi^\top\hat{\Lambda}^{-1}\phi}-
\sqrt{\phi^\top(\hat{\Lambda}')^{-1}\phi}
\Big|\\
&\overset{\textrm{(d)}}{\leq} \norm{\hat{w}-\hat{w}'} + \sup_{\phi:\norm{\phi}\leq 1}
\beta\sqrt{\left|\phi^\top(\hat{\Lambda}^{-1}-(\hat{\Lambda}')^{-1})\phi\right|}\\
&=\norm{\hat{w}-\hat{w}'} + 
\beta\sqrt{\norm{\hat{\Lambda}^{-1}-(\hat{\Lambda}')^{-1}}}\\
&\leq\norm{\hat{w}-\hat{w}'} + 
\beta\sqrt{\norm{\hat{\Lambda}^{-1}-(\hat{\Lambda}')^{-1}}_F}
\end{aligned}
\end{equation}
where (a) follows from the fact that the max-min operator is non-expansive and the fact that given a set $\mathcal{G}$ and function $g:\mathcal{G}\to \R$, $\max_{\mu\in\Delta(\mathcal{G})}\E_{y\sim \mu}[g(y)]\leq \max_{y\in\mathcal{G}}g(y)$; (b) follows from the fact that the clipping 
operator is non-expansive; (c) follows from the min operator being non-expansive; and (d) follows from the inequality $|\sqrt{p}-\sqrt{q}|\leq\sqrt{|p-q|}$ for any $p,q\geq 0$. Now, we notice that~\eqref{eq:dist-covering-MG} is a bound of the same form as equation~(28) from~\citep[Lemma~D.6]{CJ-ZY-ZW-MIJ:20}, and so we can use this lemma to conclude that the $\epsilon$-covering number of $\mathcal{V}$, $\mathcal{N}_{\epsilon}$, with respect to the distance $\dist(\cdot,\cdot)$ can be upper bounded as 
\begin{equation}
\label{eq:cover-aux}
\log \mathcal{N}_{\epsilon} \le d  \log (1+ 4H\sqrt{dKP}/ \epsilon ) + d^2 \log \bigl [ 1 +  8 d^{1/2} \beta^2  / (\lambda\epsilon^2)  \bigr ].
\end{equation}
Then, we can closely follow the derivation of the equation in Lemma~\ref{lem:stochastic_term} to obtain that the following event holds with probability at least $1-\delta$,
\begin{multline*}
\forall h\in [H]: \quad \norm{\sum_{\tau = 1}^{K}\sum_{p=1}^P \phi^{\tau,p}_h [\Vo_{h+1}(x^{\tau,p}_{h+1}) - \Pe_h \Vo_{h+1}(x_h^{\tau,p}, a_h^{\tau,p},b_h^{\tau,p})]}_{\Lambda_h^{-1}}
\\\leq CdH\sqrt{\log [(c_\beta+1)dKHP/\delta]}. 
\end{multline*}
for some absolute constant $C$ independent of $c_\beta>0$.
We now condition on this event. Then, we can closely follow the same proof procedure as the one given in Lemma~\ref{lem:sum_V} to derive equation~\eqref{eq:optimism-bound-val-RF} and thus obtain, for any $(x,a,b)\in\S\times\A\times\cB$ with $\lambda=1$,
$$
|\phi(x,a,b)^\top\wo_h-\Pe_h \Vo_{h+1}(x,a,b)|\leq \beta\norm{\phi(x,a,b)}_{\Lambda_h^{-1}},
$$
where $\beta$ can be chosen in exactly the same way as in Theorem~\ref{thm:main-paralel},i.e., exactly as 
in~\eqref{item:three_up_MG}.  This finishes the proof for inequality~\eqref{eq:bonus_concentrate_game1}. 

To prove inequality~\eqref{eq:bonus_concentrate_game2}, it is easy to prove that $\Vu_h$, $h\in[H]$, belongs to a very similar function class as $\mathcal{V}$ in~\eqref{eq:function_class-MG}, and so we can use the same proof for inequality~\eqref{eq:bonus_concentrate_game1}.
\end{proof}

\begin{lemma}[Bounds on value functions for the planning phase] \label{lem:bonus_plan_game}
Consider the setting of Theorem~\ref{thm:main-reward-free_MG}. Conditioned on the event $\tilde{\cE}$ defined in Lemma \ref{lem:bonus_concentrate_plan_game}, we have that for every $(x,h)\in\S\times[H]$
\begin{align}
&V_h^\dagger(x; r) \leq \Vo_h(x) \leq \E_{a\sim \pi_h, b\sim \bre_2(\pi)_h} [(\PP_h\overline{V}_{h+1}  + r_h + 2u_h)(x,a,b)], \label{eq:optimism_game_1} \\
&V_h^\dagger(x; r) \geq \underline{V}_h(x)\geq \EE_{a\sim \bre_1(\nu)_h, b\sim \nu_h} [(\PP_h\underline{V}_{h+1}  - r_h - 2u_h)(x,a,b)]. \label{eq:optimism_game_2} 
\end{align}
\end{lemma}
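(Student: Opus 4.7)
The plan is to establish~\eqref{eq:optimism_game_1} by backward induction on $h$, from $h = H+1$ down to $h = 1$; the base case vanishes trivially, and the twin inequality~\eqref{eq:optimism_game_2} would be proved by a symmetric argument with the roles of $\pi$ and $\nu$ swapped and $\underline{Q}_h$ (built with the pessimistic bonus $-u_h$) playing the role of $\overline{Q}_h$.

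For the inductive step, assuming $V_{h+1}^\dagger(\cdot; r) \leq \overline{V}_{h+1}(\cdot)$, I would first obtain two pointwise bounds on $\overline{Q}_h$ from Lemma~\ref{lem:bonus_concentrate_plan_game}: an upper bound $\overline{Q}_h(x,a,b) \leq r_h(x,a,b) + \Pe_h \overline{V}_{h+1}(x,a,b) + 2 u_h(x,a,b)$, by combining $\overline{w}_h^\top \phi \leq \Pe_h \overline{V}_{h+1} + \beta \|\phi\|_{\Lambda_h^{-1}}$ with $u_h = \min\{\beta\|\phi\|_{\Lambda_h^{-1}}, H\}$; and a lower bound $\overline{Q}_h(x,a,b) \geq Q_h^\dagger(x,a,b) := r_h(x,a,b) + \Pe_h V_{h+1}^\dagger(x,a,b)$, by combining $\overline{w}_h^\top \phi \geq \Pe_h \overline{V}_{h+1} - \beta\|\phi\|_{\Lambda_h^{-1}}$ with the induction hypothesis and using $Q_h^\dagger \in [0, H]$ to pass through the clipping $\Pi_{[0,H]}$. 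With these in hand, the two inequalities in~\eqref{eq:optimism_game_1} follow from the Nash structure at line 11 of Algorithm~\ref{alg:main_LIN_RFMG_POLSVI_plan}: since $(\pi_h(x), \overline{D}(x))$ is an NE of $\overline{Q}_h(x,\cdot,\cdot)$, the min-player's best-response property yields $\overline{V}_h(x) = \min_{\nu'} \E_{a\sim\pi_h,\, b\sim\nu'}[\overline{Q}_h(x,a,b)]$, so instantiating $\nu' = \bre_2(\pi)_h$ and applying the pointwise upper bound gives the right inequality in~\eqref{eq:optimism_game_1}; and the pointwise lower bound $\overline{Q}_h \geq Q_h^\dagger$ propagates through $\max_{\pi'}\min_{\nu'} \E_{\pi',\nu'}[\cdot]$ monotonically to yield the left inequality $V_h^\dagger(x;r) \leq \overline{V}_h(x)$.

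The main obstacle I anticipate is the interplay between the clipping $\Pi_{[0,H]}$ and the truncation $u_h = \min\{\beta\|\phi\|_{\Lambda_h^{-1}}, H\}$. When $\beta\|\phi\|_{\Lambda_h^{-1}} \leq H$, the concentration slack and the bonus coincide, and both pointwise bounds transfer through the clipping essentially for free because the pre-clip values lie in the relevant range. When $\beta\|\phi\|_{\Lambda_h^{-1}} > H$, however, the bonus saturates and the concentration bound becomes loose; the upper bound is still easy (since $\overline{Q}_h \leq H \leq 2 u_h$), but the lower bound requires a delicate case analysis of the pre-clip quantity $\overline{w}_h^\top\phi + r_h + H$ against the target $Q_h^\dagger \in [0,H]$ after projection to $[0,H]$. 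This is the most subtle step of the argument, and the place where I would expect the cleanest version of the proof to differ most from the MDP argument in Lemma~\ref{lem:confidence_planning}.
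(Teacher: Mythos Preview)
Your proposal is correct and follows essentially the same approach as the paper's proof: backward induction establishing the pointwise bounds $Q_h^\dagger \leq \overline{Q}_h$ and $\overline{Q}_h \leq r_h + \Pe_h\overline{V}_{h+1} + 2u_h$ via Lemma~\ref{lem:bonus_concentrate_plan_game} and the induction hypothesis, then propagating through the Nash/max-min structure exactly as you describe. The concern you raise about the interaction between the clipping $\Pi_{[0,H]}$ and the truncated bonus $u_h$ in the saturated regime $\beta\|\phi\|_{\Lambda_h^{-1}} > H$ is precisely where the paper's own argument is tersest---it disposes of this in a single step by invoking $Q_h^\dagger \in [0,H]$ together with the definition of $u_h$, without spelling out the case split you anticipate.
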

\begin{proof} 
We first prove the leftmost  inequality~\eqref{eq:optimism_game_1}, which we do by induction. The case for step $H+1$ is trivial since $V_{H+1}^\dagger(x;r)=V_{H+1}(x)=0$ for any $x\in \S$. We now consider the induction step $V_{h+1}^\dagger(x; r)\leq\overline{V}_{h+1}(x)$. Then, \begin{align*}
Q_h^\dagger(x,a,b; r)
&\overset{\textrm{(a)}}{=} r_h(x,a,b) + \Pe_h V_{h+1}^\dagger(x,a,b; r)\\
 &\overset{(b)}{\leq} r_h(x,a,b) + \Pe_h \Vo_{h+1}(x,a,b; r) \\
 &\overset{\textrm{(c)}}{\leq}r_h(x,a,b) + \bar{w}_h^\top\phi(x,a,b)+\beta\norm{\phi(x,a,b; r)}_{\Lambda^{-1}_h}  \\
 \overset{\textrm{(d)}}{\implies} Q_h^\dagger(x,a,b; r)&\leq \min\{r_h(x,a,b) + \Pe_h \Vo_{h+1}(x,a,b; r)+\beta\norm{\phi(x,a,b; r)}_{\Lambda^{-1}_h},H\}\\
 &\overset{\textrm{(e)}}{=}\min\{r_h(x,a,b) + \bar{w}_h^\top\phi(x,a,b)+u_h(x,a,b),H\}\\
 &=\overline{Q}_h(x,a,b)
\end{align*}
where (a) follows from the Bellman equation, (b) follows from the induction step, (c) follows from Lemma \ref{lem:bonus_concentrate_plan_game}, (d) follows from 
$Q_h^\dagger(x,a,b; r)\in[0,H]$, and (e) from the fact that $u_h(\cdot,\cdot,\cdot)=\min\{\beta\norm{\phi(\cdot,\cdot,\cdot)}_{\Lambda_{h}^{-1}},H\}$. 
Then, using this result leads to %
\begin{align*}
V_h^\dagger(x; r) &= \max_{\pi_h'\in\Delta(\A)}\min_{\nu_h'\in\Delta(\cB)}\E_{a\sim \pi_h', b\sim \nu_h'} [Q_h^\dagger(x, a, b; r)] \\
&\leq \max_{\pi_h'\in\Delta(\A)}\min_{\nu_h'\in\Delta(\cB)} \EE_{a\sim \pi_h', b\sim \nu_h'} [\overline{Q}_h(x, a, b)]\\
&=\Vo_h(x). 
\end{align*}
This finishes the proof by induction of the leftmost inequality in~\eqref{eq:optimism_game_1}.

Now we analyze the rightmost inequality in~\eqref{eq:optimism_game_1} follows from
\begin{align*}
\overline{V}_h(x) &=  \min_{\nu'\in\Delta(\cB)} \EE_{a\sim \pi_h, b\sim \nu'} \overline{Q}_h(x,a, b)\\
&=\E_{a\sim \pi_h, b\sim \bre_2(\pi)_h} \overline{Q}_h(x,a, b)\\
&=\E_{a\sim \pi_h, b\sim \bre_2(\pi)_h} \min\{\max\{(\wo_h^\top\phi + r_h + u_h)(x,a,b),0\}, H\} \\
&\overset{\textrm{(a)}}{\leq}\E_{a\sim \pi_h, b\sim \bre_2(\pi)_h} \min\{\max\{(\Pe_h \overline{V}_{h+1}  + r_h +  \beta\norm{\phi(\cdot,\cdot,\cdot)}_{\Lambda_h^{-1}}+u_h)(x,a,b),0\},H\} \\
\overset{(b)}{\implies} \overline{V}_h(x) &\leq\E_{a\sim \pi_h, b\sim \bre_2(\pi)_h} \min\{\max\{(\Pe_h \overline{V}_{h+1}  + r_h + 2u_h)(x,a,b),0\}, H\} \\
&\overset{(c)}{\leq}\E_{a\sim \pi_h, b\sim \bre_2(\pi)_h} [(\PP_h\overline{V}_{h+1}  + r_h + 2u_h)(x,a,b)],
\end{align*}
where (a) follows from  Lemma~\ref{lem:bonus_concentrate_plan_game}; (b) follows from $\overline{V}_h(x)\leq H$ and the definition of $u_h$; and (c) follows from $(\Pe_h\overline{V}_{h+1}  + r_h + 2u_h)(s,a,b)\geq 0$. This finishes the proof for the rightmost inequality in~\eqref{eq:optimism_game_1}.
	
Finally, the proof for the inequalities in~\eqref{eq:optimism_game_2} can be obtained by carefully following and modifying all the proof just done for the inequalities in~\eqref{eq:optimism_game_1}.
\end{proof}

\begin{lemma}[Bounding best-responses with rewards taken from the planning phase] \label{lem:explore_plan_connect_game} We have that for any initial state $s_0\in\S$,
\begin{align*}
& V_1^{\pi, \bre_2(\pi)}(s_0, u/H) \leq \frac{1}{K} \sum_{k=1}^K V_1^*(s_0, r^k),\\
&V_1^{\bre_1(\nu) , \nu }(s_0, u/H) \leq \frac{1}{K} \sum_{k=1}^K V_1^*(s_0, r^k).
\end{align*}
\end{lemma}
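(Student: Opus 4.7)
The plan is to chain two elementary observations: a pointwise reward comparison $u_h/H \leq r^k_h$ and monotonicity of the value function in the reward. To establish the reward comparison, note that the planning-phase covariance $\Lambda_h$ (Algorithm~\ref{alg:main_LIN_RFMG_POLSVI_plan}, line~5) aggregates feature outer products over all $K$ episodes and $P$ agents, whereas the exploration-phase $\Lambda^k_h$ (Algorithm~\ref{alg:main_LIN_RFMG_POLSVI_exp}, line~7) only sums over $\tau = 1, \ldots, k-1$. Hence $\Lambda_h \succeq \Lambda^k_h$ in Loewner order, which inverts to $\Lambda_h^{-1} \preceq (\Lambda^k_h)^{-1}$ and gives
$$
\phi(x,a,b)^\top \Lambda_h^{-1} \phi(x,a,b) \leq \phi(x,a,b)^\top (\Lambda^k_h)^{-1} \phi(x,a,b)
$$
for every $(x,a,b,h,k)$. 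Monotonicity of the $\min\{\,\cdot\,, H\}$ clipping then yields $u_h(\cdot,\cdot,\cdot) \leq u^k_h(\cdot,\cdot,\cdot)$, and using the assignment $r^k_h = u^k_h / H$ from line~9 of Algorithm~\ref{alg:main_LIN_RFMG_POLSVI_exp} we conclude $u_h/H \leq r^k_h$ pointwise for every $k\in[K]$ and $h\in[H]$.

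Next, I invoke the fact that the value function of any fixed policy pair is linear, and hence monotone, in the per-step reward: for a fixed pair $(\pi',\nu')$, $V_1^{\pi',\nu'}(s_0;r) = \E_{\pi',\nu'}\!\left[\sum_{h=1}^H r_h(s_h,a_h,b_h)\right]$ is monotone in $r$. Taking the fixed pair to be $(\pi, \bre_2(\pi))$ and using the pointwise inequality from the first step, we obtain
$$
V_1^{\pi, \bre_2(\pi)}(s_0; u/H) \leq V_1^{\pi, \bre_2(\pi)}(s_0; r^k) \quad \text{for every } k \in [K].
$$
By the definition $V_1^*(s_0; r^k) = \max_{\pi'',\nu''} V_1^{\pi'',\nu''}(s_0; r^k)$, the specific pair $(\pi, \bre_2(\pi))$ satisfies $V_1^{\pi, \bre_2(\pi)}(s_0; r^k) \leq V_1^*(s_0; r^k)$. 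Since the left-hand side is independent of $k$, averaging over $k\in[K]$ yields the first inequality. The second inequality follows by the symmetric argument with the fixed pair $(\bre_1(\nu), \nu)$ in place of $(\pi, \bre_2(\pi))$.

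The argument is essentially bookkeeping and has no substantive obstacle. The only subtlety is to check that the $\min\{\,\cdot\,, H\}$ clipping preserves the Loewner-derived pointwise inequality, which is immediate from monotonicity of $\min$ in its first argument. The reason this averaging formulation is the right target is that $\sum_{k=1}^K V_1^*(s_0; r^k)$ has already been controlled, via the game-theoretic analogue~\eqref{item:three_up_MG} of Lemma~\ref{lem:sum_V}, by a quantity of order $\sqrt{KH\iota/P} + \beta H \sqrt{dK\iota/P} + \beta d H\log(1 + KP/d)$; dividing by $K$ produces exactly the $1/\sqrt{KP}$ decay that the subsequent telescoping argument needs to close out the $\textnormal{SubOpt}(\pi,\nu; r)$ bound for RFMG-POLSVI.
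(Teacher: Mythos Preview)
Your proof is correct and follows essentially the same approach as the paper: both arguments rest on the pointwise reward comparison $u_h/H \leq r^k_h$ (via $\Lambda_h \succeq \Lambda^k_h$) together with domination by $V_1^*$, and then average over $k$. The only cosmetic difference is the order in which you apply the two inequalities---you pass through $V_1^{\pi,\bre_2(\pi)}(s_0; r^k)$ while the paper passes through $V_1^*(s_0; u/H)$---but the ingredients and logic are identical.
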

\begin{proof} 
\begin{equation}
\begin{aligned}
V_1^{\pi, \bre_2(\pi)}(s_0, u/H)
&\overset{\textrm{(a)}}{\leq} V_1^*(s_0, u/H)\overset{(b)}{\leq} V_1^*(s_0,r^k)\\
\implies V_1^{\pi, \bre_2(\pi)}(s_0, u/H)&\leq \frac{1}{K}\sum_{k=1}^K V_1^*(s_0,r^k)
\end{aligned}
\end{equation}
where (a) follows from the definition of the optimal value function, (b) from the same arguments shown in deriving equation~\eqref{eq:value-rew-RF} for any $k\in[K]$. 

The second inequality can be obtained similarly.
\end{proof}

We now continue with the proof of Theorem~\ref{thm:main-reward-free_MG}, conditioning on the event such that equations~\eqref{item:one_up_MG}--\eqref{item:three_up_MG} hold and on the event defined in Lemma~\ref{lem:bonus_concentrate_plan_game}, which altogether hold with probability at least $1 - 3\delta$.

Then, 
\begin{align*}
&V_1^\dagger(s_0;r) - V_1^{\pi, \bre_2(\pi)}(s_0;r)\\
&\overset{\textrm{(a)}}{\leq} \overline{V}_1(s_0) - V_1^{\pi, \bre_2(\pi)}(s_0; r)\\
&\overset{(b)}{\leq}\E_{a_1\sim \pi_1, b_1\sim \bre_2(\pi)_1} [(r_1 + \Pe_1\Vo_{2} + 2u_1)(s_0,a_1,b_1)]- V_1^{\pi, \bre_2(\pi)}(s_0; r) \\
&\overset{\textrm{(c)}}{=}\E_{a_1\sim \pi_1, b_1\sim \bre_2(\pi)_1} [(r_1+ \Pe_1 \overline{V}_{2}+ 2u_1)(s_0,a_1, b_1) - r_1(s_0,a_1,b_1)\\
&\quad - \Pe_1 V_{2}^{\pi, \bre_2(\pi)}(s_0,a_1,b_1; r) ]\\
&= \E_{a_1\sim \pi_1, b_1\sim \bre_2(\pi)_h} [ \Pe_1 \overline{V}_{2}(s_0,a_1,b_1) - \Pe_1 V_{2}^{\pi,\bre_2(\pi)}(s_0,a_1,b_1; r) + 2u_1(s_0,a_1,b_1)]\\
&=\E_{a_1\sim \pi_1,b_1\sim\bre_2(\pi)_1, x_{2}\sim\P_1} [ \overline{V}_{2}(x_{2}) -  V_{2}^{\pi, \bre_2(\pi)}(x_{2}; r) + 2u_1(s_0,a_1,b_1)]\\
&\leq \ldots\\
&\leq \E_{\forall h\in [H]: ~a_h\sim \pi_h, b_h\sim \bre_2(\pi)_h, x_{h+1}\sim\P_h}\left[\sum_{h=1}^H 2u_h(x_h, a_h, b_h)\,\Bigg|\, x_1=s_0 \right]\\
&=2H V_1^{\pi, \bre_2(\pi)}(s_0, u/H)\\
&\overset{\textrm{(d)}}{\leq}
2H\frac{1}{K}\sum_{k=1}^K V_1^*(s_0, r^k)\\
&\overset{\textrm{(e)}}{\leq}
2H\frac{1}{K}\sum_{k=1}^K V_1^k(s_0)
\end{align*}
where (a) and (b) follows from Lemma~\ref{lem:bonus_plan_game}, (c) comes from the Bellman equation, (d) comes from Lemma~\ref{lem:explore_plan_connect_game}, and (e) follows from~\eqref{item:one_up_MG}. 
Then, from~\eqref{item:three_up_MG}
\begin{align}
\label{item:part-two-MG} &V_1^\dagger(x_1; r) - V_1^{\pi, \bre_2(\pi)}(x_1; r)
\leq 4H^2\sqrt{\frac{H\iota}{KP}}+12\beta 
H^2\sqrt{\frac{d\iota}{KP}}
+ 20\beta dH^2\frac{1}{K}\log\left(1+\frac{KP}{d}\right).
\end{align}

Next, we prove the upper bound of the term $V_1^{\bre_1(\nu), \nu}(s_0; r)  - V_1^\dagger(s_0; r)$.
\begin{align*}
&V_1^{\bre_1(\nu), \nu}(s_0; r)  - V_1^\dagger(s_0; r)\\
&\overset{\textrm{(a)}}{\leq} V_1^{\bre_1(\nu), \nu}(s_0; r) - \Vu_1^{\pi, \bre_2(\pi)}(s_0; r)\\
&\overset{(b)}{\leq} V_1^{\bre_1(\nu), \nu}(s_0; r)-\E_{a_1\sim \bre_1(\nu)_1, b_1\sim \nu_1} [(\Pe_1\Vu_{2} - r_1 - 2u_1)(s_0,a_1,b_1)]\\
&\overset{\textrm{(c)}}{=}\E_{a_1\sim \bre_1(\nu)_1, b_1\sim \nu_1}[\Pe_1 V_{2}^{\bre_1(\nu),\nu}(s_0,a_1,b_1; r)
- \Pe_1 \Vu_{2}(s_0,a_1, b_1) +2u_1(x_1,a_1,b_1; r)]\\
&= \E_{a_1\sim \bre_1(\nu)_1, b_1\sim \nu_1,x_2\sim\P_1}[
V_{2}^{\bre_1(\nu),\nu}(x_2; r)
-\Vu_{2}(x_2) + 2u_1(x_1,a_1,b_1)]\\
&\leq \ldots\\
&\leq \E_{\forall h\in [H]: ~a_h\sim \bre_1(\nu)_h,b_h\sim\nu_h, x_{h+1}\sim\P_h}\left[\sum_{h=1}^H 2u_h(x_h, a_h, b_h)\,\Bigg|\, x_1=s_0 \right]\\
&=2H V_1^{\bre_1(\nu),\nu}(s_0, u/H)\\
&\overset{\textrm{(d)}}{\leq}
2H\frac{1}{K}\sum_{k=1}^K V_1^*(s_0, r^k)\\
&\overset{\textrm{(e)}}{\leq}
2H\frac{1}{K}\sum_{k=1}^K V_1^k(s_0)
\end{align*}
where (a) and (b) follows from Lemma~\ref{lem:bonus_plan_game}, (c) comes from the Bellman equation, (d) comes from Lemma~\ref{lem:explore_plan_connect_game}, and (e) follows from~\eqref{item:one_up_MG}. %
Then, from~\eqref{item:three_up_MG} 
\begin{align}
\label{item:part-two-MG_l} &V_1^{\bre_1(\nu), \nu}(x_1; r)  - V_1^\dagger(x_1; r)
\leq 4H^2\sqrt{\frac{H\iota}{KP}}+12\beta 
H^2\sqrt{\frac{d\iota}{KP}}
+ 20\beta dH^2\frac{1}{K}\log\left(1+\frac{KP}{d}\right).
\end{align}

Now, adding 
\eqref{item:part-two-MG} with~\eqref{item:part-two-MG_l}, we obtain,
\begin{align*}
V_1^{\bre_1(\nu), \nu}(s_0; r)-V_1^{\pi, \bre_2(\pi)}(s_0; r)&\leq 8H^2\sqrt{\frac{H\iota}{KP}}+12\beta 
H^2\sqrt{\frac{d\iota}{KP}}
+ 20\beta dH^2\frac{1}{K}\log\left(1+\frac{KP}{d}\right)\\
&=8\sqrt{\frac{H^5\iota}{KP}}+12c_\beta\sqrt{\frac{d^3H^6\iota^2}{KP}}
+ 20c_\beta\frac{\sqrt{d^4H^6}}{ K}\log\left(1+\frac{KP}{d}\right)\\
&\leq (8+12c_\beta)\sqrt{\frac{d^3H^6\iota^2}{KP}}+20c_\beta\frac{\sqrt{d^4H^6}}{K}\log\left(1+\frac{KP}{d}\right)
\end{align*}
This finishes the proof.

\section{Proof of Theorem~\ref{thm:lower_bound}}

As mentioned in the main document, our proof technique mimics that of~\citep{AW-YC-MS-SSD-KJ:22}.

We introduce some notation:  for a vector $a\in\R^d$, let $a_i$ denote its $i$-th component; for any positive integer $A$, let $[A]=\{1,\dots,A\}$.

We first define the following \emph{parallel} linear bandit setting.
\begin{definition}[Parallel linear bandit setting]
\label{def:def-bandit-parallel}
Let $K,P>0$ be arbitrary and fixed strictly positive integers, 
$\Phi = \{\phi \in \RR^d: \|\phi\|_2 = 1\}$ 
be the $d$-dimensional Euclidean norm 
sphere
and $\Theta = \{-\mu, \mu\}^d$ for some $\mu \in (0, \frac{1}{20\sqrt{d}}]$. For some some fixed $\theta \in \Theta$, consider the query model where at every step $k = 1, \dots, K$ we choose a batch of query values $\{\phi^{k, 1}, \dots,  \phi^{k, P}\} \subseteq \Phi$ 
and observe, for each $p \in [P]$ independently, the reward
\begin{equation}
    y^{k, p} \sim \textrm{Bernoulli}(1/2 + \langle \theta, \phi^{k, p} \rangle).
    \label{eq:y_bern}
\end{equation}
After choosing the last batch of query values at step $K$, the query model outputs a final query value $\phi^{K+1}$. 
We call $K$ the number of episodes for which we run the query model. 
\end{definition}

Note that the parallel linear bandit setting in Definition~\ref{def:def-bandit-parallel} becomes the \emph{single} linear bandit setting when $P=1$.

Also, note that any query strategy $\pi$ which produces a final output according to our query model induces a distribution over the set $\Phi$ in its output, thus, for a parallel linear bandit setting with $K$ episodes we have that $\phi^{K+1}\sim\pi$.

We begin by producing a lower bound on parallel adaptive linear regression.

\begin{lemma}
\label{lem:hard_linear_bandit}
Consider the parallel linear bandit setting in Definition~\ref{def:def-bandit-parallel} with $K$ episodes. 
Then,
\[
    \inf_{\hat{\theta}, \pi}\max_{\theta \in \Theta}\EE_{\theta}[\|\hat{\theta} - \theta\|_2^2] \geq \frac{d\mu^2}{2}\rbr{1 - \sqrt{\frac{20KP\mu^2}{d}}}
\] 
where the infimum is taken over all measurable estimators $\hat{\theta}$ and measurable (but potentially adaptive) query policies $\pi$, and $\EE_\theta$ denotes the expectation 
with respect to the randonmness in the observations and queries 
when $\theta \in \Theta$ is the true parameter. 
\end{lemma}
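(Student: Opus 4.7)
The plan is to apply Assouad's hypercube method on the parameter set $\Theta = \{-\mu,\mu\}^d$ with squared-$\ell_2$ loss. For each coordinate $j\in[d]$, pair together $\theta,\theta'\in\Theta$ that differ only in their $j$-th coordinate; since any estimator $\hat\theta$ must incur squared error at least $\mu^2$ on the $j$-th coordinate whenever it rounds to the wrong sign of $\theta_j$, summing the resulting per-coordinate two-point Le~Cam argument over $j\in[d]$ produces
\begin{equation*}
\inf_{\hat\theta,\pi}\max_{\theta\in\Theta}\E_\theta\|\hat\theta-\theta\|_2^2 \;\geq\; \frac{d\mu^2}{2}\left(1 - \overline{\mathrm{TV}}\right),
\end{equation*}
where $\overline{\mathrm{TV}}$ is the average, over $j\in[d]$ and over the remaining coordinates of $\theta$, of the total variation distance between the trajectory laws $\Pe_\theta$ and $\Pe_{\theta'}$ induced by a pair of parameters that agree off coordinate $j$.

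Next, to control these trajectory TV distances I would go through KL and Pinsker's inequality. The full trajectory is the sequence of adaptive queries $\{\phi^{k,p}\}$ together with the Bernoulli rewards $\{y^{k,p}\}$. Because the (possibly randomized) query policy $\pi$ depends only on previously observed rewards and is applied identically under $\Pe_\theta$ and $\Pe_{\theta'}$, the KL chain rule annihilates the query contributions and leaves
\begin{equation*}
\KL(\Pe_\theta \,\|\, \Pe_{\theta'}) \;=\; \E_\theta\left[\sum_{k=1}^K\sum_{p=1}^P \KL\!\left(\mathrm{Bern}(1/2+\langle\theta,\phi^{k,p}\rangle)\,\|\,\mathrm{Bern}(1/2+\langle\theta',\phi^{k,p}\rangle)\right)\right].
\end{equation*}
Since $\mu\leq 1/(20\sqrt{d})$ and $\|\phi\|_2=1$, Cauchy--Schwarz gives $|\langle\theta,\phi\rangle|\leq\mu\sqrt{d}\leq 1/20$, so both Bernoulli parameters above lie safely inside $[1/2-1/20,\,1/2+1/20]$. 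The standard inequality $\KL(\mathrm{Bern}(p)\|\mathrm{Bern}(q))\leq (p-q)^2/(q(1-q))$, together with $q(1-q)$ being bounded below by a universal positive constant in this regime, then yields a per-observation KL of order $\mu^2 (\phi_j^{k,p})^2$ whenever the pair differs only in coordinate $j$.

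Summing this bound across $j\in[d]$ inside the KL telescopes the feature-index sum via the identity $\sum_{j=1}^d(\phi_j^{k,p})^2 = \|\phi^{k,p}\|_2^2 = 1$, giving a total of order $\mu^2 KP$; consequently the per-coordinate averaged KL is of order $\mu^2 KP/d$, and Pinsker's inequality combined with Jensen's inequality (concavity of the square root) produces $\overline{\mathrm{TV}} \leq \sqrt{20 KP\mu^2/d}$. Substituting into the Assouad bound above gives the claim. The main obstacle is the careful handling of the parallel, adaptive structure of the queries in the chain rule: one must verify that the whole batch $\{\phi^{k,p}\}_{p\in[P]}$ at episode $k$ is measurable with respect to the $\sigma$-algebra generated by the rewards from episodes $1,\ldots,k-1$ only (not by concurrently generated rewards within episode $k$), so that the query-side terms in the KL decomposition genuinely vanish. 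A secondary technical point is the Assouad-style averaging over the coordinate index $j$, where the identity $\sum_j \phi_j^2 = 1$ is precisely what lets a uniform $\mu^2 KP$ KL budget be traded for the $1/d$ factor appearing under the square root in the stated bound.
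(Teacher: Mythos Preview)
Your proposal is correct and follows essentially the same route as the paper: both arguments are the Assouad/Shamir hypercube lower bound on $\Theta=\{-\mu,\mu\}^d$, reducing to per-coordinate sign-error probabilities, applying Pinsker to pass to KL, using the chain rule on the adaptive trajectory, and exploiting $\sum_j(\phi^{k,p}_j)^2=\|\phi^{k,p}\|_2^2=1$ to trade the $d$-fold coordinate sum for the $1/d$ under the square root. The only cosmetic difference is that the paper (following \cite{shamir2013}) passes to the \emph{maximum} over the fixed remaining coordinates $\{\theta_j:j\neq i\}$ before applying Pinsker, whereas you average over them; both are valid and lead to the same bound.
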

\begin{proof}
Our proof is similar in spirit to that of Theorem 5 in~\citep{AW-YC-MS-SSD-KJ:22}. Consider a given estimator $\hat{\theta}$ produced by a query strategy.  We immediately have
\begin{multline}
\label{eq:prim-lowebound}
\max_{\theta \in \Theta}\EE_{\theta}[\|\hat{\theta} - \theta\|^2] 
\geq \EE_{\theta \sim \textrm{Uniform}(\Theta)}\EE_{\theta}[\|\hat{\theta} - \theta\|^2] \\
= \EE_{\theta \sim \textrm{Uniform}(\Theta)}\EE_{\theta}[\sum^d_{i=1}(\hat{\theta}_i-\theta_i)^2]
\overset{(a)}{\geq} \EE_{\theta \sim \textrm{Uniform}(\Theta)}\EE_{\theta}\sbr{\mu^2\sum_{i = 1}^d \I[\theta_i\hat{\theta}_i < 0]},
\end{multline}
where (a) follows from the fact that $(\theta_i-\hat{\theta}_i)^2\in\{0,4\mu^2\}$ with $(\theta_i-\hat{\theta}_i)^2=4\mu^2$ if and only if $\theta_i\hat{\theta}_i<0$.

We now closely follow and adapt the proof of~\citep{shamir2013}[Lemma~4] to our setting until the derivation of equation~\eqref{eq:fin-expr}.
We then have 
\begin{align*}
&\EE_{\theta \sim \textrm{Uniform}(\Theta)}\EE_{\theta}\sbr{\sum_{i = 1}^d \I[\theta_i\hat{\theta}_i < 0]}\\
&\qquad = \sum^d_{i=1}\bar{\Pr}[\theta_i\hat{\theta}_i<0]\\
&\qquad\overset{\textrm{(a)}}{=}\sum_{i = 1}^d \rbr{\frac{1}{2}\Pr[\hat{\theta}_i\theta_i < 0 | \theta_i > 0] + \frac{1}{2}\Pr[\hat{\theta}_i\theta_i < 0 | \theta_i < 0]}\\
&\qquad=\frac{1}{2}\sum_{i = 1}^d \rbr{\Pr[\hat{\theta}_i < 0 | \theta_i > 0] + \Pr[\hat{\theta}_i > 0 | \theta_i < 0]}\\
&\qquad = \frac{1}{2}\sum_{i = 1}^d \rbr{1 - (\Pr[\hat{\theta}_i > 0 | \theta_i > 0] - \Pr[\hat{\theta}_i > 0 | \theta_i < 0])}\\
&\qquad \geq \frac{d}{2}\rbr{1 - \frac{1}{d}\sum_{i = 1}^d |\Pr[\hat{\theta}_i > 0 | \theta_i > 0] - \Pr[\hat{\theta}_i > 0 | \theta_i < 0]|}\\
&\qquad\overset{\textrm{(b)}}{\geq}\frac{d}{2}\rbr{1 - \sqrt{\frac{1}{d}\sum_{i = 1}^d(\Pr[\hat{\theta}_i > 0 | \theta_i > 0] - \Pr[\hat{\theta}_i > 0 | \theta_i < 0])^2}},
\end{align*}
where (a) follows from $\theta \sim \textrm{Uniform}(\Theta)$, (b) follows from $\sum_{i=1}^d|a_i|\leq \sqrt{d}\norm{a}$ for any $a\in\R^d$, and where $\bar{\Pr}$ is a measure with respect to the distribution of $\theta$ and with respect to both the observations and query strategy, whereas $\Pr$ is only with respect to both the observations and query strategy. From now on, we will abuse the notation $\Pr$ to denote different types of measures across our derivations.

Now, for an arbitrary and fixed $i' \in [d]$, 
\begin{align*}
    &(\Pr[\hat{\theta}_{i'} > 0 | \theta_{i'} > 0] - \Pr[\hat{\theta}_{i'} > 0 | \theta_{i'} < 0])^2\\
    & \qquad \overset{\textrm{(a)}}{=}\Big( \sum_{u\in\{-\mu,\mu\}^{d-1}}\Pr[(\theta_{j}: j \neq i')=u]\\
    & \qquad \qquad \times (\Pr[\hat{\theta}_{i'} > 0 | \theta_{i'} > 0, (\theta_{j}: j \neq i')=u] - \Pr[\hat{\theta}_{i'} > 0 | \theta_{i'} < 0, (\theta_{j}: j \neq i')=u]\Big)^2\\
    & \qquad \leq \sum_{u\in\{-\mu,\mu\}^{d-1}}\Pr[(\theta_{j}: j \neq i')=u]\\
    & \qquad \qquad \times (\Pr[\hat{\theta}_{i'} > 0 | \theta_{i'} > 0, (\theta_{j}: j \neq i')=u] - \Pr[\hat{\theta}_{i'} > 0 | \theta_{i'} < 0, (\theta_{j}: j \neq i')=u])^2\\
    & \qquad \leq \max_{u\in\{-\mu,\mu\}^{d-1}}(\Pr[\hat{\theta}_{i'} > 0 | \theta_{i'} > 0, (\theta_{j}: j \neq i')=u] - \Pr[\hat{\theta}_{i'} > 0 | \theta_{i'} < 0, (\theta_{j}: j \neq i')=u])^2.
\end{align*}
where (a) follows from the law of total probability. Now, observe that the last term is the square of the total variation distance, and so, by Pinsker's inequality we obtain 
\begin{align*}
    &\max_{u\in\{-\mu,\mu\}^{d-1}}(\Pr[\hat{\theta}_{i'} > 0 | \theta_{i'} > 0, (\theta_{j}: j \neq i')=u] - \Pr[\hat{\theta}_{i'} > 0 | \theta_{i'} < 0, (\theta_{j}: j \neq i')=u])^2\\
    &\qquad \leq \frac{1}{2}\KL\rbr{\Pr[\hat{\theta}_{i'} > 0 | \theta_{i'} > 0,\{\theta_{j}: j \neq i'\}] ||\Pr[\hat{\theta}_{i'} > 0 | \theta_{i'} < 0,\{\theta_{j}: j \neq i'\}]}
\end{align*} where $\KL(p || q)$ denotes the Kullback-Leibler (KL) divergence between distributions $p$ and $q$.
        
As any randomized query strategy $\pi$ which outputs $\phi^{K+1}$ can be characterized by a distribution over deterministic query strategies, we assume without loss of generality that the query strategy is deterministic and show that our lower bound holds uniformly over all possible deterministic query strategies. Then, as $\hat{\theta}=\phi^{K+1}$, we are assuming that $\hat{\theta}$ is a deterministic function of $y^{k,p},(k,P)\in[K]\times[P]$ (and that, consequently, $\phi^{k,p}$ is a deterministic function function of $y^{\hat{k},\hat{p}}$, $(\hat{k},\hat{p})\in[P]\times[k-1]$). 

Under this assumption, 
\begin{align*}
    &\KL\rbr{\Pr[\hat{\theta}_{i'} > 0 | \theta_{i'} > 0,\{\theta_{j}\}_{j \neq i'}] ||\Pr[\hat{\theta}_{i'} > 0 | \theta_{i'} < 0,\{\theta_{j}\}_{j \neq i'}]}\\
    & \overset{(a)}{=} \KL\rbr{\Pr[y^{k,p},(k,p)\in[K]\times[P] | \theta_{i'} > 0,\{\theta_{j}\}_{j \neq i'}] ||\Pr[y^{k,p},(k,p)\in[K]\times[P] | \theta_{i'} < 0,\{\theta_{j}\}_{j \neq i'}]}\\
    & \overset{\textrm{(b)}}{\leq}
    \sum_{k = 1}^K \sum_{p = 1}^P {\KL\rbr{\Pr[y^{k, p} | \theta_{i'} > 0, \{\theta_{j}\}_{j \neq i'},(y^{k',p'})_{(k',p')\in\cL_P(k,p)}] ||\Pr[y^{k, p} | \theta_{i'} < 0 , \{\theta_{j}\}_{j \neq i'},(y^{k',p'})_{(k',p')\in\cL_P(k,p)}]}}\\
    & \overset{\textrm{(c)}}{=}
    \sum_{k = 1}^K \sum_{p = 1}^P \KL\rbr{\Pr[y^{k, p} | \theta_{i'} > 0, \{\theta_{j}\}_{j \neq i'},(y^{k',p'})_{(k',p')\in[k-1]\times[P]]}] ||\Pr[y^{k, p} | \theta_{i'} < 0 , \{\theta_{j}\}_{j \neq i'},(y^{k',p'})_{(k',p')\in[k-1]\times[P]}]}
\end{align*}
where (a) follows from the fact that $\hat{\theta}_{i'}$ is a deterministic function of all the (random) observations $\{y^{k,p}:(k,p)\in[K]\times[P]\}$ and so describes a new distribution over the observations (since we do not know the form of the exact dependency we decided to simply list ``$y^{k,p},(k,p)\in[K]\times[P]$" on the argument); (b) follows from the chain rule for KL divergence; and (c) follows from the fact that for a given episode, the actions taken by each agent are independent (we use the notation that when $k=1$, there is no conditioning on any variable depending on any output reward).

Joining all the results herein derived (and under some renaming of variable indices), we obtain 
        \begin{equation}
        \begin{aligned}
           \EE_{\theta \sim \textrm{Uniform}(\Theta)}\EE_{\theta}\sbr{\sum_{i = 1}^d \I[\theta_i\hat{\theta}_i < 0]}
           &\geq
           \frac{d}{2}\rbr{1 - \sqrt{\frac{1}{d}\sum_{i = 1}^d
          \sum_{k=1}^K\sum_{p=1}^P U_{k,p,i}
           }}
        \end{aligned}
            \label{eq:fin-expr}
        \end{equation}
where
$$
U_{k,p,i}=\sup_{\{\theta_{i'}: i' \neq i\}} \KL\rbr{\Pr[y^{k, p} | \theta_{i} > 0, \{\theta_{i'}\}_{i' \neq i},(y^{k',p})_{(k',p')\in\cL_P(k,p)}] ||\Pr[y^{k, p} | \theta_{i} < 0 , \{\theta_{i'}\}_{i' \neq i},(y^{k',p})_{(k',p')\in\cL_P(k,p)}]}
$$

The rest of the proof directly follows from the proof of Theorem 5 of~\citep{AW-YC-MS-SSD-KJ:22}, where we directly calculate the KL divergence between two Bernoulli distributions and use the fact that $\phi^{k, p} \in \Phi$ for all $(k, p) \in [K]\times[P]$ to obtain the inequality
$$
\EE_{\theta \sim \textrm{Uniform}(\Theta)}\EE_{\theta}\sbr{\sum_{i = 1}^d \mu^2\I[\theta_i\hat{\theta}_i < 0]}
\geq
\frac{d\mu^2}{2}\rbr{1 - \sqrt{\frac{20KP\mu^2}{d}}}
$$
which we replace back in~\eqref{eq:prim-lowebound} to obtain 
$$
\max_{\theta \in \Theta}\EE_{\theta}[\|\hat{\theta} - \theta\|^2]
\geq
\frac{d\mu^2}{2}\rbr{1 - \sqrt{\frac{20KP\mu^2}{d}}}.
$$
Finally, since this uniform lower bound holds for any deterministic query strategy which outputs the estimator $\hat{\theta}$, it will also hold for any randomized one (which, as we explained before, can be seen as a distribution over deterministic query strategies), and thus we can take the infimum as in the lemma's statement. This completes the proof.
\end{proof}

We introduce some terminology. Let $\phi^{\star}(\theta)=\sup_{\phi\in\Phi}\langle\theta,\phi\rangle$ denote the best arm for any arbitrary $\theta \in \Theta$. Now, given some arbitrary fixed parameter $\theta\in\Theta$ and given a query strategy $\pi$ 
(which we know outputs a single query value or arm according to some induced distribution over the set $\Phi$), we define $\phi^{\pi}=\EE_{\phi\sim\pi}[\phi]$ and say that $\pi$ is $\epsilon$-optimal if $\langle\theta,\phi^\star(\theta)\rangle-\langle\theta,\phi^{\pi}\rangle\leq \epsilon$. 

We now establish the following result on our parallel linear bandits setting, which is basically~\citep{AW-YC-MS-SSD-KJ:22}[Theorem~2] and which we include for completeness since we do adapt some notation and terminology according to our particular setting and presentation of the problem. We also provide additional detail where it is needed in the proof.

\begin{theorem}
Let $\epsilon>0$, $P>0$, $d>1$, and $KP\geq d^2$. Consider the parallel linear bandit setting in Definition~\ref{def:def-bandit-parallel} with $\mu=\sqrt{\frac{d}{700KP}}$ and also consider running a (potentially adaptive) parallel algorithm with batches of size $P$ for $K$ episodes which stops at a possibly random stopping time $\tau$ and outputs a policy $\hat{\pi}$ which is a guess at an $\epsilon$-optimal policy. 
Then, there is a universal constant $c > 0$ such that unless $KP \geq c (dH/\epsilon)^2$, there exists $\theta\in\Theta$ for which $\Pr_{\theta}[\{\tau > K \textrm{ or }\hat{\pi} \textrm{ is not $\epsilon$-optimal}\}] \geq 0.1$; i.e., with constant probability either $\hat{\pi}$ is not $\epsilon$-optimal or more than $K$ batches are collected. 
\end{theorem}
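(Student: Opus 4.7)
The plan is to establish the theorem by contradiction, reducing the task of returning an $\epsilon$-optimal policy to a parameter-estimation problem and then invoking Lemma~\ref{lem:hard_linear_bandit} with the prescribed $\mu=\sqrt{d/(700KP)}$. I will first exploit the geometry of the hard instance $\Theta=\{-\mu,\mu\}^d$: for every $\theta\in\Theta$, the best arm is $\phi^\star(\theta)_i=\operatorname{sign}(\theta_i)/\sqrt{d}$ and $\langle\theta,\phi^\star(\theta)\rangle=\mu\sqrt{d}$, so $\epsilon$-optimality of $\hat\pi$ forces its mean $\phi^{\hat\pi}=\mathbb{E}_{\phi\sim\hat\pi}[\phi]$ to satisfy $\langle\theta,\phi^{\hat\pi}\rangle\ge\mu\sqrt{d}-\epsilon$. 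Without loss of generality, I will assume the algorithm runs for exactly $K$ episodes by padding with arbitrary queries after any early stop, since Lemma~\ref{lem:hard_linear_bandit} already applies to any adaptive policy using $KP$ observations and adding queries after the output is determined cannot reduce the estimation error.

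Next, I will define the data-dependent estimator $\hat\theta=(\mu\sqrt{d}-\epsilon)\phi^{\hat\pi}$ on the success event $\{\tau\le K,\ \hat\pi\text{ is }\epsilon\text{-optimal}\}$, and $\hat\theta=0$ on its complement. On the success event, expanding the squared norm and using $\|\phi^{\hat\pi}\|\le 1$ together with $\langle\phi^{\hat\pi},\theta\rangle\ge\mu\sqrt{d}-\epsilon$ gives
\begin{equation*}
\|\hat\theta-\theta\|^2\ \le\ (\mu\sqrt{d}-\epsilon)^2-2(\mu\sqrt{d}-\epsilon)\langle\phi^{\hat\pi},\theta\rangle+d\mu^2\ \le\ d\mu^2-(\mu\sqrt{d}-\epsilon)^2\ \le\ 2\mu\sqrt{d}\,\epsilon,
\end{equation*}
while on the failure event $\|\hat\theta-\theta\|^2=\|\theta\|^2=d\mu^2$ trivially. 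Thus, if for every $\theta\in\Theta$ the failure probability were strictly below $0.1$, splitting the expectation over the two events would give $\mathbb{E}_\theta[\|\hat\theta-\theta\|^2]\le 2\mu\sqrt{d}\,\epsilon+0.1\,d\mu^2$ uniformly in $\theta$.

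Finally, the calibration $\mu^2=d/(700KP)$ is tailored so that $20KP\mu^2/d=1/35$, and Lemma~\ref{lem:hard_linear_bandit} then yields $\max_\theta\mathbb{E}_\theta[\|\hat\theta-\theta\|^2]\ge(d\mu^2/2)(1-1/\sqrt{35})\ge 0.4\,d\mu^2$. Combining with the upper bound above forces $0.3\,d\mu^2\le 2\mu\sqrt{d}\,\epsilon$, i.e., $\mu\sqrt{d}\le C_0\epsilon$ for an absolute constant $C_0>0$; substituting $\mu=\sqrt{d/(700KP)}$ produces $KP\ge c\,d^2/\epsilon^2$ for an absolute constant $c>0$, contradicting $KP<c(d/\epsilon)^2$ and finishing the argument. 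The main technical delicacy is the reduction step: it converts the linear suboptimality gap $\langle\theta,\phi^\star(\theta)\rangle-\langle\theta,\phi^{\hat\pi}\rangle\le\epsilon$ into the quadratic estimation bound $\|\hat\theta-\theta\|^2\le 2\mu\sqrt{d}\,\epsilon$, which critically leverages $\|\theta\|=\mu\sqrt{d}$ and $\|\phi^{\hat\pi}\|\le 1$; the specific constant $700$ in $\mu$ is precisely what ensures the coefficient $(1-1/\sqrt{35})/2\approx 0.415$ from the lemma strictly exceeds the $0.1$ contribution from the failure event by a margin sufficient to close the inequality.
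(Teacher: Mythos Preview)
Your argument is correct and follows essentially the same route as the paper: both reduce $\epsilon$-optimality of $\hat\pi$ to a squared-error bound on an estimator built from $\phi^{\hat\pi}=\mathbb{E}_{\phi\sim\hat\pi}[\phi]$, then invoke Lemma~\ref{lem:hard_linear_bandit} with the calibrated $\mu$ to force $KP=\Omega(d^2/\epsilon^2)$. The only difference is cosmetic: the paper first proves $\|\phi^{\hat\pi}-\phi^\star(\theta)\|^2\le 2\epsilon/(\sqrt d\,\mu)$ and then takes $\hat\theta$ to be any point of $\Theta$ within distance $\sqrt{2\sqrt d\,\mu\epsilon}$ of $\sqrt d\,\mu\,\phi^{\hat\pi}$, obtaining $\|\hat\theta-\theta\|^2\le 8\sqrt d\,\mu\epsilon$, whereas you take the simpler scaled estimator $\hat\theta=(\mu\sqrt d-\epsilon)\phi^{\hat\pi}$ and get the sharper $\|\hat\theta-\theta\|^2\le 2\sqrt d\,\mu\epsilon$ directly; both feed into the same endgame. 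One small point worth making explicit in your write-up: the step $-2(\mu\sqrt d-\epsilon)\langle\phi^{\hat\pi},\theta\rangle\le -2(\mu\sqrt d-\epsilon)^2$ uses $\mu\sqrt d>\epsilon$, but when $\mu\sqrt d\le\epsilon$ one already has $KP\ge d^2/(700\epsilon^2)$ and there is nothing to prove. (As in the paper's own derivation, the correct conclusion is $KP\ge c(d/\epsilon)^2$; the $H$ in the displayed statement is a typo carried over from the MDP theorem.)
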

\begin{proof}
Since $\phi^{\star}(\theta)=\sup_{\phi\in\Phi}\langle\theta,\phi\rangle$ for any $\theta \in \Theta$, then we easily see  that
\[
    \phi^{\star}(\theta) = \frac{\theta}{\norm{\theta}}= \frac{\theta}{\sqrt{d}\mu},\quad (\phi^{\star}(\theta))^T\theta = \sqrt{d}\mu.
\] 
Now, consider a fixed $\theta\in\Theta$. Let $\hat{\pi}$ be an $\epsilon$-optimal query policy for $\theta$. Then,  
\begin{align*}
    1 &= \EE_{\phi \sim \hat{\pi}}[\|\phi\|^2] \overset{\textrm{(a)}}{\geq} \|\phi^{\hat{\pi}}\|^2\\
    &= \|\phi^*(\theta)\|^2 + 2\rbr{\phi^{\hat{\pi}} - \phi^{\star}(\theta)}^\top(\phi^*(\theta)) + \nbr{\phi^{\hat{\pi}} - \phi^{\star}(\theta)}^2\\
    &\overset{\textrm{(b)}}{=}1 + \frac{2}{\sqrt{d}\mu}\rbr{\phi^{\hat{\pi}} - \phi^{\star}(\theta)}^\top\theta + \nbr{\phi^{\hat{\pi}} - \phi^{\star}(\theta)}^2
\end{align*} where (a) comes from Jensen's inequality and (b) from the fact that $\phi^{\star}(\theta) = \frac{\theta}{\sqrt{d}\mu}$. 
Now, from the definition of $\epsilon$-optimality, we have $\rbr{ \phi^{\star}(\theta) - \phi^{\hat{\pi}} }^\top\theta \leq \epsilon$, and thus we obtain 
\begin{equation}
    \nbr{\phi^{\hat{\pi}} - \phi^{\star}(\theta)}^2 \leq \frac{2\epsilon}{\sqrt{d}\mu}.
    \label{eq:up-bound-phi}
\end{equation}

Now, letting $\hat{\pi}$ be any query policy, we show how $\phi^{\hat{\pi}}$ may be used to construct an estimator of $\theta$. With a slight abuse of notation, we define the set $\Theta^{\hat{\pi}} \subseteq \Theta$, where
\[
    \Theta^{\hat{\pi}} = \cbr{\theta' \in \Theta: \|\theta' - \sqrt{d}\mu\phi^{\hat{\pi}}\|^2 \leq 2\sqrt{d}\mu \epsilon}.
\] As $\phi^{\star}(\theta) = \frac{\theta}{\sqrt{d}\mu}$ and $\nbr{\phi^{\hat{\pi}} - \phi^{\star}(\theta)}^2 \leq \frac{2\epsilon}{\sqrt{d}\mu}$ if $\bar{\pi}$ is $\epsilon$-suboptimal, we know that
$    \nbr{\theta - \sqrt{d}\mu \phi^{\bar{\pi}}}^2 \leq 2\sqrt{d}\mu\epsilon,
$
thereby ensuring the set $\Theta^{\hat{\pi}}$ is non-empty as long as ${\hat{\pi}}$ is $\epsilon$-suboptimal.

Then, we define $\hat{\theta}_{\hat{\pi}}$ to be an estimator of $\theta$ as follows
\[
    \hat{\theta}_{\hat{\pi}} = \begin{cases}
        \textrm{any }\theta' \in \Theta^{\hat{\pi}} &\textrm{ if }
        \textrm{query policy $\hat{\pi}$ is $\epsilon$-optimal}
        \\
        \textrm{any }\theta' \in \Theta &\textrm{ otherwise}.
    \end{cases}
\]
Therefore, if $\hat{\pi}$ is $\epsilon$-optimal for $\theta$, it follows that
\begin{equation}
\label{eq:phi-epsopt}
\begin{aligned}
    \|\hat{\theta}_{\hat{\pi}} - \theta\|^2 &\leq 2\|\hat{\theta}_{\hat{\pi}} - \sqrt{d}\mu\phi^{\hat{\pi}}\|^2 + 2 \|\theta - \sqrt{d}\mu\phi^{\hat{\pi}}\|^2\\
    &\overset{(a)}{\leq} 4\sqrt{d}\mu\epsilon + 2d \mu^2 \nbr{\phi^{\hat{\pi}} - \phi^{\star}(\theta)}^2 \overset{(b)}{\leq} 8\sqrt{d}\mu\epsilon.
\end{aligned}
\end{equation}
where (a) follows from $\hat{\theta}_{\hat{\pi}}\in\Theta^{\hat{\pi}}$ and (b) from~\eqref{eq:up-bound-phi}.
In other words, if we have an $\epsilon$-optimal query policy, we can construct an estimator $\hat{\theta}$ such that $\|\hat{\theta} - \theta\|^2 \leq 8\sqrt{d}\mu\epsilon$.

Now, taking again the arbitrary query policy $\hat{\pi}$, we have
\begin{equation}
\label{eq:eq-0E}
\begin{aligned}
    \EE_{\theta}[\|\hat{\theta}_{\hat{\pi}} - \theta\|^2] 
    &= \EE_\theta[\|\hat{\theta}_{\hat{\pi}} - \theta\|^2\I[\hat{\pi}\textrm{ is $\epsilon$-optimal for $\theta$}]+\|\hat{\theta}_{\hat{\pi}} - \theta\|^2\I[\hat{\pi}\textrm{ is not $\epsilon$-optimal for $\theta$}]]\\
    &\overset{(a)}{\leq} 8\sqrt{d}\mu\epsilon+4d\mu^2\;\EE_\theta[\I[\hat{\pi}\textrm{ is not $\epsilon$-optimal for $\theta$}]]\\
    &\overset{(b)}{=} \frac{8d\epsilon}{\sqrt{700KP}} + \frac{4d^2}{700KP} {\Pr}_{\theta}[\{\hat{\pi}\textrm{ is not  $\epsilon$-optimal for $\theta$}\}],
\end{aligned}
\end{equation}
where (a) follows from~\eqref{eq:phi-epsopt} in the first term and $\norm{\theta_1-\theta_2}\leq2\sqrt{d}\mu$ for any $\theta_1,\theta_2\in\Phi$ in the second term; and where (b) follows from plugging in $\mu = \sqrt{d/700KP}$.

Now, from Lemma~\ref{lem:hard_linear_bandit}, we have that our algorithm, since it runs until the stopping time $\tau$, $\inf_{\hat{\theta}, \pi}\max_{\theta \in \Theta}\EE_{\theta}[\|\hat{\theta} - \theta\|_2^2]
\geq\frac{d\mu^2}{2}\rbr{1 - \sqrt{\frac{20\tau P\mu^2}{d}}}$. Now, if we assume that $\tau\leq K$, i.e., that we collect no more than $K$ batches of $P$ arms, we obtain
\begin{equation}
\label{eq:eq-1E}
\begin{aligned}
\max_{\theta \in \Theta}\EE_{\theta}[\|\hat{\theta}_{\hat{\pi}} - \theta\|_2^2] 
\geq\inf_{\hat{\theta}, \pi}\max_{\theta \in \Theta}\EE_{\theta}[\|\hat{\theta}_{\hat{\pi}} - \theta\|_2^2]
\geq\frac{d\mu^2}{2}\rbr{1 - \sqrt{\frac{20KP\mu^2}{d}}}   \overset{\textrm{(a)}}{\geq} 0.00059d^2/KP.
\end{aligned}
\end{equation}
where (a) follows from using $KP \geq d^2$ and $\mu = \sqrt{d/700KP}$. 

Then, assuming that $\theta=\theta^\star\in \arg\max_{\theta \in \Theta}\EE_{\theta}[\|\hat{\theta}_{\hat{\pi}} - \theta\|_2^2]$, 
\eqref{eq:eq-0E} and~\eqref{eq:eq-1E} would not be contradicted if the following holds,  
\[
    0.000059\frac{d^2}{KP} \leq \frac{8d\epsilon}{\sqrt{700KP}} + \frac{4d^2}{700KP}{\Pr}_{\theta}[\{\hat{\pi}\textrm{ is not  $\epsilon$-optimal for $\theta^\star$}\}].
\] 
Manipulating this inequality shows
that if 
$$
\left(\frac{0.00325}{2\sqrt{700}}\right)^2\frac{d^2}{\epsilon^2}\geq KP
$$
then 
$$
{\Pr}_{\theta}[\{\hat{\pi}\textrm{ is not  $\epsilon$-optimal for $\theta^\star$}\}]\geq 0.1.
$$
In other words, unless $KP= \Omega(d^2/\epsilon^2)$, we have  $\Pr[\{\hat{\pi}\textrm{ is not  $\epsilon$-optimal for $\theta^\star$}\}] \geq 0.1$.
\end{proof}

We now must relate the best arm identification problem in the  linear bandit setting to linear MDPs. 
For this, we can relate any instance of 
the linear bandit problem (defined by an instance of the parameter $\theta\in\Theta$) 
to an instance of a linear MDP. 
The idea is that the task of identifying  the optimal policy in the constructed linear MDP reduces to identifying the best query strategy in the linear bandit setting. 
We refer readers interested in the exact construction of the linear MDP to the work~\citep{AW-YC-MS-SSD-KJ:22}[Lemma D.3] and summarize its results in the following lemma.

\begin{lemma}[Lemma D.3, D.4 from \citep{AW-YC-MS-SSD-KJ:22}, restated]
\label{lemma:reduction-lin-mdp-to-lin-bandit}

Let $\epsilon>0$, $P>0$, $d>1$, and $KP\geq d^2$. Also, let $\theta \in \Theta = \{-\mu, \mu\}^d$ 
with $\mu=\sqrt{\frac{d}{700KP}}$
. There exists a $(d + 1)$-dimensional linear MDP with horizon $H$ such that, starting at some particular initial state, any $\epsilon$-suboptimal policy for the MDP can be converted to an $\epsilon/H$-suboptimal query policy in the single linear bandit setting. 
\end{lemma}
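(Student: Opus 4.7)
The plan is to construct a $(d+1)$-dimensional linear MDP in which the only non-trivial decision occurs at the initial state, and that decision directly embeds a linear bandit arm. Specifically, take $\S = \{s_0, s_+, s_-\}$ and $\A = \Phi$, so that MDP actions coincide with bandit queries. From $s_0$ at step $h=1$, action $\phi \in \Phi$ triggers the Bernoulli transition $\P_1(s_+ \mid s_0, \phi) = 1/2 + \langle\theta, \phi\rangle$ and $\P_1(s_- \mid s_0, \phi) = 1/2 - \langle \theta, \phi\rangle$. For $h \geq 2$, the states $s_+$ and $s_-$ are absorbing with deterministic rewards $r_h(s_+,\cdot) = 1$ and $r_h(s_-,\cdot) = 0$, and we set $r_1 \equiv 0$. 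Since $|\langle\theta,\phi\rangle| \leq \sqrt{d}\,\mu \leq 1/\sqrt{700} < 1/2$, the transition probabilities are well-defined.

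To verify linearity in $\R^{d+1}$, I would take $\varphi(s_0,\phi) = \tfrac{1}{\sqrt{2}}(\phi^\top, 1)^\top$ and $\mu_1(s_+) = \sqrt{2}(\theta^\top, 1/2)^\top$, $\mu_1(s_-) = \sqrt{2}(-\theta^\top, 1/2)^\top$; a short calculation recovers the required Bernoulli transition probabilities as inner products. For the absorbing states, one assigns distinct fixed feature vectors in $\R^{d+1}$ (for example, two orthogonal directions in the last coordinates) together with measures $\mu_h$ for $h \geq 2$ that keep $s_+$ and $s_-$ absorbing, and parameter vectors $\theta_h \in \R^{d+1}$ that realize the $\{0,1\}$ rewards as inner products. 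The normalization $\mu \leq 1/(20\sqrt{d})$ guarantees $\|\varphi(\cdot,\cdot)\|\leq 1$, $\|\mu_h(\S)\|\leq\sqrt{d+1}$, and $\|\theta_h\|\leq\sqrt{d+1}$, mirroring the construction in Lemma~D.3 of~\citep{AW-YC-MS-SSD-KJ:22}.

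The reduction is then essentially mechanical. Given any MDP policy $\pi$, let $\pi_{\mathrm{band}} \in \Delta(\Phi)$ be the distribution $\pi_1(s_0)$ over initial actions. Because the dynamics after step $1$ are determined entirely by whether the trajectory lands in $s_+$ or $s_-$, a straightforward backward computation yields
\begin{equation*}
V_1^{\pi}(s_0;r) = (H-1)\bigl(\tfrac{1}{2} + \E_{\phi \sim \pi_{\mathrm{band}}}[\langle\theta,\phi\rangle]\bigr),
\end{equation*}
so the MDP optimum is attained by concentrating $\pi_{\mathrm{band}}$ on the best arm $\phi^{\star}(\theta) = \theta/(\sqrt{d}\mu)$. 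Consequently, if $\pi$ is $\epsilon$-suboptimal for the MDP then
\begin{equation*}
\langle\theta,\phi^{\star}(\theta)\rangle - \E_{\phi\sim\pi_{\mathrm{band}}}[\langle\theta,\phi\rangle] \leq \frac{\epsilon}{H-1},
\end{equation*}
so $\pi_{\mathrm{band}}$ is $\epsilon/(H-1)$-suboptimal in the single linear bandit setting of Definition~\ref{def:def-bandit-parallel}; the $\epsilon/H$ bound claimed in the lemma then follows after absorbing the harmless off-by-one factor (or, equivalently, by constructing the MDP with $H+1$ steps and relabeling).

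The main obstacle is the careful bookkeeping required to realize every linear MDP norm constraint simultaneously in exactly $d+1$ dimensions while keeping the features $\varphi(s_+,\cdot)$ and $\varphi(s_-,\cdot)$ sufficiently distinct to encode both the absorbing transitions and the $\{0,1\}$ rewards linearly---naive choices such as $\varphi(s_-,\cdot)=0$ break the absorbing condition $\langle\varphi,\mu_h(s_-)\rangle=1$. The tight scaling $\mu = \sqrt{d/(700KP)}$ inherited from Lemma~\ref{lem:hard_linear_bandit} supplies exactly the slack needed, since $d\mu^2 \leq 1/400$ and $\Phi$ is the Euclidean unit sphere; once this bookkeeping is complete, the horizon-amplification structure delivers the stated $\epsilon/H$ reduction with no further work.
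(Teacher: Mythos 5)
The paper does not actually prove this lemma---its ``proof'' is a one-line deferral to Appendix~D.1 of \citep{AW-YC-MS-SSD-KJ:22}---so the right comparison is against the cited construction, and your proposal is essentially that construction: embed the bandit in the first transition out of $s_0$, make $s_\pm$ absorbing with rewards $1$ and $0$, and read off $V_1^{\pi}(s_0)=(H-1)\bigl(\tfrac12+\langle\theta,\phi^{\pi_{\mathrm{band}}}\rangle\bigr)$ so that $\epsilon$-suboptimality in the MDP forces an $\epsilon/(H-1)$-suboptimal query policy. Your handling of the $H-1$ versus $H$ discrepancy is acceptable (it is absorbed into the universal constant in Theorem~\ref{thm:lower_bound}, and the paper assumes $H\ge 2$), and the step-$1$ featurization $\varphi(s_0,\phi)=\tfrac{1}{\sqrt2}(\phi^\top,1)^\top$ with $\mu_1(s_\pm)=\sqrt2(\pm\theta^\top,\tfrac12)^\top$ checks out, including the norm bounds. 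The one genuine gap is the part you yourself flag: you assert but do not verify that the absorbing dynamics and the $\{0,1\}$ rewards are simultaneously realizable in exactly $d+1$ dimensions with a \emph{single step-independent} feature map. This is not pure bookkeeping---the definition requires $\P_h(\cdot\mid x,a)=\langle\phi(x,a),\mu_h(\cdot)\rangle$ to be a valid probability measure for \emph{all} $(x,a)$ at every $h$, including the never-revisited pair $(s_0,\phi)$ at $h\ge 2$, and the ``two orthogonal directions'' assignment you sketch gives $\langle\varphi(s_0,\phi),\mu_h(\cdot)\rangle$ values like $\tfrac{1}{\sqrt2}\phi_1$, which can be negative or fail to sum to one. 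Resolving this is exactly the content of Lemma~D.3 in the cited work, so your write-up should be regarded as a correct outline that, like the paper, ultimately leans on that reference for the realizability details.
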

\begin{proof}
We refer readers interested in a detailed proof to Appendix D.1 of~\citep{AW-YC-MS-SSD-KJ:22}.
\end{proof}

Note that Lemma~\ref{lemma:reduction-lin-mdp-to-lin-bandit} constructs the MDP for the \emph{single} linear bandit setting; however, in the same lemma's statement we have that the parameter $\mu$ has $P$ in the denominator (instead of $1$ as in the original work by~\cite{AW-YC-MS-SSD-KJ:22}).  
This is not a problem since having $P\geq1$ does not change the validity of the construction of the MDP stated in~\citep{AW-YC-MS-SSD-KJ:22} -- indeed, in the construction by~\cite{AW-YC-MS-SSD-KJ:22}, the denominator of $\mu$ serves only as a scaling to the unknown signed measure $\mu_1$ (which defines a linear MDP; see Section~\ref{sec:preliminaries}) so that it satisfies the condition $\norm{\mu_1(\S)}\leq\sqrt{d}$.

We now describe a  parallel algorithm with $P$ agents for a given policy at iteration $k\in[K]$. At the beginning of the iteration, we let each agent $p\in[P]$ access the MDP in parallel from the specific initial condition stated by~\cite{AW-YC-MS-SSD-KJ:22} and take an action according to the given policy (the policy is free to define how each agent takes its own action but must make sure each agent takes actions independently)  -- since all start in the same initial state, they will be playing the underlying parallel linear bandit and thus obtain the rewards $\{y^{k,p}\}_{p\in[P]}$ (because the construction by~\citep{AW-YC-MS-SSD-KJ:22} only allows access to the underlying (single) linear bandit only for the transition out of the initial state). Each agent continues playing their MDP independently according to the construction by~\cite{AW-YC-MS-SSD-KJ:22} for all the rest of the episodes (which requires no further access to the underlying linear bandit). At the end of the episode, the parallel algorithm can decide whether to make a change to the policy based on the results obtained by all the parallel agents and use it for the next iteration. After iterating all the $K$ episodes, the algorithm outputs some policy, whose decision for the transition out of the initial state for all agents would be equivalent to $\phi^{K+1}$, i.e., the output of a query model for the parallel linear bandit after $k$ steps. 

With Lemma~\ref{lemma:reduction-lin-mdp-to-lin-bandit} in mind, we know that identifying $\epsilon$-suboptimal policies in the MDP is at least as hard as identifying $\epsilon/H$-suboptimal query strategies in the single linear bandit. Previously, we showed that identifying a $\epsilon$-suboptimal query strategy in the parallel linear bandit setting with $P$ processors and $K$ steps requires $KP = \Omega(d^2/\epsilon^2)$. Replacing the $\epsilon$ with $\epsilon/H$ gives us the lower bound for linear MDP.

\end{document}